\theoremstyle{plain}
\newtheorem{theorem}{Theorem}[section]
\newtheorem{proposition}[theorem]{Proposition}
\newtheorem{lemma}[theorem]{Lemma}
\theoremstyle{definition}
\newtheorem{assu}[theorem]{Assumption}
\theoremstyle{remark}
\icmltitlerunning{Fast and Sample Efficient Multi-Task Representation Learning in Stochastic Contextual Bandits}
\DeclareMathOperator*{\argmax}{arg\,max}
\DeclareMathOperator*{\argmin}{arg\,min}
\DeclareMathAlphabet{\pazocal}{OMS}{zplm}{m}{n}
\newcommand\scalemath[2]{\scalebox{#1}{\mbox{\ensuremath{\displaystyle #2}}}}
\newcommand{\E}{\mathbb{E}}
\newcommand{\SE}{\mathrm{SD}}  % {\mathrm{dist_{sub}}}
\newcommand{\pS}{{\pazocal{S}}}
\newcommand{\pA}{{\pazocal{A}}}
\newcommand{\pC}{{\pazocal{C}}}
\newcommand{\pX}{\pazocal{X}}
\newcommand{\pR}{\pazocal{R}}
\newcommand{\pG}{{\pazocal{G}}}
\newcommand{\N}{\pazocal{N}}
\newcommand{\bR}{\mathbb{R}}
\newcommand{\bE}{\mathbb{E}}
\renewcommand{\l}{\ell}
\newcommand{\wB}{\widehat{B}}
\newcommand{\wW}{\widehat{W}}
\newcommand{\ww}{\widehat{w}}
\newcommand{\Thetahat}{\widehat{\Theta}}
\newcommand{\Thetas}{\Theta^\star}
\newcommand{\Ot}{\tilde{O}}
\newcommand{\GradB}{\mathrm{GradB}}
\newcommand{\svdeq}{\overset{\mathrm{SVD}}=} %{\stackrel{EVD}{=}}
\newcommand{\sigmin}{{\sigma_{\min}^\star}}
\newcommand{\sigmax}{{\sigma_{\max}^\star}}
\newcommand{\Var}{\mathrm{Var}}
\newcommand{\thetats}{\theta_t^\star}
\newcommand{\thetahatmt}{\widehat{\theta}_{m-1, t}}
\newcommand{\thetahatt}{\widehat{\theta}_t}
\newcommand{\indic}{\mathds{1}}
\newcommand{\delt}{\delta_\l}
\newcommand{\delto}{\delta_{\l + 1}}
\newcommand{\nt}{{n, t}}
\newcommand{\xnt}{x_\nt}
\newcommand{\xts}{x_n^\star}
\newcommand{\pxc}{\phi(x_n, c_n)}
\newcommand{\pxsc}{\phi(x_n^\star, c_n)}
\newcommand{\pxtc}{\phi(x_\nt, c_\nt)}
\newcommand{\pxtsc}{\phi(x_\nt^\star, c_\nt)}
\newcommand{\norm}[1]{\left\lVert#1\right\rVert}
\newcommand{\cblue}{\color{blue}}
\newcommand{\sm}{{\scalebox{.6}{(m)}}}
\newcommand{\szero}{{\scalebox{.6}{(0)}}}
\newcommand{\smo}{{\scalebox{.6}{(m-1)}}}
\newcommand{\ymt}[2]{Y_{#1}^{{\scalebox{.6}{(#2)}}}}
\newcommand{\phimt}[2]{\Phi_{#1}^{{\scalebox{.6}{(#2)}}}}
\newcommand{\etamt}[2]{\eta_{#1}^{{\scalebox{.6}{(#2)}}}}
\newcommand{\Bm}[1]{\wB^{\scalebox{.6}{(#1)}}}
\newcommand{\Wm}[1]{\wW^{\scalebox{.6}{(#1)}}}
\newcommand{\iidsim}{\stackrel{\mathrm{iid}}{\thicksim }}
\newcommand{\n}{{\cal{N}}}
\begin{document}

\twocolumn[
\icmltitle{Fast and Sample Efficient Multi-Task Representation Learning\\ in Stochastic Contextual Bandits} % need to be updated

%\icmlsetsymbol{equal}{*}

\begin{icmlauthorlist}
\icmlauthor{Jiabin Lin}{yyy}
\icmlauthor{Shana Moothedath}{yyy}
\icmlauthor{Namrata Vaswani}{yyy}
%\icmlauthor{}{sch}
%\icmlauthor{}{sch}
\end{icmlauthorlist}

\icmlaffiliation{yyy}{Department of Electrical and Computer Engineering, Iowa State University, Ames IA 50011-1250, USA}

\icmlcorrespondingauthor{Jiabin Lin}{jiabin@iastate.edu}
%\icmlcorrespondingauthor{Shana Moothedath}{mshana@iastate.edu}
%\icmlcorrespondingauthor{Namrata Vaswani}{namrata@iastate.edu}

\icmlkeywords{Multi-task learning, Linear contextual bandits, Low dimensional learning, Representation learning}

\vskip 0.3in
]

\printAffiliationsAndNotice

\begin{abstract}
We study how representation learning can improve the learning efficiency of contextual bandit problems.
We study the setting where we play $T$ contextual linear bandits with dimension $d$ simultaneously, and these $T$ bandit tasks collectively share a common linear representation with a dimensionality of $r\ll d$.
We present a new algorithm based on alternating projected gradient descent (GD) and minimization estimator to recover a low-rank
feature matrix. 
%We obtain constructive provable guarantees for our estimator that provide a lower bound on the required sample complexity and an upper bound on the iteration complexity (total number of iterations needed to achieve a certain error level) of our proposed algorithm. 
%We show that our algorithm achieves $\epsilon$-accurate recovery of the feature matrix with order  $(d+T)r^3 \log(1/\epsilon)$ total samples and order $NTdr \log(1/\epsilon)$ time for any $\epsilon >0$ that is lower bounded by the noise to signal ratio (NSR).  
Using the proposed estimator, we present a multi-task learning algorithm for linear contextual bandits and prove the regret bound of our algorithm. 
We presented experiments and compared the performance of our algorithm against benchmark algorithms.
\end{abstract}

\vspace{-2 mm}
\section{Introduction}
Contextual Bandits (CB) represent an online learning problem wherein sequential decisions are made based on observed contexts, aiming to optimize rewards in a dynamic environment with immediate feedback.
In CBs, the environment presents a context in each round, and in response, the agent selects an action that yields a reward. The agent's objective is to choose actions to maximize cumulative reward over  $N$ rounds. This introduces the exploration-exploitation dilemma, as the agent must balance exploratory actions to estimate the environment's reward function and exploitative actions that maximize the overall return \citep{bubeck2012regret, lattimore2020bandit}. CB algorithms find applications in various fields, including  robotics \citep{srivastava2014surveillance}, clinical trials \citep{aziz2021multi}, communications \citep{anandkumar2011distributed}, and recommender systems \citep{li2010contextual}. 
 
Multi-task representation learning  is the problem of learning
a common low-dimensional representation among multiple
related tasks \cite{caruana1997multitask}.
Multi-task learning enables models to tackle multiple related tasks simultaneously, leveraging common patterns and improving overall performance \citep{zhang2018overview, wang2016distributed, thekumparampil2021sample}.  By sharing knowledge across tasks, multi-task learning can lead to more efficient and effective models, especially when data is limited or expensive. Multi-task bandit learning has gained interest recently \cite{deshmukh2017multi, fang2015active, cella2023multi, hu2021near, yang2020impact, lin2024distributed}. Many applications of CBs, such as recommending movies or TV shows to users and suggesting personalized treatment plans for patients with various medical conditions, involve related tasks.
These applications can significantly benefit from this approach, as demonstrated in our empirical analysis in Section~\ref{sec:sim}.
This paper investigates the benefit of using representation learning in CBs theoretically and experimentally.

While representation learning has demonstrated remarkable success across various applications \cite{bengio2013representation}, its theoretical understanding still remains underexplored. A prevalent assumption in the literature is the presence of a shared common representation among different tasks.  \cite{maurer2016benefit} introduced a general approach to learning data representation in both multi-task supervised learning and learning-to-learn scenarios. 
\cite{du2020few} delved into few-shot learning through representation learning, making assumptions about a common representation shared between source and target tasks.  
\cite{tripuraneni2021provable} specifically addressed the challenge of multi-task linear regression with low-rank representation, presenting algorithms with robust statistical rates. Related parallel works address the same mathematical problem (referred to as low-rank column-wise compressive sensing) or its generalization (called low-rank phase retrieval) and provide better - sample-efficient and faster - solutions \cite{lrpr_icml, lrpr_best,lrpr_gdmin,collins2021exploiting,lrpr_gdmin_2}.

Motivated by the outcomes of multi-task learning in supervised learning, numerous recent works have explored the advantages of representation learning in the context of sequential decision-making problems, including reinforcement learning (RL) and bandit learning. Our paper studies the multi-task bandit problem similar to the setting in \cite{hu2021near, yang2020impact, cella2023multi}.
We consider $T$ tasks of $d$-dimensional (infinte-arm) linear bandits are concurrently learned for $N$ rounds.
The expected reward for choosing an arm $x$ for a context $c$ and task $t$ is $\phi_t(x,c)^\top \thetats$, where $\thetats$ is an unknown linear parameter and $\phi_t(x,c) \in \mathbb{R}^d$ is the feature vector.
To take advantage of the
multi-task representation learning framework, we assume
that $\thetats$’s lie in an unknown $r$-dimensional subspace of $\mathbb{R}^d$, where $r$ is much smaller compared to $d$ and $T$ \cite{hu2021near, yang2020impact}.
The dependence on the tasks makes it possible to achieve a regret bound better than solving each task independently. A naive adaptation of the optimism in the face of uncertainty principle (OFUL) algorithm in \cite{abbasi2011improved} will lead to an $\Ot(Td\sqrt{N})$ regret for solving the $T$ tasks individually. By leveraging the common representation structure of these tasks, we propose an alternating projected GD and minimization-based estimator to solve the multi-task CB problem.  We provide the convergence guarantee for our estimator and the regret bound of the multi-task learning algorithm and, through extensive simulations, validate the advantage of the proposed approach over the state-of-the-art approaches.
%{\cblue Our estimator achieves $\epsilon$-accurate recovery o fthe feature matrix, where $\epsilon$ is lower bounded by noise-to-signal ratio.} We provide a lower bound on the required sample complexity and an upper bound on the iteration time and communication cost.

%{\bf Organization:} This paper is organized as follows. In Section~\ref{sec:prob}, we present the notations and problem setting. In Section~\ref{sec:rel}, we present the related work. In Section~\ref{sec:alg}, we describe the proposed algorithm and the main result. In Section~\ref{sec:sim}, we provide the numerical experiment results using synthetic data and real data, validating the effectiveness of our approach. Finally, we conclude in Section~\ref{sec:conc}. 
%The detailed analysis and proofs are presented in the supplementary material.
\vspace{-2 mm}
\section{Problem Setting}\label{sec:prob}
{\bf Notations:}
For any positive integer $n$, the set $[n]$ represents $\{1, 2, \cdots, n\}$. For any vector $x$, we use $\norm{x}$ to denote its $\ell_2$ norm. For a matrix $A$, we use $\norm{A}$ to denote the $2$-norm of $A$, $\norm{A}_F$ to denote the Frobenius norm, and $\norm{A}_{\max} = \max_{i,j} |A_{i,j}|$ to denote the max-norm. $\top$ denotes the transpose of a matrix or vector, while $|x|$ represents the element-wise absolute value of a vector $x$. The symbol $I_n$ (or sometimes just $I$) represents the $n \times n$ identity matrix. We use $e_k$ to denote the $k-$th canonical basis vector, i.e., the $k-$th column of $I$. For any matrix $A$, $a_k$ denotes its $k-$th column. \\

\vspace{-2 mm}
\subsection{Problem Formulation}
This section introduces the standard linear bandit problem and extends it to our specific setting: representation learning in linear bandits with a low-rank structure. We denote the action set as $\pX$ and the context set as $\pC$. The environment interacts through a fixed but unknown reward function $y: \pX \times \pC \rightarrow \bR$. In standard linear bandits, at each round $n \in [N]$, the agent observes a context $c_n \in \pC$ and chooses an action $x_n \in \pX$. For every combination of context and action $(x, c)$, there is a corresponding feature vector $\phi(x, c) \in \bR^d$. When the agent chooses an action $x_n$ for a given context $c_n$, it receives a reward $y_n \in \bR$,
defined as
$$
y_n := \langle \pxc, \theta^\star \rangle + \eta_n, 
$$
where $\theta^\star \in \bR^d$ represents the unknown but fixed reward parameter, and $\eta_n$ denotes a zero-mean $\sigma$-Gaussian additive noise. The term $\langle \pxc, \theta^\star \rangle$ represents the expected reward for selecting action $x_n$ in context $c_n$ at round $n$, i.e., $r_n = \bE[y_n] = \langle \pxc, \theta^\star \rangle$. The goal of the agent is to choose the best action $x_n^\star$ at each round $n \in [N]$ to maximize the cumulative reward $\sum_{n=1}^N y_n$, or in other words, to minimize the cumulative regret:
$$
\pR_N = \sum_{n=1}^N \langle \pxsc, \theta^\star \rangle - \sum_{n=1}^N \langle \pxc, \theta^\star \rangle, 
$$
where $\xts$ represents the best action at round $n$ for context $c_n$, and $x_n$ denotes the action chosen by the agent. 

This paper explores representation learning in linear bandits with a low-rank structure. We consider a scenario where $t \in [T]$ tasks deal with related sequential decision-making problems. For every round $n \in [N]$, each task $t$ observes a context $c_\nt \in \pC_\nt$ and chooses an action $\xnt \in \pA_\nt$. After an action is chosen for each task $t$ at round $n$, the environment provides a reward $y_\nt$, where $y_\nt:= \langle \pxtc, \thetats \rangle + \eta_\nt$ and $\thetats \in \bR^d$ is the unknown reward parameter for task $t$. The goal is to choose the best action $\xnt$ for each task $t \in [T]$ and each round $n \in [N]$ to maximize the cumulative reward $\sum_{n=1}^N \sum_{t=1}^T y_\nt$, which is equivalent to minimizing the cumulative (pseudo) regret 
$$
\scalemath{0.9}{\pR_{N,T} \hspace{-1.0 mm}=\hspace{-1.0 mm} \sum_{n=1}^N \sum_{t=1}^T \langle \pxtsc, \thetats \rangle - \sum_{n=1}^N \sum_{t=1}^T \langle \pxtc, \thetats \rangle, }
$$
where $x^\star_{n, t}$ denotes the best action for task $t$ at round $n$ given context $c_\nt$. We assume that $\Theta^\star = [\theta_1^\star \cdots \theta_T^\star]$ is a rank-$r$ matrix where $r \ll \min\{d, T\}$. This low-rank structure improves collaborative learning among the agents, which enhances the overall learning efficiency.

\subsection{Preliminaries}
Let $\Theta^\star \svdeq B^\star {\Sigma V^{\star}} := B^\star  W^\star$
%\begin{equation}\label{eq:Xstar}
%\Xstar \svdeq \Ustar \overbrace{\bSigma \underbrace{\Vstar{}^\top}_{\Bstar}}^{\tB}  %_{\tB}
%\Theta^\star \svdeq B^\star {\Sigma V^{\star}} := B^\star  W^\star
%\end{equation}
denote its reduced (rank $r$) SVD, i.e., $B^\star$ and ${V^\star}^\top$ are matrices with orthonormal columns {\em (basis matrices)}, $B^\star$ is  $d \times r$, $V^\star$ is $r \times T$, and $\Sigma$ is an $r \times r$ diagonal matrix with non-negative entries (singular values). We let  $W^\star:= \Sigma V^{\star}$.
We use $\sigmax$ and $\sigmin$ to denote the maximum and minimum singular values of $\Sigma$ and we define its condition number  as
 $\kappa:= \sigmax/\sigmin.$
 We now detail the assumptions we use in our analysis.
%\begin{assu}\label{assume:noise}
%Each element $\eta_t$ of the noise vector is assumed to be independently and identically distributed (i.i.d) according to a Gaussian distribution with a mean of zero and variance of $\sigma_\eta^2$, where $\sigma_\eta^2 \leqslant c \frac{\| \thetats \|^2}{k^3 \kappa^6}$, i.e., $\eta_{i, t} \iidsim \n(0, \sigma_\eta^2)$. {\cblue check}
%\end{assu}
%\begin{assu}\label{assume:norm}
%For simplicity, we assume  $\pxtc^{\top} \theta_t^{\star} \in [0,1]$, $\norm{\pxtc}_2 \leqslant 1$, and $\norm{\thetats}_2 \leqslant 1$, for all $t \in[T]$ and all $\xnt \in \pA_\nt$. 
%Furthermore, we assume that $\pxtc$ follows an i.i.d. standard Gaussian distribution. 
%\end{assu}
\begin{assu}(Gaussian design and noise)\label{assume:iid}
We assume $\pxtc$ follows an i.i.d. standard Gaussian distribution. Moreover, the additive noise variables $\eta_{n, t}$ follow i.i.d. Gaussian distribution with a zero mean and variance $\sigma_\eta^2$.
%, where $\sigma_\eta^2 \leqslant c \frac{\| \thetats \|^2}{r^3 \kappa^6}$.
%, i.e., $\eta_{i, t} \iidsim \n(0, \sigma_\eta^2)$.
 \end{assu}
Throughout, we work in the setting of random design linear regression, and in this context, Assumption~\ref{assume:iid} is standard \cite{cella2023multi, cella2021multi}. We note that while the assumption on $\pxtc$ holds for the first epoch in our algorithm, i.e., during random exploration, it is restrictive for future epochs. Let the feature vector $\pxtc$ has a mean value of $\mu_{x_\nt, c_\nt}$. The reward $y_\nt$ is given by the equation $y_\nt = (\pxtc - \mu_{x_\nt, c_\nt})^{\top} \thetats + \eta_\nt + \mu_{x_\nt, c_\nt} \thetats$, where the noise term is $\eta_\nt + \mu_{x_\nt, c_\nt} \thetats$. Such a (re)formulation allows us to relax Assumption~\ref{assume:iid} into a scenario where $\pxtc$ follows an independent Gaussian distribution with a mean of $\mu_{x_\nt, c_\nt}$. 
 %Our results do not critically rely on the noise-to-signal ratio (NSR), $\sigma_\eta^2 \leqslant c \frac{\| \thetats \|^2}{r^3 \kappa^6}$, assumption although its use simplifies several technical arguments.
\begin{assu}[Incoherence of right singular vectors] \label{assume:incoherence}
We assume that $\| w_t^\star \|^2 \leqslant \mu^2 \frac{r}{T}  {\sigma_{\max}^\star}^2$ for a constant $\mu \geqslant 1$. 
\end{assu}
Recovering the feature matrix is impossible without any structural assumption.
Notice that $y_t$s are not global functions of $\Theta^\star$: no $y_{n,t}$ is a function of the entire matrix $\Theta^\star$. %They are global for each column, but not across the different columns.
We thus need an assumption that enables correct interpolation across the different columns. The following incoherence (w.r.t. the canonical basis) assumption on the right singular vectors suffices for this purpose. 
 Such an assumption on both left and right singular vectors was first introduced in \cite{matcomp_candes} and used in recent works on representation learning \cite{tripuraneni2021provable}. %While for LRMC the measurements are 

\begin{assu}\label{assume:B}
(Common Feature Extractor). There exists a linear feature extractor denoted as $B^\star \in \bR^{d \times r}$, along with a set of linear coefficients $\{w_t\}_{t=1}^T$ such that the expected reward of the $t$-th task at the $n$-th round is given by $\bE[y_\nt] = \langle \pxtc, B^\star w_t^\star \rangle$, where $\Theta^\star = B^\star W^\star$. 
\end{assu}
%Assumptions~\ref{assume:noise} and~\ref{assume:norm} are standard assumptions widely used in the bandit literature.
%{\cblue Justification for Assumption 3}
Assumption~\ref{assume:B} is our main assumption, which assumes the existence of a common feature extractor for the reward parameter $\Theta^\star$. Because of this we can write $\Theta^\star = B^\star W^\star$, where $W^\star=[w_1^\star, w_2^\star, \ldots, w^\star_T]$. This assumption is used in many earlier works on representation learning, including \cite{yang2020impact,du2020few, hu2021near}.

\subsection{Contributions} 
In this paper, we proposed a novel alternating GD and minimization estimator for representation learning in linear bandits in the presence of a common feature extractor. Our algorithm builds upon the recently introduced technique known as alternating gradient descent and minimization (altGDmin) for low-rank matrix learning \citep{lrpr_gdmin,lrpr_gdmin_2}. 
Our work introduces two key extensions: (i) We adapt the AltGDMin approach to address sequential learning problems, specifically bandit learning, departing from static learning scenarios.
Hence our focus is on optimizing the selection of actions in addition to learning unknown parameters from observed data.
(ii) We account for noisy observed data, a common model in learning models, rather than non-noisy observations.

While there have been many recent works on multi-task learning for linear bandits \cite{hu2021near, yang2020impact, cella2023multi, tripuraneni2021provable}, those works either assume an optimal estimator that can solve the non-convex cost function in Eq.~\eqref{eq:cost} or considers a convex relaxation of the original cost function \cite{du2020few, cella2023multi}. We propose a sample and time-efficient estimator to learn the feature matrix. Our approach is GD-based, which is known to be much faster than convex relaxation methods \cite{cella2023multi, du2020few} and provides a sample-efficient estimation with guarantees. 
We prove that the alternating GD and minimization estimator achieves $\epsilon-$optimal convergence with the number of samples of the order of $(d+T)r^3 \log(1/\epsilon)$ and
order $NTdr \log(1/\epsilon)$ time, provided the noise-to-signal ratio is bounded. 
Using the proposed estimator, we propose a multitask bandit learning algorithm. We provide the regret bound for our algorithm. We validated the advantage of our algorithm through numerical simulations on synthetic and real-world MNIST datasets and illustrated the advantage of our algorithm over existing state-of-the-art benchmarks.

\section{Related Work}\label{sec:rel}

\noindent{\bf Multi-task supervised learning.} Multi-task representation learning is a well-studied problem that dates back to at least \cite{caruana1997multitask, thrun1998learning, baxter2000model}. Empirically, representation learning has shown its great power in various
domains  \cite{bengio2013representation}. The linear setting (multi-task linear regression or multi-task linear representation learning with a low rank model on the regression coefficients) was introduced in \cite{maurer2016benefit,tripuraneni2021provable,du2020few}
%The first theoretical analysis with sample complexity bounds was provided in \cite{baxter2000model}. \cite{maurer2016benefit} and follow-up works considered the setting where all tasks are i.i.d sampled from a certain distribution and analyzed the benefits of representation learning for reducing the sample complexity of the target task. \cite{maurer2016benefit}  showed that the i.i.d. assumption alone is insufficient if we want to take advantage of many samples per task.  Building upon their findings, leveraging additional assumptions on the distribution structure, \cite{du2020few} and \cite{tripuraneni2021provable} introduced algorithms to leverage all source data with improved sample complexity.
% \cite{tripuraneni2021provable}  also gives a computationally efficient algorithm for standard Gaussian inputs and a lower bound for subspace recovery in the low-dimensional linear setting.

The above works required more samples per task than the feature vector length, even while assuming a low-rank model on the regression coefficients' matrix. 
%his goes against all existing low rank matrix recovery literature
In interesting parallel works \cite{lrpr_gdmin,collins2021exploiting}, a fast and communication-efficient GD-based algorithm that was referred to as AltGDmin and FedRep, respectively, was introduced for solving the same mathematical problem that multi-task linear regression or multi-task linear representation learning solves. Follow-up work \cite{lrpr_gdmin_2} improved the guarantees for AltGDmin while also simplifying the proof. AltGDmin and FedRep algorithms are identical except for the initialization step. AltGDmin uses a better initialization and hence also has a better sample complexity by a factor of $r$. A phaseless measurements generalization of this problem, referred to as low-rank phase retrieval, was studied in \cite{lrpr_icml,lrpr_it,lrpr_best,lrpr_gdmin}. These works were motivated by applications in dynamic MRI \citep{lrpr_gdmin_mri_jp} and dynamic Fourier ptychography \citep{TCIgauri}.

  The primary emphasis of all the above works is on the statistical rate for multi-task supervised learning and does not address the exploration problem in online sequential decision-making problems such as bandits and RL. 
%\cite{tripuraneni2021provable} specifically addressed the challenge of multi-task linear regression with low-rank representation, similar to our work, and presented algorithms with robust statistical rates.
 %The problem studied in \cite{tripuraneni2021provable} is related to ours.
 %The fundamental difference is that in \cite{tripuraneni2021provable}, the feature vector of the data is independent of the task, while we consider a case where the data is task-dependent. The entire estimation process of the Method-of-Moments (MoM) estimator in \cite{tripuraneni2021provable} aligns with the initialization of our proposed algorithm, contributing to our improved results which is illustrated via numerical simulations in Section~\ref{sec:sim}. 

\noindent{\bf Multi-task RL learning.} Multi-task learning in RL domains is studied in many works, including \cite{taylor2009transfer, parisotto2015actor, d2024sharing, arora2020provable}. \cite{d2024sharing} demonstrated that representation learning has the potential to enhance the rate of the approximate value iteration algorithm.
\cite{arora2020provable} proved that representation learning can reduce the sample complexity of imitation learning. Both works
require a probabilistic assumption similar to that in \cite{maurer2016benefit} and the statistical rates are of similar forms as those in \cite{maurer2016benefit}.

\noindent{\bf Multi-task bandit learning.} The most closely related works to ours are the recent papers on multitask bandit learning \cite{hu2021near, yang2020impact,cella2023multi}. 
\cite{hu2021near} considered a concurrent learning setting with $T$  linear bandits with dimension $d$ that share a common $r$-dimensional linear representation. They proposed an optimism in the face of uncertainty principle (OFUL) algorithm that leverages the shared representation to achieve a $\Ot(T\sqrt{drN}+d\sqrt{rNT})$ regret bound, where $N$ is the number of rounds per task. The algorithm in \cite{hu2021near} requires solving a least-squares problem; however, the problem is nonconvex due to the rank condition ($r \ll \min\{d, T\}$).
 \cite{yang2020impact} considered the finite and infinite action case and proposed explore-then-commit algorithms. For the finite case, they utilize the estimator from \cite{du2020few}, and in the infinite case, they proposed a MoM-based estimator with $\Ot(Tr\sqrt{N}+d^{1.5}r\sqrt{NT})$ regret bound. However, these works assumed that the representation learning problem can be solved. \cite{du2020few} mentions that it should be possible to solve the original non-convex problem (Eq.~\eqref{eq:cost}) by solving a trace norm-based convex relaxation of it.
 \cite{cella2023multi} proposed a low-rank matrix estimation-based algorithm using trace-norm regularization and achieved $\Ot(T\sqrt{rN}+\sqrt{rNTd})$ regret bound under a restricted strong convexity condition when the rank is unknown. 
 However, there are no known guarantees to ensure that the trace norm-based relaxation solution is indeed also a solution to the original low-rank representation learning problem. The regret analysis in these works assumed solvability and optimality of the nonconvex problem.
 We focus on GD-based solutions since these are known to be much faster than convex relaxation methods \cite{cella2023multi, du2020few} and provide a sample-efficient estimator with guarantees.
 %The key difference between  \cite{cella2023multi} and \cite{hu2021near, yang2020impact} is that it does not require the knowledge of the rank $r$. However, it requires a restricted strong convexity condition.
%Their analyses often assume the solvability of the optimal solution for the regression problem.
 %\cite{yang2020impact} proposed an explore-and-then-exploit algorithm based on a MoM-based estimator with $\Ot(Tr\sqrt{N}+d^{1.5}r\sqrt{NT})$ regret bound.
 %A MoM estimator is also explored in \cite{tripuraneni2021provable} for low-rank matrix estimation for meta learning.
 %\cite{cella2023multi} proposed a low-rank matrix estimation-based algorithm using trace-norm regularization and achieved $\Ot(T\sqrt{rN}+\sqrt{rNTd})$ regret bound. The key difference between  \cite{cella2023multi} and \cite{hu2021near, yang2020impact} is that it does not require the knowledge of the rank $r$. However, it requires a restricted strong convexity condition.
%We focus on GD-based solutions since these are known to be much faster than convex relaxation methods \cite{cella2023multi, du2020few}
%Meta-learning is also explored in bandit literature \cite{kveton2021meta, basu2021no, simchowitz2021bayesian} 

\noindent{\bf Low rank and sparse bandits.} 
Some previous works also studied low rank and sparse bandits \cite{kveton2017stochastic}.
\cite{lale2019stochastic} considered a setting where the context vectors share a low-rank structure. Specifically, in their setting, the context vectors consist of two parts, i.e. $\hat{\phi}=\phi+\psi$, so that $\phi$ is from a hidden low-rank subspace and   is i.i.d. drawn from an isotropic distribution.
Works \cite{lu2021low, jun2019bilinear} studied bilinear bandits with low rank structure. They focus on estimating a low rank matrix $\Thetas$ when the reward function is given by $x^\top \Thetas y$, where $x, y$ denote the two actions chosen at each round.
Sparse interactive learning settings (e.g., bandits and reinforcement learning) are also studied in the literature \cite{cella2021multi, calandriello2014sparse, hao2020high, hao2021information}.

\section{The Proposed Algorithm: LRRL-AltGDMin}\label{sec:alg}

%\subsection{The Proposed Algorithm}

\begin{algorithm}[t]
    \caption{LRRL-AltGDMin Algorithm} 
    \label{alg1}
\begin{algorithmic}[1]
    \STATE Let $M = \lceil \log_2 \log_2 N \rceil$, $\pG_0 = 0$, $\pG_M = N$, $\pG_m = N^{1 - 2^{-m}}$ for $1 \leqslant m \leqslant M - 1$, let $\widehat{\theta}_t^{\szero} \leftarrow 0$\label{n}
    \FOR{$m \leftarrow 1, \cdots, M$}
    \FOR{$n \leftarrow \pG_{m-1} + 1, \cdots, \pG_m$}
        \STATE For each task $t \in [T]$: choose action $\xnt^\prime = \argmax_{\phi(x,c_{n,t}) \in \Psi_{t}} \pxtc^\top \widehat{\theta}_t^{\smo}$, obtain $y_\nt$, where $\Psi_{t}=\lbrace\phi(x,c_{n,t}): x\in \cal{X}\rbrace$, where $\phi(x,c_{n,t})\sim {\cal{N}}(\mu_{x,c}, I)$.

    \ENDFOR
    \STATE Compute $\ymt{t}{m} = [y_{\pG_{m-1} + 1, t}, \cdots, y_{\pG_m, t}]^\top$, $\phimt{t}{m} = [\phi(x_{\pG_{m-1} + 1, t}^\prime, c_{\pG_{m-1} + 1}), \cdots, \phi(x_{\pG_m, t}^\prime, c_{\pG_m})]^\top$ for $t \in [T]$\label{comp}
    \IF{$m=1$}
        \STATE {\bf Sample-split:} Partition the measurements and measure matrices into $2 L + 1$ equal-sized disjoint sets: one for initialization and $2 L$ sets for the iterations. Denote these by $\ymt{t, \tau}{m}$, $\phimt{t, \tau}{m}$, $\tau = 00, 01, \cdots 2 L$. 
        \STATE Initialize $\Bm{0}$ using Algorithm~\ref{alg2}
        \STATE Compute $\Bm{m}$ and $\Wm{m}$ using Algorithm~\ref{alg3}
    \ENDIF
    \IF{$m \geqslant 2$}
        \STATE {\bf Sample-split:} Partition the measurements and measure matrices into $2 L$ equal-sized disjoint sets. Denote these by $\ymt{t, \tau}{m}$, $\phimt{t, \tau}{m}$, $\tau = 01, \cdots 2 L$. 
        \STATE Compute $\Bm{m}$ and $\Wm{m}$ using Algorithm~\ref{alg3}
    \ENDIF
        \STATE For each task $t \in [T]$: let $\widehat{\theta}_t^{\sm} = \Bm{m} \widehat{w}_t^{\sm}$
    \ENDFOR
\end{algorithmic}
\end{algorithm}

\begin{algorithm}[t]
    \caption{Spectral Initialization for LRRL-AltGDMin} 
    \label{alg2}
\begin{algorithmic}[1]
    \STATE {\bfseries Input:} $\ymt{t, 00}{1}$, $\phimt{t, 00}{1}$, for $t \in [T]$
    \STATE {\bfseries Parameters:} Multiplier in specifying $\alpha$ for init step, $\tilde{C}$
    \STATE Using $\ymt{t}{1} \equiv \ymt{t, 00}{1}$, $\phimt{t}{1} \equiv \phimt{t, 00}{1}$, set $\alpha = \frac{\tilde{C}}{\pG_1 T} \sum_{n=1, t=1}^{\pG_1, T} y_{n, t}^2$
    \STATE $y_{t, trunc}(\alpha) := \ymt{t}{1} \circ \indic_{\{|\ymt{t}{1}| \leqslant \sqrt{\alpha}\}}$
    \STATE $\widehat{\Theta}_0 := \frac{1}{\pG_1} \sum_{t=1}^T {\phimt{t}{1}}^\top y_{t, trunc}(\alpha) e_t^\top$
    \STATE Set $\Bm{0} \leftarrow \text{top-}r\text{-singular-vectors of} \; \widehat{\Theta}_0$
\end{algorithmic}
\end{algorithm}

\begin{algorithm}[!ht]
    \caption{GD-Minimization for LRRL-AltGDMin}
    \label{alg3}
\begin{algorithmic}[1]
    \STATE {\bfseries Input:} $\ymt{t, \tau}{m}$, $\phimt{t, \tau}{m}$ for $t \in [T]$, $\tau = 01, \cdots 2L$, $\Bm{m-1}$ (from Algorithm~\ref{alg1})
    \STATE {\bfseries Parameters:} GD step size, $\gamma$; Number of iterations, $L$
    \STATE Set $B_0 \leftarrow \Bm{m-1}$
    \FOR{$\l = 1$ to $L$}
    \STATE Let $B \leftarrow B_{\l-1}$
    \STATE {\bfseries Update $w_{t, \l}, \theta_{t, \l}$:} For each $t \in [T]$, set $w_{t, \l} \leftarrow (\phimt{t, l}{m} B)^\dagger \ymt{t, l}{m}$ and set $\theta_{t, \l} \leftarrow B w_{t, \l}$
    \STATE {\bfseries Gradient w.r.t $B$:} With $\ymt{t}{m} \equiv \ymt{t, L + \l}{m}$, $\phimt{t}{m} \equiv \phimt{t, L + \l}{m}$, compute $\nabla_B f(B, W_{\l}) = \sum_{t=1}^T {\phimt{t}{m}}^\top (\phimt{t}{m} B w_{t, \l} - \ymt{t}{m}) w_{t, \l}^\top$
    \STATE {\bfseries GD step:} Set $\widehat{B}^+ \leftarrow B - \frac{\gamma}{\pG_m - \pG_{m-1}} \nabla_B f(B, W_{\l})$
    \STATE {\bfseries Projection step:} Compute $\widehat{B}^+ \overset{QR}{=} B^+ R^+$
    \STATE Set $B_{\l} \leftarrow B^+$
    \ENDFOR
    \STATE Set $\Bm{m} \leftarrow B_{L}$ and set $\Wm{m} \leftarrow W_{L}$
\end{algorithmic}
\end{algorithm}

This section presents our proposed algorithm (see Algorithm~\ref{alg1}). We refer to it as the Alternating Gradient Descent (GD) and Minimization algorithm for Low-Rank Representation Learning in linear bandits (LRRL-AltGDMin). This builds on the AltGDmin algorithm of \citep{lrpr_gdmin} mentioned earlier. 
 Our algorithm uses a doubling schedule rule \cite{gao2019batched,han2020sequential,simchi2019phase}. 
We update our estimation of $\Thetas$ only after completing an epoch, utilizing solely the samples collected within that epoch. 
Our algorithm consists of three main components: an exploration phase (data collection), initialization, and alternating GD and minimization steps. The pseudocode of our algorithm is presented in Algorithm~\ref{alg1}.
 %{\cblue 
%Our algorithm builds upon the recently introduced technique known as alternating gradient descent and minimization for low-rank matrix learning \citep{lrpr_gdmin, lrpr_gdmin_2}. Our work introduces two key extensions: (i) We adapt the AltGDMin approach to address sequential learning problems, specifically bandit learning, departing from static learning scenarios. Hence our focus is on optimizing the selection of actions in addition to learning unknown parameters from observed data. (ii) We account for noisy observed data, a common model in learning models, rather than non-noisy observations.} Next, we will provide a comprehensive description of our algorithm. 

We partition the learning horizon $N$ into $M+1$ epochs, $\pG_0, \pG_1, \ldots, \pG_M$, where $\pG_0=0$ and $\pG_M = N$.
Our algorithm is based on a greedy strategy. At each round $n$, each task $t \in [T]$ independently chooses an action $\xnt^\prime = \argmax_{\phi(x,c_{n,t}) \in \Psi_{t}} \pxtc^\top \widehat{\theta}_t^{\smo}$, which effectively aiming to maximize the expected reward. After choosing these actions, each task receives a corresponding reward $y_\nt$. After completing $(\pG_m - \pG_{m-1})$ rounds to collect data, the algorithm then proceeds to update the estimated parameters, which is achieved by finding a matrix $\widehat{\Theta} = \wB\wW$ that minimizes the cost function, defined as  
\begin{align}
f_m(\wB, \wW)\hspace{-1 mm}=\hspace{-5 mm} \sum_{n=\pG_{m-1}+1}^{\pG_m} \sum_{t=1}^T \norm{y_\nt - \phi(x_{n, t}, c_n)^\top \wB\widehat{w}_t}^2.\label{eq:cost}
\end{align}
Here $\wB\in \mathbb{R}^{d \times r}$ and $\wW =[\ww_1, \ldots, \ww_T]\in \mathbb{R}^{r \times T}$, and $\Thetahat=\wB \wW$ is the estimate of the parameter $\Thetas$ in the $m$-the epoch. This process effectively enhances the accuracy of future action selections. 

Because of the non-convexity of the cost function $f_m(\wB, \wW)$, our approach needs careful initialization. We draw inspiration from the spectral initialization idea. The process begins by calculating the top $r$ singular vectors of
\begin{align*}
\widehat{\Theta}_{0, full} &= \frac{1}{\pG_1} [({\phimt{1}{1}}^\top \ymt{1}{1}), \cdots, ({\phimt{T}{1}}^\top \ymt{T}{1})] \\
&= \frac{1}{\pG_1} \sum_{t=1}^T \sum_{n=1}^{\pG_1} \phi(x_{n, t}, c_{n}) y_{n, t} e_t^\top
\end{align*}
Here, $\phimt{t}{m}$, for $t \in [T]$, is the feature matrix obtained by stacking the feature vectors corresponding to task $t$ in the $m$-the epoch, i.e., $\phimt{t}{m} = [\phi(x_{\pG_{m-1} + 1, t}, c_{\pG_{m-1} + 1}), \cdots, \phi(x_{\pG_m, t}, c_{\pG_m})]^\top$. Upon careful analysis of this matrix, it can be observed that the expected value of its $t-$th task equals $\thetats$ and $\bE[\widehat{\Theta}_{0, full}] = \Theta^\star$. However, the large magnitude of the sum of independent sub-exponential random variables, which is defined by a maximum sub-exponential norm $\max_t \norm{\thetats} \leqslant \mu \sqrt{\frac{r}{T}} \sigma_{\max}^\star$, causes a challenge. This magnitude limits the ability to bound $\norm{\widehat{\Theta}_{0, full} - \Theta^\star}$ within the desired sample complexity. In order to solve this issue, we apply a truncation strategy borrowed from \cite{lrpr_gdmin}.
This involves initializing $\Bm{0}$ as the top $r$ left singular vectors of
\begin{align*}
\widehat{\Theta}_{0} &= \frac{1}{\pG_1} \sum_{t=1}^T \sum_{n=1}^{\pG_1} \phi(x_{n, t}, c_{n}) y_{n, t} e_t^\top \indic_{\{y_{t, n}^2 \leqslant \alpha\}} \\
&= \frac{1}{\pG_1} \sum_{t=1}^T \phi(x_{n, t}, c_{n}) y_{t, trunc}(\alpha) e_t^\top
\end{align*}
where $\alpha = \frac{\tilde{C}}{\pG_1 T} \sum_{n=1, t=1}^{\pG_1, T} y_\nt^2$, $\tilde{C} = 9 \kappa^2 \mu^2$, and $y_{t, trunc}(\alpha) := \ymt{t}{1} \circ \indic_{\{|\ymt{t}{1}| \leqslant \sqrt{\alpha}\}}$. Using Singular Value Decomposition (SVD), we extract the top $r$ left singular vectors from $\widehat{\Theta}_{0}$ to obtain our initial estimate $\Bm{0}$. This method efficiently filters large values while preserving others and provides a good starting point that ensures a robust guarantee in parameter estimation. 

After finding a good initial point, the algorithm performs the Alternating Gradient Descent optimization method to update the estimated parameter. The goal is to minimize the squared-loss cost function $f(B, W) = \sum_{n=\pG_{m-1}+1}^{\pG_m} \sum_{t=1}^T \norm{y_\nt - \phi(x_{n, t}, c_{n}) B w_t}^2$ by optimizing the estimated reward parameter matrix for all tasks. The process proceeds in the following manner. At each new iteration $\l$, 
\begin{itemize}
    \item {\bfseries Min-$W$:} Given that $w_t$ appears only in the $t-$th term of $f(B, W)$, optimizing each $w_t$ for the function $w_t \leftarrow \argmin_{\widetilde{w}_t} \norm{\ymt{t}{m} - \phimt{t}{m} B \widetilde{w}_t}^2$ individually is much simpler than optimizing $W$ for the function $W \leftarrow \argmin_{\widetilde{W}} \norm{f(B, \widetilde{W})}$. Consequently, we update the estimate $w_t$ by calculating $w_t = (\phimt{t}{m} B)^\dagger \ymt{t}{m}$ for every task $t \in [T]$. 
    \item {\bfseries ProjGD-$B$:} a single step of the Gradient Descent (GD) is performed to update $B$, which is given by $\widehat{B}^+ \leftarrow B - \gamma \nabla_B f(B, W)$. The updated matrix $B^+$ is obtained using QR decomposition, represented as $\widehat{B}^+ \overset{QR}{=} B^+ R^+$. 
\end{itemize}
Through this iterative process, the algorithm efficiently updates the estimated parameters, guaranteeing an optimized solution. 

\section{Analysis of LRRL-AltGDMin}
%\subsection{Guarantees of the LRRL-AltGDMin Estimator}
We have the following guarantee for our initialization algorithm presented in Algorithm~\ref{alg2}.
\begin{theorem} \label{new_4}
Assume that Assumptions~\ref{assume:iid} and~\ref{assume:incoherence} hold.
Assume that $\sigma_\eta^2 \leqslant c \frac{\delta_0^2}{r^2 \kappa^4 \pG_1} \| \thetats \|^2$.
Then with probability at least $1 - \exp(\log T - c \pG_1) - \exp(d - \frac{c \delta_0^2 \pG_1 T}{r^2 \mu^2 \kappa^4})$, we have 
$$
\SE(\Bm{0}, B^\star) \leqslant \delta_0. 
$$
\end{theorem}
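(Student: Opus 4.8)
The plan is to reduce the subspace-distance bound to a spectral-norm perturbation bound and then control that perturbation via a truncation-plus-concentration argument, in the spirit of the spectral initialization analysis for altGDmin/LRPR. First I would invoke a Davis--Kahan (Wedin) $\sin\Theta$ bound: since $B^\star$ spans the top-$r$ left singular subspace of the exactly rank-$r$ matrix $\Thetas$ and $\Bm{0}$ spans that of $\widehat{\Theta}_0$, one has $\SE(\Bm{0}, B^\star) \le C\,\norm{\widehat{\Theta}_0 - \Thetas}/\sigmin$ (using $\sigma_{r+1}(\Thetas)=0$ and $\sigma_r(\Thetas)=\sigmin$). Hence it suffices to prove a high-probability bound of the form $\norm{\widehat{\Theta}_0 - \Thetas} \le c\,\delta_0\,\sigmin$, and the theorem follows by choosing the constant small enough.

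Next I would control the data-dependent threshold $\alpha$. Because $\E[y_\nt^2] = \norm{\thetats}^2 + \sigma_\eta^2$ and each $y_\nt^2$ is sub-exponential, a Bernstein bound shows that $\alpha = \frac{\tilde C}{\pG_1 T}\sum_{n,t} y_\nt^2$ concentrates, with failure probability $\exp(-c\pG_1)$ after a union bound over tasks (this produces the $\exp(\log T - c\pG_1)$ term), around $\tilde C\big(\tfrac1T\sum_t \norm{\thetats}^2 + \sigma_\eta^2\big)$. Using $\tfrac1T\sum_t\norm{\thetats}^2 = \tfrac1T\norm{\Thetas}_F^2 \ge \tfrac{r}{T}\sigmin^2$ together with the incoherence bound $\max_t\norm{\thetats}^2 \le \mu^2\tfrac rT\sigmax^2$, the multiplier $\tilde C = 9\kappa^2\mu^2$ is exactly large enough to guarantee that, with high probability, $\alpha \gtrsim 9\max_t\norm{\thetats}^2$; i.e. the truncation level sits well above the typical signal energy of every column.

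I would then split $\widehat{\Theta}_0 - \Thetas = (\widehat{\Theta}_0 - \bar\Theta) + (\bar\Theta - \Thetas)$, where $\bar\Theta := \E[\widehat{\Theta}_0 \mid \alpha]$ is the conditional mean with the (now frozen) good value of $\alpha$. For the bias term $\bar\Theta - \Thetas$, each column equals $-\E[\phi(x,c)\,y\,\indic_{\{y^2 > \alpha\}}]$; since $\alpha$ dominates both $\norm{\thetats}^2$ and $\sigma_\eta^2$, Gaussian tail estimates make this column bias small, and the noise condition $\sigma_\eta^2 \le c\,\delta_0^2\norm{\thetats}^2/(r^2\kappa^4\pG_1)$ is what forces the $\sigma_\eta$-dependent part of the bias down to $O(\delta_0\sigmin)$. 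For the fluctuation term $\widehat{\Theta}_0 - \bar\Theta$, the truncation makes each rank-one summand effectively bounded (its $y$ factor is at most $\sqrt\alpha$), so for fixed unit vectors $a\in\bR^d,\,b\in\bR^T$ the scalar $a^\top(\widehat{\Theta}_0-\bar\Theta)b$ is an average of sub-exponential terms controllable by Bernstein; a standard $\epsilon$-net over $S^{d-1}\times S^{T-1}$ (net size $e^{C(d+T)}$) and a union bound then yield $\norm{\widehat{\Theta}_0 - \bar\Theta} \le c\,\delta_0\,\sigmin$ with failure probability $\exp\!\big(d - c\,\delta_0^2\pG_1 T/(r^2\mu^2\kappa^4)\big)$, which is precisely the second probability term and encodes the $(d+T)r^2\kappa^4\mu^2/\delta_0^2$-type sample requirement. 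Combining the bias and fluctuation bounds via the triangle inequality gives the desired control on $\norm{\widehat{\Theta}_0 - \Thetas}$.

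The main obstacle is the interplay between the \emph{random} threshold $\alpha$ and the truncated summands: the indicator $\indic_{\{y_\nt^2 \le \alpha\}}$ couples $\alpha$ to every term in the sum, so one cannot directly treat the summands as independent. I would handle this by conditioning on the high-probability event that $\alpha$ lies in a good deterministic interval $[\alpha_-,\alpha_+]$ and replacing $\alpha$ by its endpoints, bounding the resulting deterministic-threshold matrices from above and below (using monotonicity of the truncation in $\alpha$) so that the concentration and bias estimates go through with fixed thresholds. Getting the dependence on $\mu$ and $\kappa$ sharp enough that the truncation bias, the noise contribution, and the fluctuation all simultaneously come out as $O(\delta_0\sigmin)$ under the stated sample and noise conditions is the delicate part of the calculation.
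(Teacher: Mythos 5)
Your overall architecture (Wedin reduction, concentration of the threshold $\alpha$, sandwiching the random threshold between deterministic levels $\alpha_\pm$, Bernstein plus an $\epsilon$-net for the fluctuation) is the right skeleton, but the decomposition $\widehat{\Theta}_0 - \Thetas = (\widehat{\Theta}_0 - \bar\Theta) + (\bar\Theta - \Thetas)$ with the requirement that the bias term be $O(\delta_0\sigmin)$ contains a genuine gap: that bias is \emph{not} small enough, and no amount of samples fixes it. Writing $\phi = \xi\,\thetats/\|\thetats\| + \phi_\perp$ with $\phi_\perp$ independent of $y$, the $t$-th column of $\bar\Theta - \Thetas$ equals $-\E[\xi y\,\indic_{\{y^2>\alpha\}}]\,\thetats/\|\thetats\|$, which evaluates to $-\E[Z^2\indic_{\{|Z|>\sqrt{\alpha}/\sigma_y\}}]\,\thetats$ (up to the small noise correction), with $Z\sim\N(0,1)$ and $\sigma_y^2=\|\thetats\|^2+\sigma_\eta^2$. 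Since $\tilde C = 9\kappa^2\mu^2$ puts the truncation level at only about $3\sigma_y$ for the most energetic columns, this coefficient is a fixed constant ($\approx 0.03$), so each column of $\bar\Theta$ is a constant-factor multiplicative shrinkage of $\thetats$ and $\|\bar\Theta-\Thetas\|$ is of order a constant times $\sigmax$. This can never be driven below $c\,\delta_0\sigmin$ for the small $\delta_0$ the theorem is actually used with downstream ($\delta_0 = 0.02/(\mu\sqrt{r}\kappa^2)$ in Theorem~\ref{new_5}), so the triangle inequality against $\Thetas$ cannot close.

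The standard repair --- and what the analysis behind the result the paper leans on actually does --- is to never ask the bias to be small: observe that $\bar\Theta := \E[\widehat{\Theta}_0\mid\alpha]$ has columns exactly proportional to $\thetats$ (the $\phi_\perp$ component averages out), i.e.\ $\bar\Theta = B^\star W^\star D$ for a diagonal $D$ with entries bounded in, say, $[0.9,1]$ under the stated noise condition. Hence $\bar\Theta$ has \emph{exactly} the column span of $B^\star$ and $\sigma_r(\bar\Theta)\geqslant 0.9\,\sigmin$, so you apply Wedin between $\widehat{\Theta}_0$ and $\bar\Theta$ rather than between $\widehat{\Theta}_0$ and $\Thetas$; only the fluctuation $\|\widehat{\Theta}_0-\bar\Theta\|\leqslant c\,\delta_0\sigmin$ is then needed, and your Bernstein/$\epsilon$-net argument for that piece is fine. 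You should also note that your route is far more self-contained than the paper's: the paper's proof of Theorem~\ref{new_4} simply invokes Claim~B.15 of the cited prior work (which internally performs exactly the reference-matrix argument above) and only proves the new ingredient, namely that $\|\etamt{t}{m}\|^2 = \sum_n \eta_{n,t}^2 \leqslant 1.1\,\pG_1\sigma_\eta^2$ with probability $1-\exp(\log T - c\pG_1)$, which converts the variance condition on $\sigma_\eta^2$ into the deterministic noise-energy condition that claim requires. Two smaller points: the paper handles the data-dependence of $\alpha$ by sample splitting rather than by your $[\alpha_-,\alpha_+]$ sandwich (both work), and since $\SE$ is defined via the Frobenius norm you will pick up an extra $\sqrt{r}$ when passing from a spectral-norm perturbation bound, which must be absorbed into the sample complexity.
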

Observe that Theorem~\ref{new_4} needs the noise-to-signal (NSR) ratio $\dfrac{\sigma_\eta^2}{\| \thetats \|^2} \leqslant \frac{c}{r^2}$, where $c<1$. This is necessary to demonstrate that the spectral initialization in Algorithm~\ref{alg2} produces a sufficiently good initialization.
%We present the proof in Appendix~\ref{app_new_4}.

%{\cblue Theorem~\ref{new_4} used the NSR upper bound because it simplifies the technical analysis. If we impose the standard zero mean i.i.d. assumption on the $\sigma_\eta$s, it should be possible to reduce the required bound on NSR to $1/k$. Moreover, if the initialization sample complexity is increased by a factor of $k$, then the NSR upper bound can be further reduced to a constant $c$.}

%Theorem~\ref{new_4} needs the NSR upper bound because we do not make any statistical assumptions on the noise. If we impose the standard zero mean i.i.d. assumption on the $\sigma_\eta$s, it should be possible to reduce the required bound on NSR to $1/k$. Moreover, if the initialization sample complexity is increased by a factor of $k$, then the NSR upper bound can be further reduced to a constant $c$.

In order to show that $\Bm{0}$ is a good enough initialization, we need to show that $\SE(\Bm{0}, B^\star) \leqslant \delta_0$ for a constant $\delta_0 < 1$ that is small enough. This is typically done using a $\sin \Theta$ theorem, e.g., Davis-Kahan or Wedin \cite{spectral_init_review}, which uses a bound on the error between $\widehat{\Theta}_{0}$ and a matrix whose span of top $r$ singular vectors equals that of $B^\star$.
Such a matrix may be $\E[\widehat{\Theta}_{0}]$ or something else that can be shown to be close to $\widehat{\Theta}_{0}$.
For our approach, it is not easy to compute $\E[\widehat{\Theta}_{0}]$ because the threshold, $\alpha$, used in the indicator function depends on all the $y_{n,t}^2$.
Our approach to solving this by using the sample-splitting idea: use a different independent set of measurements to compute $\alpha$ than those used for the rest of $\widehat{\Theta}_{0}$. Since this is a one-time step, it does not change the sample complexity order.\
We present the proof of Theorem~\ref{new_4} in Appendix~\ref{app_new_4}.

%The result below presents the error decay in our approach. 
\begin{theorem} \label{new_3}
Assume that Assumptions~\ref{assume:iid} and~\ref{assume:incoherence} hold, $\SE(B, B^\star) \leqslant \delt$, and $\sigma_\eta^2 \leqslant \frac{r}{T} \delt^2 {\sigma_{\min}^\star}^2$. If $\delt \leqslant \frac{0.02}{\mu \sqrt{r} \kappa^2}$, $\gamma = \frac{c_\gamma}{{\sigma_{\max}^\star}^2}$ with $c_\gamma \leqslant 0.5$, and if $$(\pG_m - \pG_{m-1}) T \geqslant C \mu^2 \kappa^4 (d + r) r\quad \mathrm{and}$$  $$(\pG_m - \pG_{m-1}) \geqslant C (r + \log T + \log d),$$ then with probability at least $1 -\ell d^{-10}$, 
$$
\SE(B^+, B^\star) \leqslant \delto := (1 - \frac{0.5472 c_\gamma}{\kappa^2}) \delt. 
$$
\end{theorem}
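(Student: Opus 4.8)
The plan is to track a single Min-$W$ followed by ProjGD-$B$ iteration and show it contracts the subspace distance geometrically, adapting the AltGDmin template of \citep{lrpr_gdmin, lrpr_gdmin_2} to the additive-noise, incoherent-column setting here. Write $q := \pG_m - \pG_{m-1}$, abbreviate the per-task feature matrix $\phimt{t}{m}$ as $\Phi_t$ (with rows i.i.d.\ $\N(0,I)$ on the fresh, sample-split measurements), $y_t \equiv \ymt{t}{m}$, and let $P_\perp^\star := I - B^\star B^{\star\top}$. Since $\SE(B^+,B^\star) = \norm{P_\perp^\star B^+}$ and the QR step writes $\widehat{B}^+ = B^+ R^+$ with $R^+$ invertible, we have $\SE(B^+,B^\star) \leqslant \norm{P_\perp^\star \widehat{B}^+}\,\norm{(R^+)^{-1}}$ with $\norm{(R^+)^{-1}} = 1/\sigma_{\min}(\widehat{B}^+)$. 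It therefore suffices to (a) bound $\norm{P_\perp^\star \widehat{B}^+}$ by a contracted multiple of $\delt$ and (b) lower bound $\sigma_{\min}(\widehat{B}^+)$ by a constant close to one.

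For the Min-$W$ step, substitute $y_t = \Phi_t B^\star w_t^\star + \eta_t$ into $w_{t,\ell} = (\Phi_t B)^\dagger y_t$ to get $w_{t,\ell} = (\Phi_t B)^\dagger \Phi_t B^\star w_t^\star + (\Phi_t B)^\dagger \eta_t$. Because $B$ has orthonormal columns, $\Phi_t B$ has i.i.d.\ $\N(0,I_r)$ rows, so $(\Phi_t B)^\dagger \Phi_t B^\star$ concentrates about its population value $B^\top B^\star$; this gives $w_{t,\ell} = B^\top B^\star w_t^\star + g_t$, with $g_t$ collecting the finite-sample regression error and the noise contribution $(\Phi_t B)^\dagger \eta_t$. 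I would bound $\norm{g_t}$ by sub-exponential concentration, using Assumption~\ref{assume:incoherence} for the per-task control $\norm{w_t^\star} \leqslant \mu\sqrt{r/T}\,\sigmax$ and the noise hypothesis $\sigma_\eta^2 \leqslant \tfrac{r}{T}\delt^2 \sigmin^2$ for the noise part. The same estimate yields $\sigma_{\min}(W_\ell) \gtrsim \sqrt{1-\delt^2}\,\sigmin$ and $\sigma_{\max}(W_\ell) \lesssim \sigmax$, which are the quantities that drive the contraction.

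Next, substituting $y_t$ into the gradient gives $\nabla_B f(B,W_\ell) = \sum_t \Phi_t^\top \Phi_t (B w_{t,\ell} - \theta_t^\star) w_{t,\ell}^\top - \sum_t \Phi_t^\top \eta_t w_{t,\ell}^\top$, whose population (per-sample) value is $\mathrm{Grad}_{\mathrm{pop}} = (B W_\ell - \Theta^\star) W_\ell^\top$. The key algebraic fact is $P_\perp^\star \Theta^\star = 0$, so $P_\perp^\star \mathrm{Grad}_{\mathrm{pop}} = (P_\perp^\star B)\, W_\ell W_\ell^\top$ and hence the population update obeys $P_\perp^\star \widehat{B}^+_{\mathrm{pop}} = (P_\perp^\star B)(I - \gamma W_\ell W_\ell^\top)$. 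This already yields $\norm{P_\perp^\star \widehat{B}^+_{\mathrm{pop}}} \leqslant \delt\,(1 - \gamma\sigma_{\min}(W_\ell)^2) \leqslant \delt\,(1 - c_\gamma(1-\delt^2)/\kappa^2)$, using $\gamma = c_\gamma/\sigmax^2$ and $\sigma_{\min}(W_\ell)^2/\sigmax^2 \approx (1-\delt^2)/\kappa^2$. For the true empirical update I would add three perturbations: the gradient deviation $\norm{\tfrac{1}{q}\nabla_B f - \mathrm{Grad}_{\mathrm{pop}}}$, the noise-gradient $\tfrac{1}{q}\norm{\sum_t \Phi_t^\top\eta_t w_{t,\ell}^\top}$, and the Min-$W$ error through $g_t$, each forced below a small fraction of $\gamma\sigma_{\min}(W_\ell)^2\delt$ by the sample-complexity conditions $qT \geqslant C\mu^2\kappa^4(d+r)r$, $q \geqslant C(r+\log T+\log d)$ and the NSR bound. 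Finally $\sigma_{\min}(\widehat{B}^+) \geqslant 1 - \gamma\norm{\mathrm{Grad}_{\mathrm{pop}}} - (\text{pert.}) \geqslant 1 - O(c_\gamma\delt)$, since $\norm{\mathrm{Grad}_{\mathrm{pop}}} \lesssim \delt\,\sigmax^2$, so $\norm{(R^+)^{-1}} \leqslant 1 + O(c_\gamma\delt)$. Multiplying the bounds from (a) and (b), the $(1-\delt^2)$ slack, the $O(c_\gamma\delt)$ blowup, and the concentration and noise residuals all erode the leading factor from $1 - c_\gamma/\kappa^2$ down to the stated $1 - 0.5472\,c_\gamma/\kappa^2$; the constraints $\delt \leqslant 0.02/(\mu\sqrt r\kappa^2)$ and $c_\gamma\leqslant 0.5$ guarantee this factor stays strictly below one.

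The main obstacle is part (a): controlling the deviation of the empirical gradient and the noise-gradient term from their expectations. Both are sums over tasks of products of Gaussian factors reweighted by the data-dependent $w_{t,\ell}$, hence heavy-tailed (sub-exponential) and statistically coupled to the same $\Phi_t$. Handling this requires the sample-splitting already built into Algorithm~\ref{alg3} (using the fresh sets $\phimt{t,L+\ell}{m}$ for the gradient so that $w_{t,\ell}$ is independent of the $\Phi_t$ appearing in $\Phi_t^\top\Phi_t$), an $\epsilon$-net or matrix-Bernstein argument for the $d\times r$ deviation, and propagating the incoherence bound so that the weights $\norm{w_{t,\ell}}$ do not inflate the variance proxy. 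Ensuring all residuals are simultaneously dominated by $\gamma\sigma_{\min}(W_\ell)^2\delt/\kappa^2$—so the final constant remains $0.5472$—is exactly where the stated sample complexity and noise-to-signal bound are consumed; the $\ell$ prefactor in the $1-\ell d^{-10}$ probability is the union bound over the iterations run so far to reach the hypothesis $\SE(B,B^\star)\leqslant\delt$.
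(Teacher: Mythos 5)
Your proposal follows essentially the same route as the paper's proof: the same bound $\SE(B^+,B^\star)\leqslant \norm{P\widehat B^+}/\sigma_{\min}(\widehat B^+)$, the same decomposition of $P\widehat B^+$ into the population term $PB(I-\gamma WW^\top)$ (via $P\Theta^\star=0$) plus the gradient deviation, the same Min-$W$ analysis giving $\sigma_{\min}(W)\gtrsim\sigmin$ and $\sigma_{\max}(W)\lesssim\sigmax$, and the same sub-exponential Bernstein plus $\epsilon$-net machinery (with sample splitting and the incoherence/NSR hypotheses consumed exactly where you say). The paper's Lemmas~\ref{new_1} and~\ref{new_2} and Propositions~\ref{proposition3} and~\ref{proposition4} formalize precisely the concentration steps you sketch, so the proposal is a faithful outline of the actual argument.
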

The above result proves that the error decays exponentially. We present the proof in Appendix~\ref{app_new_3}. Using Theorems~\ref{new_4} and~\ref{new_3}, we have the guarantee below on  estimation error.

\begin{theorem} \label{new_5}
Assume that Assumptions~\ref{assume:iid} and~\ref{assume:incoherence} hold and $\sigma_\eta^2 \leqslant \frac{c \| \thetats \|^2}{\mu r^3 \kappa^6 \pG_1}$. Set $\gamma = \frac{0.4}{{\sigma_{\max}^\star}^2}$ and $L = C \kappa^2 \log(\frac{1}{\max(\epsilon, \epsilon_{noise})})$. If $$\scalemath{0.9}{(\pG_m - \pG_{m-1}) T \geqslant C \mu^2 \kappa^6 d r^2 (\mu^2 \kappa^2 r + \log(\frac{1}{\max(\epsilon, \epsilon_{noise})}))}$$ and $$\pG_m - \pG_{m-1} \geqslant C \kappa^2 (r + \log T + \log d) \log(\frac{1}{\max(\epsilon, \epsilon_{noise})}),$$ then with probability at least $1 - L d^{-10}$, 
\begin{align*}
\SE(B, B^\star) &\leqslant \max(\epsilon, \epsilon_{noise}) \quad \text{and}\\
\| \thetahatt - \thetats \| &\leqslant \max(\epsilon, \epsilon_{noise}) \| \thetats \| \quad \text{for all} \quad t \in [T], 
\end{align*}
where $\epsilon_{noise} = C \kappa^2 \sqrt{NSR}$, $NSR := \frac{\sigma_\eta^2}{\min_t \| \thetats \|^2}$. The time complexity is $(\pG_m - \pG_{m-1}) T d r \cdot L = C \kappa^2 (\pG_m - \pG_{m-1}) T d r \log(\frac{1}{\max(\epsilon, \epsilon_{noise})})$. The communication complexity is $d r$ per node per iteration. 
\end{theorem}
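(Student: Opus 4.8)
The plan is to chain together the initialization guarantee (Theorem~\ref{new_4}) and the per-iteration contraction guarantee (Theorem~\ref{new_3}) into a single geometric-decay argument, carrying the noise floor through the induction. First I would verify that, under the stated noise assumption and the dominant term of the sample-complexity lower bound, Theorem~\ref{new_4} delivers an initialization $\Bm{0}$ good enough to enter the basin required by Theorem~\ref{new_3}, namely $\SE(\Bm{0},B^\star)\leqslant\delta_0$ with $\delta_0 = 0.02/(\mu\sqrt{r}\kappa^2)$. Substituting this target $\delta_0$ into the probability exponent $\tfrac{c\delta_0^2\pG_1 T}{r^2\mu^2\kappa^4}$ of Theorem~\ref{new_4} forces $\pG_1 T \gtrsim \mu^4\kappa^8 d r^3$, which is exactly the first term obtained by expanding $\mu^2\kappa^6 d r^2(\mu^2\kappa^2 r)$ in the hypothesis; likewise the assumption $\sigma_\eta^2\leqslant c\|\thetats\|^2/(\mu r^3\kappa^6\pG_1)$ is calibrated to imply the NSR condition of Theorem~\ref{new_4} at this $\delta_0$.

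Next I would set $c_\gamma = 0.4$ (so $\gamma = 0.4/\sigmax^2$), making the contraction factor $\rho := 1 - 0.5472\cdot 0.4/\kappa^2 = 1 - 0.21888/\kappa^2$. Running the recursion $\delta_{\ell+1} = \rho\,\delta_\ell$ via repeated application of Theorem~\ref{new_3}, the subspace error decays geometrically \emph{as long as} the per-iteration noise hypothesis $\sigma_\eta^2\leqslant (r/T)\delta_\ell^2\sigmin^2$ continues to hold, i.e.\ as long as $\delta_\ell\gtrsim\sqrt{T/r}\,\sigma_\eta/\sigmin$. This threshold is precisely what defines the noise floor $\epsilon_{noise} = C\kappa^2\sqrt{NSR}$: once $\delta_\ell$ descends to this level further contraction is no longer guaranteed and the error saturates. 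Since $\log(1/\rho)\geqslant 0.21888/\kappa^2$, choosing $L = C\kappa^2\log(1/\max(\epsilon,\epsilon_{noise}))$ forces $\rho^L\delta_0\leqslant\max(\epsilon,\epsilon_{noise})$, which yields the claimed $\SE(B,B^\star)\leqslant\max(\epsilon,\epsilon_{noise})$.

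The remaining steps are bookkeeping. The sample-splitting into $2L$ disjoint batches (two per iteration, for the min-$W$ and the gradient steps) renders each iteration's data independent of $B_{\ell-1}$, which is what legitimizes the inductive use of Theorem~\ref{new_3}; requiring every batch to meet the per-iteration budget $\gtrsim\mu^2\kappa^4 dr$ and then multiplying by $2L$ produces the $\log(1/\max(\epsilon,\epsilon_{noise}))$ term in the sample complexity. A union bound over the $L$ gradient steps (each failing with probability at most $d^{-10}$) together with the initialization failure event gives overall success probability $1 - L d^{-10}$. The per-column estimate $\|\thetahatt - \thetats\|\leqslant\max(\epsilon,\epsilon_{noise})\|\thetats\|$ then follows by passing from the subspace error to the individual reward parameters: writing $\thetahatt = \Bm{m}\widehat{w}_t^{\sm}$ and controlling the least-squares min-$W$ solution in terms of $\SE(B,B^\star)$ and the noise. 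Finally, counting flops (the gradient $\nabla_B f$ dominates at $O((\pG_m-\pG_{m-1})Tdr)$ per iteration) over $L$ iterations gives the stated $C\kappa^2(\pG_m-\pG_{m-1})Tdr\log(1/\max(\epsilon,\epsilon_{noise}))$ time and $dr$ per-node communication.

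The main obstacle is the coupled induction in the second step: I must certify that the noise hypothesis of Theorem~\ref{new_3} stays satisfied at \emph{every} iteration down to the floor, and pin down $\epsilon_{noise}$ so that it simultaneously marks where that hypothesis fails and equals $C\kappa^2\sqrt{NSR}$. Reconciling the floor $\sqrt{T/r}\,\sigma_\eta/\sigmin$ emerging from Theorem~\ref{new_3} with the target form $\kappa^2\sigma_\eta/\min_t\|\thetats\|$ requires invoking the incoherence Assumption~\ref{assume:incoherence} to relate $\min_t\|\thetats\|$ to $\sigmin$, and this is where the extra $\kappa^2$ factor in $\epsilon_{noise}$ enters.
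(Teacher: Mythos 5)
Your proposal follows essentially the same route as the paper's proof: initialize via Theorem~\ref{new_4} with $\delta_0 = 0.02/(\mu\sqrt{r}\kappa^2)$, iterate the contraction of Theorem~\ref{new_3} with $c_\gamma=0.4$ while tracking the per-iteration noise condition $\sigma_\eta^2 \leqslant \frac{r}{T}\delta_\ell^2{\sigma_{\min}^\star}^2$ to identify the saturation level $\epsilon_{noise}$, and read off $L$ and the sample complexity from the geometric decay. The details you flag as the main obstacles (calibrating the initialization noise requirement to $\delta_0$, converting the floor $\sqrt{T/r}\,\sigma_\eta/\sigma_{\min}^\star$ into $C\kappa^2\sqrt{NSR}$ via Assumption~\ref{assume:incoherence}) are exactly the steps the paper carries out, so the argument is correct and matches.
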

This result shows that the error decays exponentially until it reaches the (normalized) ``noise-level"  ${\sigma_\eta}^2/\|\thetats\|^2$, but saturates after that. We present the proof in Appendix~\ref{app_new_5}. 

\noindent{\bf Sample complexity.} To understand the necessary lower bound on $(\pG_m - \pG_{m-1}) T$, it is crucial to consider it in terms of the sample complexity. This can be performed by assuming that $d \approx T$ approximately. When logarithmic factors are ignored and considering $\kappa$ and $\mu$ as constant values, our results indicate that an order value of $r^3$ samples per epoch is sufficient. Without making the low-rank assumption and without using our algorithm, if we were to perform matrix inversion for $\phimt{t}{m}$ in order to extract each vector $\thetats$ from $\ymt{t}{m}$, we would need at least $\pG_m - \pG_{m-1} \geq d$ samples per epoch, instead of just $r^3$. If the low-rank assumption holds and $r \ll d$ (e.g., $r = \log d$), our approach significantly lowers the amount of sample complexity needed in comparison to the requirement for inverting $\phimt{t}{m}$. 

\noindent{\bf Time and communication complexity.} When analyzing the time complexity of a given $m$-th epoch, we start by calculating the computation time needed for the initialization step. To calculate $\Theta_0$, it is necessary to give a time of order $(\pG_m - \pG_{m-1}) T d$. Furthermore, the time complexity of the $r$-SVD step $d T r$ times the number of iterations required. An important observation is that to obtain an initial estimate of the span of $B^*$ that is $\delta_0$-accurate, where $\delta_0 = \frac{c}{\mu \sqrt{r} \kappa^2}$, it is sufficient to use an order $\log(\mu r \kappa)$ number of iterations. Therefore, the total complexity of this initialization phase can be expressed as $O(d T ((\pG_m - \pG_{m-1}) + r) \log (\mu r \kappa)) = O((\pG_m - \pG_{m-1}) T d \log \mu r \kappa)$, given that $(\pG_m - \pG_{m-1}) \geqslant r$. The time required for each gradient computation is $(\pG_m - \pG_{m-1}) T d r$. The QR decomposition process requires a time complexity of order $d r^2$. Additionally, the time required to update the columns of matrix $W$ using the least squares method is $O((\pG_m - \pG_{m-1}) T d r)$. The number of iterations of these steps for each epoch can be expressed as $L = O(\kappa^2 \log(\frac{1}{\max(\epsilon, \epsilon_{noise})}))$. In summary, the overall time complexity for the process can be determined as $O((\pG_m - \pG_{m-1}) T d \log(\mu r \kappa) + \max((\pG_m - \pG_{m-1}) T d r, d r^2) \cdot M \cdot L) = O(\kappa^2 M (\pG_m - \pG_{m-1}) T d r \log(\frac{1}{\max(\epsilon, \epsilon_{noise})}) \log(\kappa))$. 

The communication complexity for each task in each iteration is of the order of $d r$. Hence, the total is $O(d r \cdot \kappa \log(\frac{1}{\max(\epsilon, \epsilon_{noise})}))$. 

We now present the regret bound for our algorithm.
%, showing that it is nearly optimal in terms of both regret and communication cost.
%\subsection{Regret Analysis}
%The lemma below presents a bound on the estimation error of the LRRL-AltGDMin algorithm. This bound is used to prove the regret bound of our approach.
%\begin{lemma} \label{Lemma2}
%Assume that Assumptions~\ref{assume:iid} and~\ref{assume:incoherence} hold.
%If $\sigma_\eta^2 \leqslant \frac{r}{T} \delt^2 {\sigma_{\min}^\star}^2$, $\delt \leqslant \frac{0.02}{\sqrt{r} \kappa^2}$, $\gamma = \frac{c_\gamma}{(\pG_m - \pG_{m-1}) {\sigma_{\max}^\star}^2}$ with $c_\gamma \leqslant 0.5$, and if $$(\pG_m - \pG_{m-1}) T \geqslant C \kappa^4 \mu^2 d r$$ and $$(\pG_m - \pG_{m-1}) \gtrsim \max(\log d, \log T, r),$$ then for any given epoch $m \in [M]$, it holds with probability at least $O(1 - d^{-10} - \epsilon^2 - T \exp(r - c (\pG_m - \pG_{m-1})) - (NT)^{-2})$ that
%$$\scalemath{0.9}{\| \wB \wW - B^\star W^\star \|_F^2 \leqslant \frac{(1 + 2 \epsilon_2)^2 C}{c^\prime} \mu^2 \delt^2 \frac{r}{T} {\sigma_{\max}^\star}^2 \sqrt{\frac{\epsilon^2 T}{\pG_m - \pG_{m-1}}}}. $$
%\end{lemma}

%\section{Main Results and Discussion}
%\subsection{Main Result}
%Consider an epoch $m \in [M]$ and a task $t \in [T]$. The cumulative regret in the $m$-th epoch is given by
%\begin{align}
%\pR_m &= \sum_{n=\pG_{m-1}+1}^{\pG_m} \sum_{t=1}^T \langle \pxtsc \thetats \rangle - \langle \pxtc \thetats \rangle \nonumber 
%\end{align}
%Below we present the regret bound of the proposed approach.
\begin{theorem}\label{thm-reg}
Assume that Assumptions~\ref{assume:iid} and~\ref{assume:incoherence} hold and $\sigma_\eta^2 \leqslant \frac{c \| \thetats \|^2}{\mu r^3 \kappa^6 \pG_1}$, If $$\scalemath{0.95}{(\pG_m - \pG_{m-1}) T \geqslant C \mu^2 \kappa^6 d r^2 (\mu^2 \kappa^2 r + \log(\frac{1}{\max(\epsilon, \epsilon_{noise})}))}$$ and $$\pG_m - \pG_{m-1} \geqslant C \kappa^2 (r + \log T + \log d) \log(\frac{1}{\max(\epsilon, \epsilon_{noise})}),$$ 
then with probability at least $1 - \delta - L d^{-10}$, the upper bound of cumulative regret for Algorithm~\ref{alg1} is
\begin{align*}
\scalemath{0.95}{\pR_{N, T} \leqslant 2 \mu \sigma_{\max}^\star \sqrt{rNT \log{\frac{1}{\delta}}} (1 + \log \log N).} 
\end{align*}
\end{theorem}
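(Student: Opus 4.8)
The plan is to decompose the regret over the $M+1$ epochs of the doubling schedule, $\pR_{N,T}=\sum_{m=1}^{M}\sum_{n=\pG_{m-1}+1}^{\pG_m}\sum_{t=1}^{T}\rho_{n,t}$, where $\rho_{n,t}:=\langle \pxtsc-\pxtc,\thetats\rangle$ is the instantaneous regret incurred when task $t$ plays the greedy arm $\pxtc=\argmax_{\psi\in\Psi_t}\langle\psi,\widehat\theta_t^{\smo}\rangle$ in place of the optimal arm $\pxtsc$. The first, purely deterministic, step is the greedy inequality: since $\pxtc$ maximizes the estimated reward and $\pxtsc$ is a feasible arm, $\langle \pxtc,\widehat\theta_t^{\smo}\rangle\ge\langle \pxtsc,\widehat\theta_t^{\smo}\rangle$, and hence $\rho_{n,t}\le\langle \pxtsc-\pxtc,\ \thetats-\widehat\theta_t^{\smo}\rangle$. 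This rewrites each round's regret as an inner product of a feature difference with the parameter error $\thetats-\widehat\theta_t^{\smo}$, which is exactly the object controlled by Theorem~\ref{new_5}.

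Next I would treat the first (exploration) epoch separately. There $\widehat\theta_t^{\szero}=0$, so no estimation accuracy is available and I bound $\rho_{n,t}$ directly by the reward scale: by Assumption~\ref{assume:incoherence} the reward $\langle\phi,\thetats\rangle$ is sub-Gaussian with parameter $\norm{\thetats}\le\mu\sqrt{r/T}\,\sigmax$, so a tail bound (with the union bound over the $\pG_1 T$ round--task pairs absorbed into $\log(1/\delta)$) gives $\rho_{n,t}\lesssim \mu\sqrt{r/T}\,\sigmax\sqrt{\log(1/\delta)}$ with probability $1-\delta$. Multiplying by the $\pG_1=\sqrt N$ rounds and $T$ tasks yields precisely the leading term $2\mu\sigmax\sqrt{rNT\log(1/\delta)}$.

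For the later epochs $m\ge2$ I would apply Cauchy--Schwarz to the greedy inequality and invoke Theorem~\ref{new_5} on the estimate $\widehat\theta_t^{\smo}$ built from the previous epoch, which gives $\norm{\widehat\theta_t^{\smo}-\thetats}\le\max(\epsilon,\epsilon_{noise})\norm{\thetats}$ for all $t$. Because $\thetats$ and $\widehat\theta_t^{\smo}$ span an (at most) $2r$-dimensional subspace, the feature difference may be replaced by its projection onto that span, whose norm concentrates at scale $\sqrt r$ rather than $\sqrt d$; combined with the smallness of the estimation error this keeps each later epoch's per-round regret small. The schedule $\pG_m=N^{1-2^{-m}}$ is the batched-bandit schedule for which $\pG_m/\sqrt{\pG_{m-1}}=\sqrt N$, which is exactly what balances the per-epoch regret so that every epoch's contribution is bounded by the first epoch's term; summing over the $1+M=1+\lceil\log_2\log_2 N\rceil$ epochs produces the factor $(1+\log\log N)$, and the two failure events combine into the stated probability $1-\delta-Ld^{-10}$.

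The hard part will be the instantaneous-regret control in the later epochs, for two reasons. First, Theorems~\ref{new_4}--\ref{new_5} are proved under Assumption~\ref{assume:iid} (i.i.d.\ Gaussian features), but from the second epoch onward the arms are selected greedily and are no longer independent of the observed rewards; applying the estimator's guarantee to this adaptively collected data requires the re-centering $y_\nt=(\pxtc-\mu_{x_\nt,c_\nt})^{\top}\thetats+\eta_\nt+\mu_{x_\nt,c_\nt}^{\top}\thetats$ noted after Assumption~\ref{assume:iid}, together with a careful argument that the guarantee survives the adaptivity. Second, for the final epoch, whose length is of order $N$, a crude $\norm{\pxtsc-\pxtc}\,\norm{\thetats-\widehat\theta_t^{\smo}}$ bound is too weak; one must use the curvature of the Gaussian arm set to show that the greedy maximizer tracks the optimal maximizer, so that $\norm{\pxtsc-\pxtc}$ is itself of the order of the relative estimation error and the per-round regret becomes second order in $\norm{\thetats-\widehat\theta_t^{\smo}}$. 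Once these two points are established, the epoch decomposition, the union bounds contributing the $\log(1/\delta)$ factor, and the geometric summation over epochs are routine.
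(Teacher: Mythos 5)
Your epoch decomposition, the greedy inequality $\rho_{n,t}\le\langle \pxtsc-\pxtc,\thetats-\widehat\theta_t^{\smo}\rangle$, and the final multiplication by $M=\lceil\log_2\log_2 N\rceil$ to produce the $(1+\log\log N)$ factor all match the paper (Lemma~\ref{Lemma3} plus the short summation argument). But your treatment of the epochs diverges from the paper in a way that leaves a real gap. The paper does \emph{not} balance epochs against the decaying estimation error, does not project onto a $2r$-dimensional span, and does not need any curvature or second-order argument. Instead, for \emph{every} epoch (first and later alike) it concentrates the whole epoch-and-task sum $\sum_{n}\bigl[\pxtsc^\top(\thetats-\thetahatmt)-\pxtc^\top(\thetats-\thetahatmt)\bigr]$ as a single zero-mean Gaussian with standard deviation $\sqrt{2(\pG_m-\pG_{m-1})}\,\|\thetats-\thetahatmt\|$, so the epoch length enters only through a square root; it then bounds $\|\thetats-\thetahatmt\|\le\max\{\|\thetats\|,\|\thetats-\thetahatmt\|\}\le\mu\sqrt{r/T}\,\sigmax$ uniformly (using $\thetahatmt=0$ in epoch one and Lemma~\ref{new_1}/Theorem~\ref{new_5} afterwards). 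Every epoch therefore contributes the \emph{same} $2\mu\sigmax\sqrt{rNT\log(1/\delta)}$, and the theorem follows by multiplying by $M$ --- the shrinking error and the specific grid $\pG_m=N^{1-2^{-m}}$ play no quantitative role in the bound actually proved.

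The concrete problem with your route is the later epochs. Bounding each round separately by Cauchy--Schwarz and then summing gives a contribution linear in the epoch length, $T(\pG_m-\pG_{m-1})\cdot\max(\epsilon,\epsilon_{noise})\cdot\mu\sqrt{r/T}\,\sigmax$ (up to your $\sqrt r$ feature factor); for the final epoch of length $\Theta(N)$ this matches the claimed $\sqrt{rNT}$ only if $\max(\epsilon,\epsilon_{noise})\lesssim 1/\sqrt{N}$, a condition the theorem does not impose (and which $\epsilon_{noise}=C\kappa^2\sqrt{NSR}$ need not satisfy). So as written your plan does not close without either importing the paper's ``concentrate the whole epoch sum'' step or adding assumptions on $\epsilon$ and the noise. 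The two ``hard parts'' you flag are also left unresolved in your plan; to be fair, the paper resolves neither --- it simply invokes Assumption~\ref{assume:iid} in every epoch and treats $\pxtsc,\pxtc$ as if they were independent of $\thetats-\thetahatmt$ --- so your identification of the adaptivity issue is a legitimate criticism of the paper rather than something its proof handles.
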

Proof of Theorem~\ref{thm-reg} and supporting results are presented in Appendix~\ref{app_reg}.
 Our sample complexity on source task scales sublinearly with $T$ and improves the linear dependence in \cite{yang2020impact}, while the target sample complexity scales with $k$ same as in \cite{yang2020impact}.
 \vspace{-2 mm}
\section{Simulations}\label{sec:sim}
In this section, we present the experimental results of our LRRL-AltGDMin algorithm on both synthetic and real-world MNIST datasets. We performed a comparative analysis of our algorithm with the  Method-of-Moments (MoM) algorithm proposed in \cite{yang2020impact, tripuraneni2021provable}, the trace-norm convex relaxation-based approach in \cite{cella2023multi}, along with a baseline naive approach. The naive approach utilizes the Thompson Sampling (TS) algorithm to solve $T$ tasks independently. All experiments were conducted using Python.
%The MLinGreedy Algorithm is based
%on solving the nonconvex optimization problem given in Eq.~\eqref{eq:cost}. The results in \cite{yang2020impact} considered the optimal solution to Eq.~\eqref{eq:cost} is available through an estimator, however, the estimation approach is not detailed. 
%to solve the nonconvex optimization problem where we use the Alternating Gradient Descent method for optimization. 
%On the other hand, the Method-of-Moments Algorithm is specifically designed for problems with an infinite action set. 

\subsection {Datasets}
\noindent{\bf Synthetic data:} 
We set the parameters as $d = 100$, and $K = 5$.
%Note that,  $\Theta^\star = B^\star {W}^{\star}$, where $B^\star$ is a $d \times r$ orthonormal matrix.
We generate the entries of $B^\star$ by orthonormalizing an i.i.d standard Gaussian matrix. The entries of  ${W}^{\star} \in \mathbb{R}^{r \times T}$ are generated from an i.i.d. Gaussian distribution. The matrices $\Phi_t$s
 were i.i.d. standard Gaussian.
 %We used a standard Gaussian distribution to generate our feature vector $\phi$. Subsequently, we started the procedure by generating a random orthogonal matrix $B^\star$ and a random matrix $W^\star$. The matrices were multiplied to generate the low-rank matrix $\Theta^\star$. 
 We considered a noise model with a mean of $0$ and a variance of $10^{-6}$ for the bandit feedback noise. The experiments were averaged over 100 independent trials. The plots also include the variance over the trials.
 In the synthetic experiment, we also considered another dataset with a smaller problem dimension $d = 20$ and $K = 5$. 
%The smaller input dimension $d=20$ is chosen such that we can implement the approach in \cite{cella2023multi} and compare our algorithm with it. The plots corresponding to this setting is presented in }
%The true parameter $\Theta^\star$ can be represented as $\Theta^\star = B^\star {W}^{\star}$, where $B^\star \in \mathbb{R}^{d \times r}$ is a orthonormal matrix. The orthonormal basis was obtained by orthonormalizing an i.i.d standard Gaussian matrix. The elements of the matrix ${W}^{\star} \in \mathbb{R}^{r \times T}$ were randomly chosen from an i.i.d Gaussian distribution. Throughout the experiments, we used i.i.d standard Gaussian matrices for the $\Phi_t$s. The noise in the bandit feedback was represented by a Gaussian distribution with a mean of zero and a variance of $10^{-6}$. The results were calculated by taking the average of 100 independent trials and including the observed variation among these trials. }

 %We set the  step size of GD for the $g$-th agent as  $\eta^\sg = 0.4/n {\widehat{\sigma}}^{\star^2}_{\max}{}$, where ${\widehat{\sigma}}^{\star^2}_{\max}{}$ is obtained as the largest diagonal entry of ${\R}^\sg_{T_{pm}}$.
\noindent{\bf MNIST data:} 
We used the MNIST dataset to validate the performance of our algorithm when implemented with real-world data. We set the number of actions $K = 2$ and created a total of $T = \binom{10}{2}$ tasks similar to \cite{yang2020impact}. Each task is characterized by a distinct pair $(i, j)$, where $0 \leqslant i < j \leqslant 9$. The set of MNIST images that represent the digit $i$ is denoted as $D_i$. For each round $n \in [N]$, we randomly choose one image from the set $D_i$ and another image from the set $D_j$ for every task $(i, j)$. The algorithm is presented with two images, and it assigns a reward of $1$ to the image with the larger digit value and a reward of $0$ to the other image. The feature matrix of each image is transformed into a feature vector $\phi \in \bR^{784}$ through vectorization. In order to calculate the estimated reward, we add random Gaussian noise with a mean of $0$ and a variance of $10^{-6}$. 

\subsection{Results and Discussions}

\begin{figure*}[hbt!] 
%\vspace{-3 mm}
\subcaptionbox{\footnotesize Synthetic data: rank $r=2$ \label{fig:1}}{\includegraphics[scale=0.2]{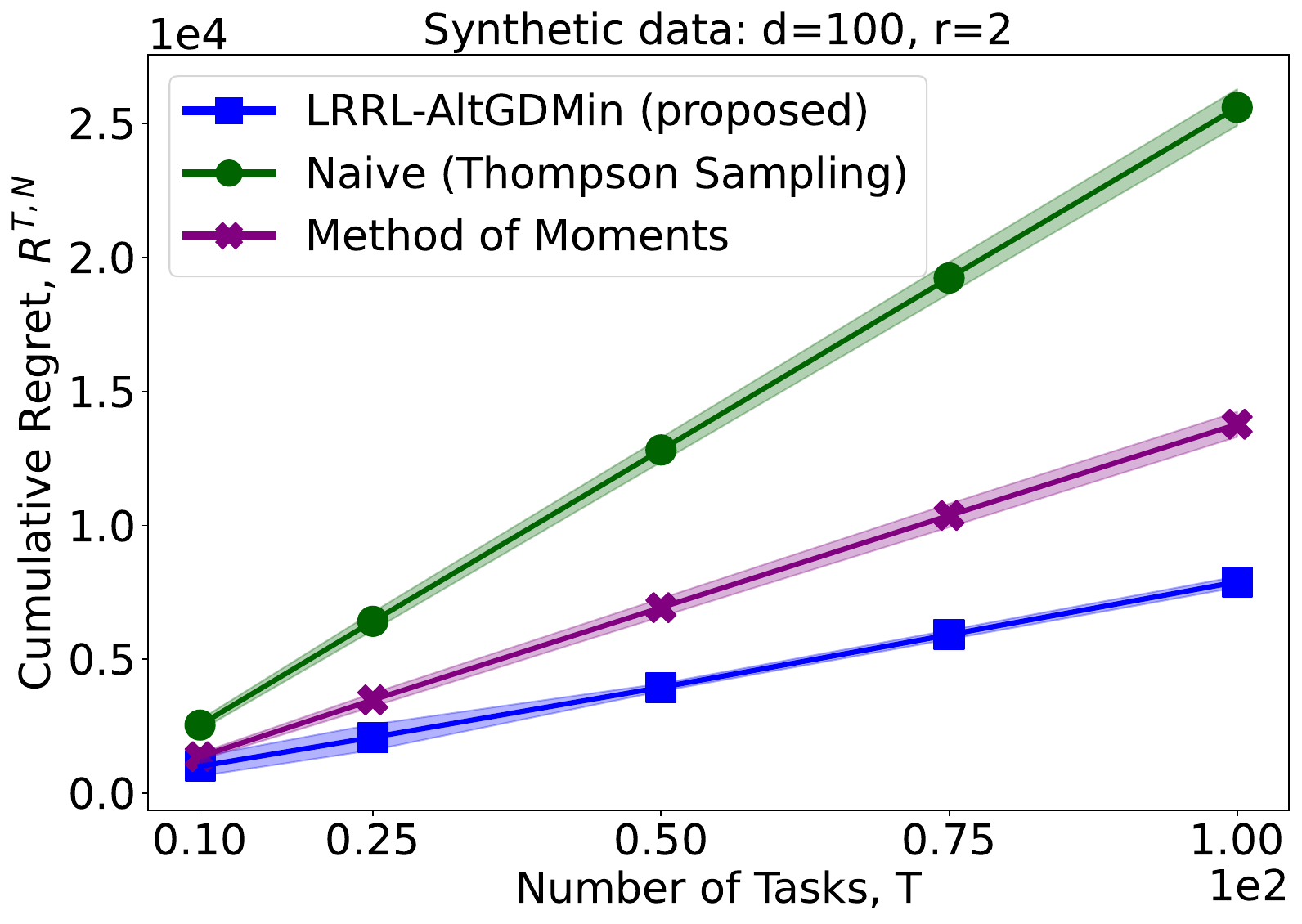}}\hspace{1.0 em}%
\subcaptionbox{\footnotesize Synthetic data: rank $r=4$ \label{fig:2}}{\includegraphics[scale=0.2]{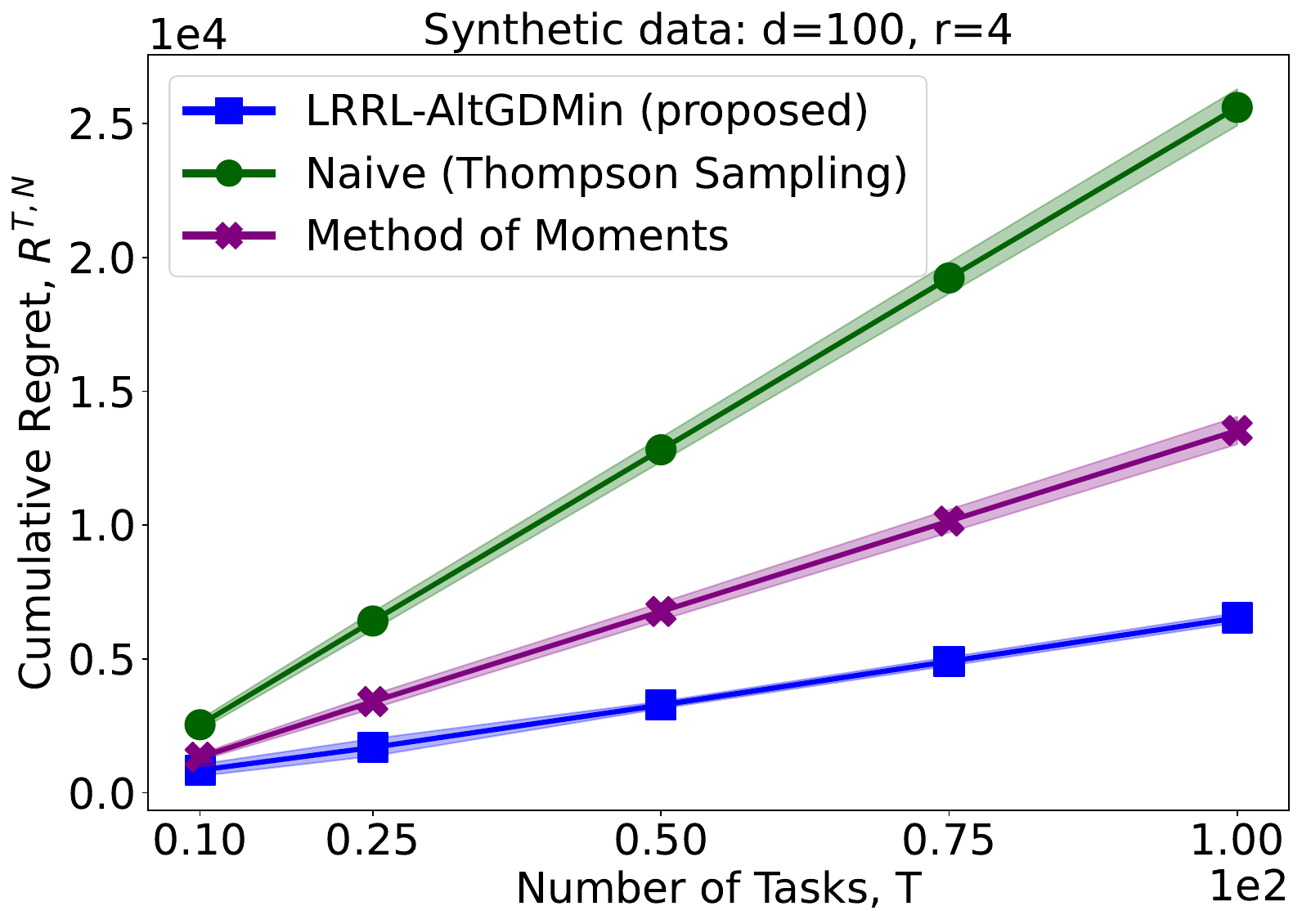}}\hspace{1.0 em}%
\subcaptionbox{\footnotesize Synthetic data: rank $r=8$ \label{fig:3}}{\includegraphics[scale=0.2]{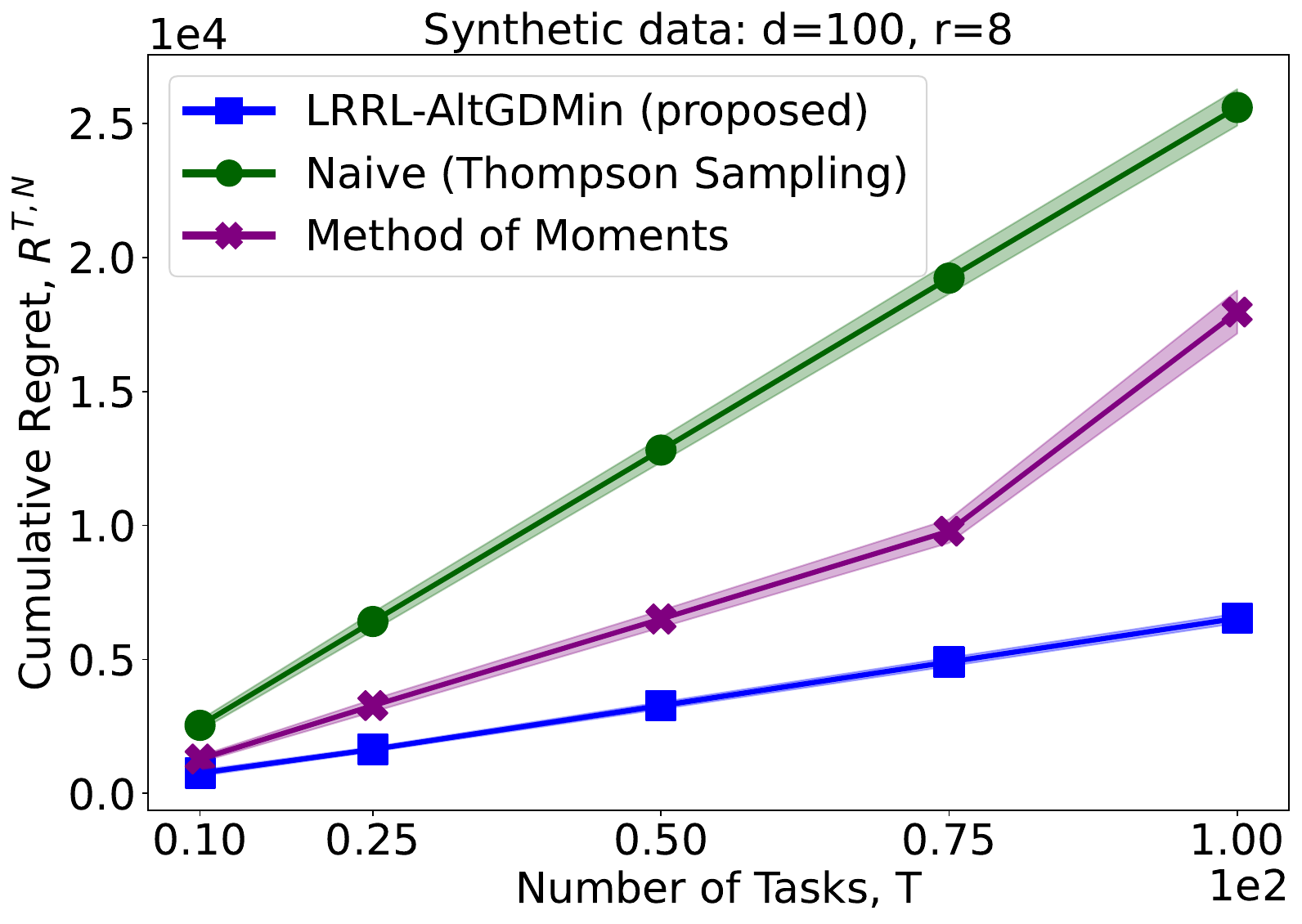}}\hspace{1.0 em}
\vspace{ 2mm}
\subcaptionbox{\footnotesize MNIST data: rank $r=2$ \label{fig:4}}{\includegraphics[scale=0.2]{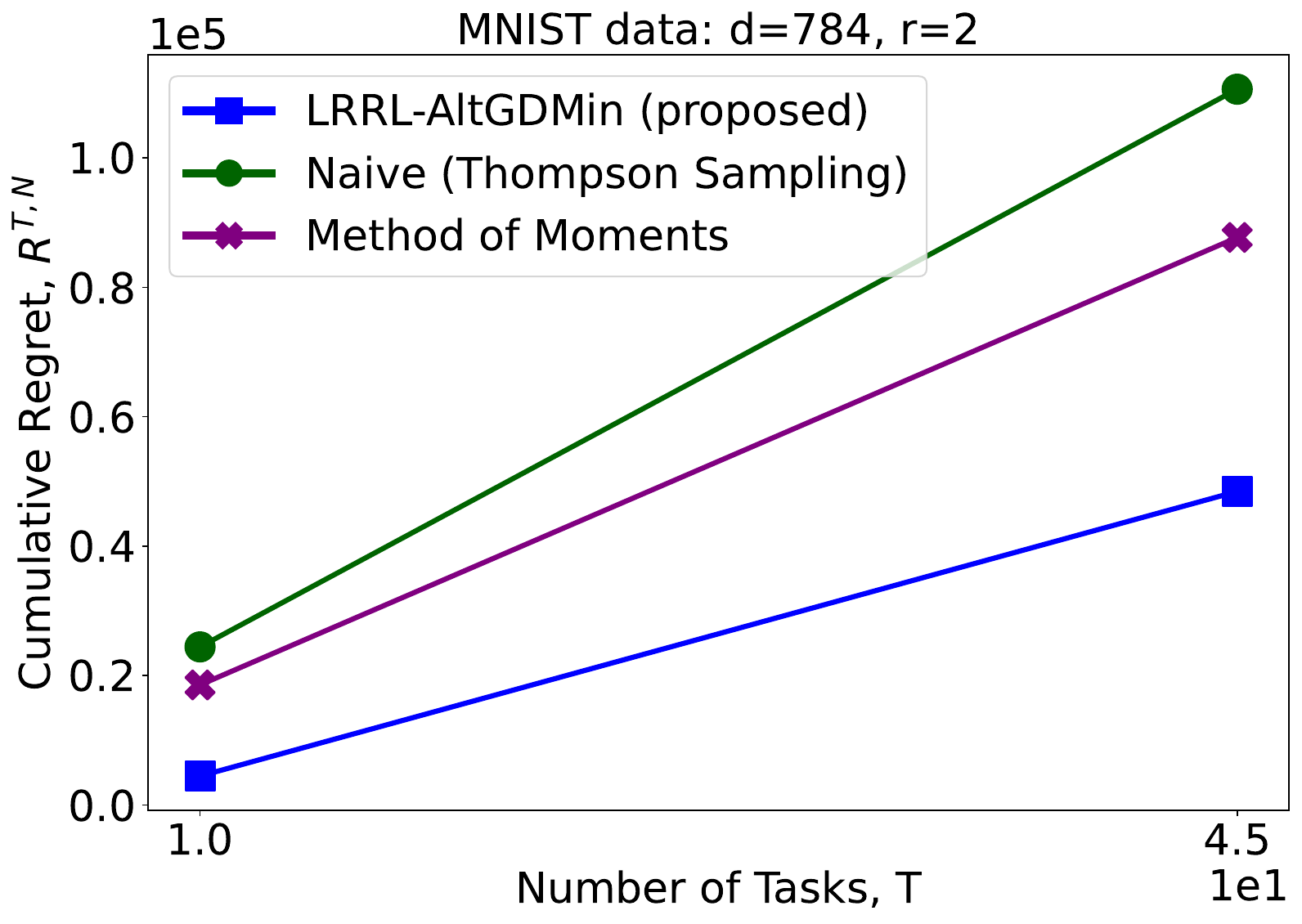}}\hspace{0.8 em}%
\subcaptionbox{\footnotesize MNIST data: rank $r=4$ \label{fig:5}}{\includegraphics[scale=0.2]{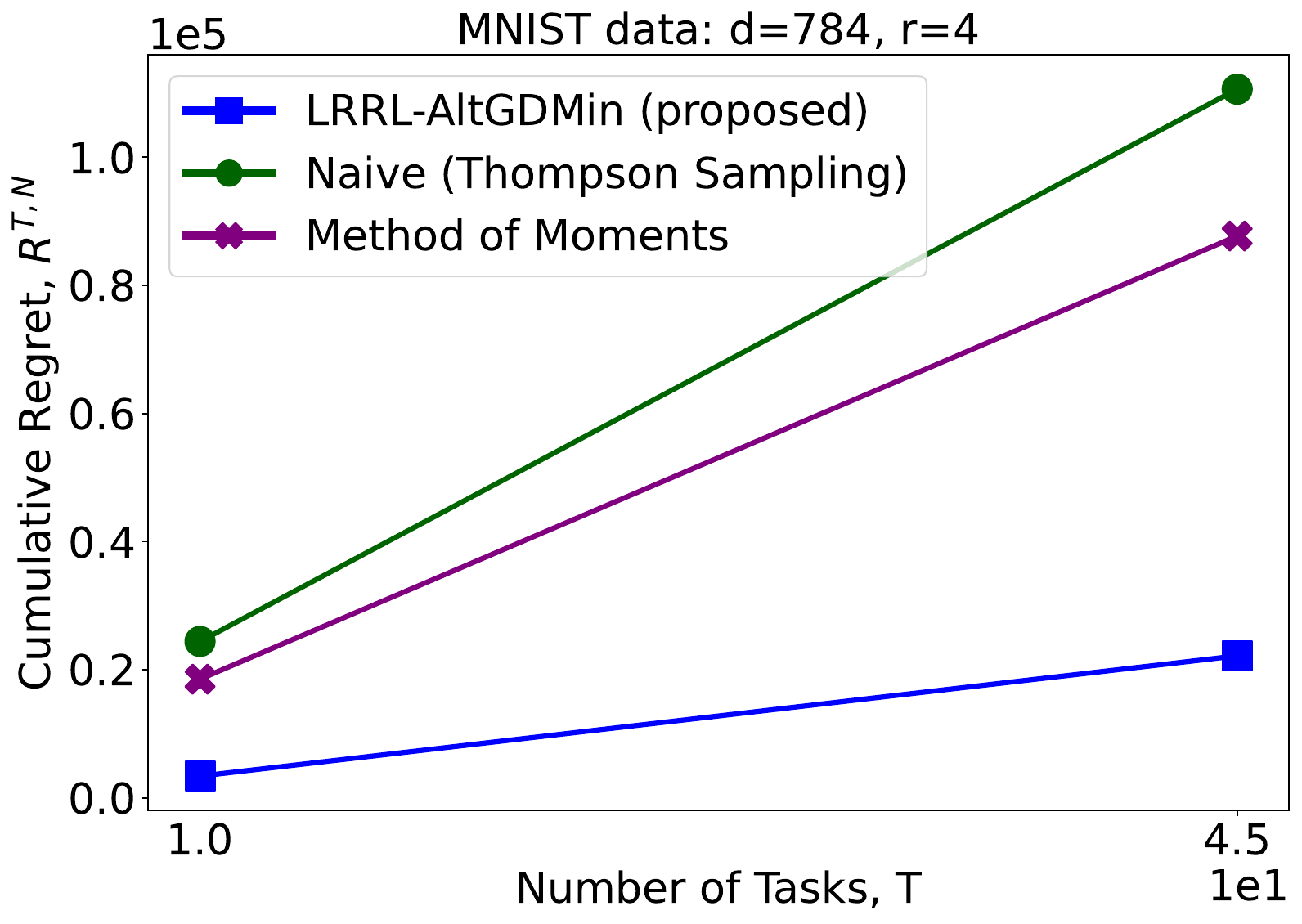}}\hspace{0.8 em}%
\subcaptionbox{\footnotesize MNIST data: rank $r=8$ \label{fig:6}}{\includegraphics[scale=0.2]{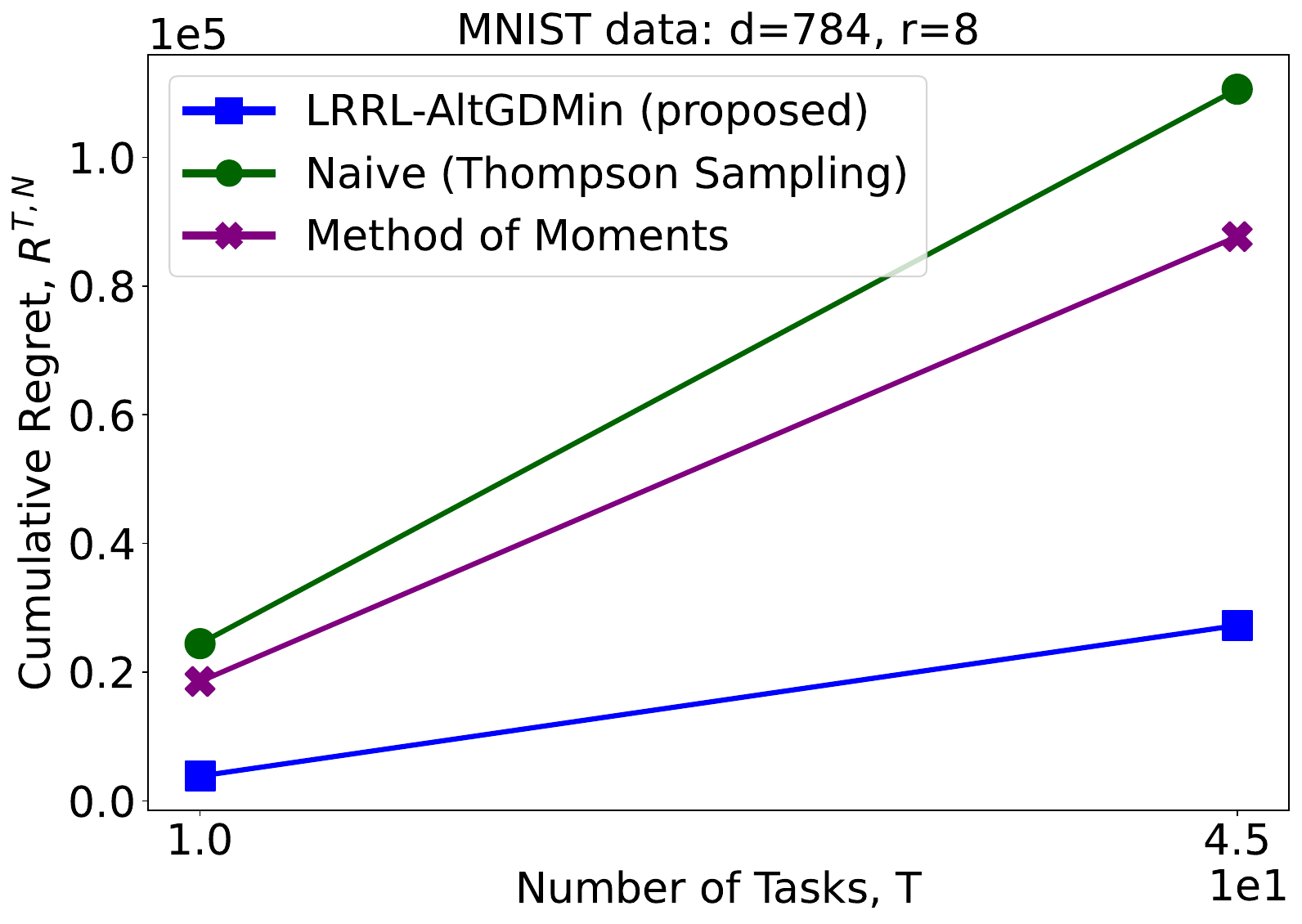}}\hspace{0.8 em}
\vspace{ 2mm}
\subcaptionbox{\footnotesize Synthetic data: rank $r=2$ \label{fig:7}}{\includegraphics[scale=0.2]{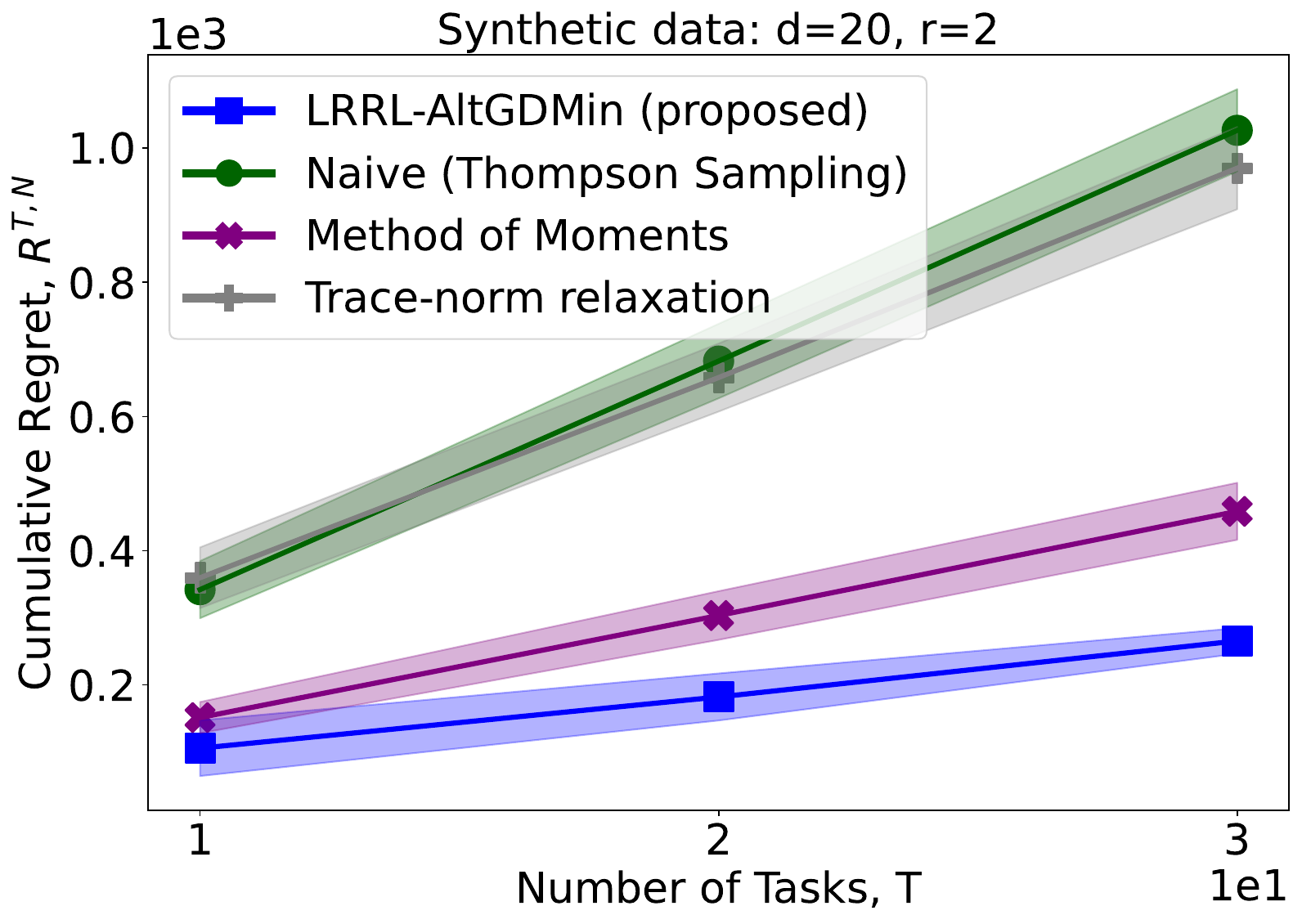}}\hspace{1.0 em}%
\subcaptionbox{\footnotesize Synthetic data: rank $r=3$ \label{fig:8}}{\includegraphics[scale=0.2]{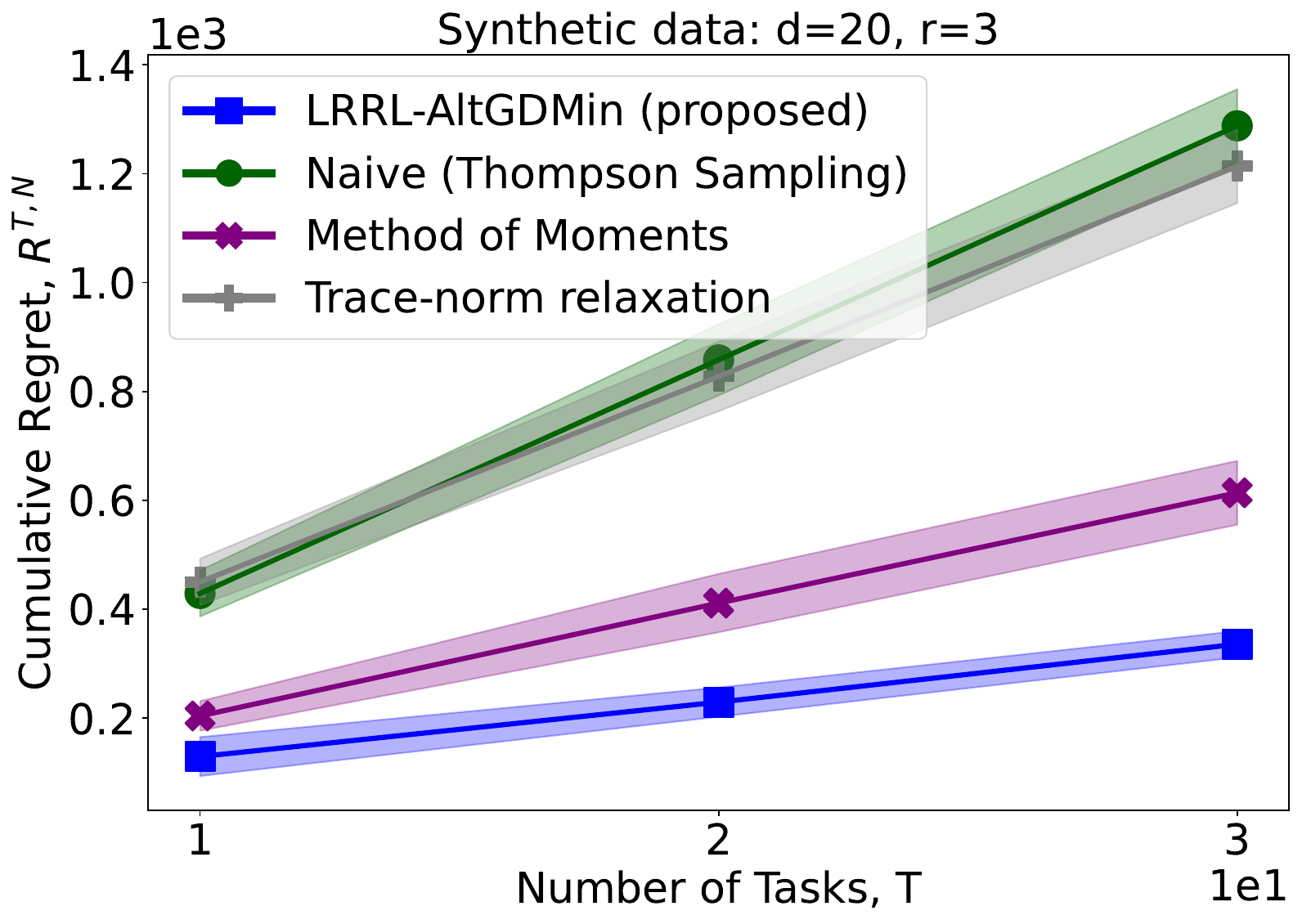}}\hspace{1.0 em}%
\subcaptionbox{\footnotesize Synthetic data: rank $r=4$ \label{fig:9}}{\includegraphics[scale=0.2]{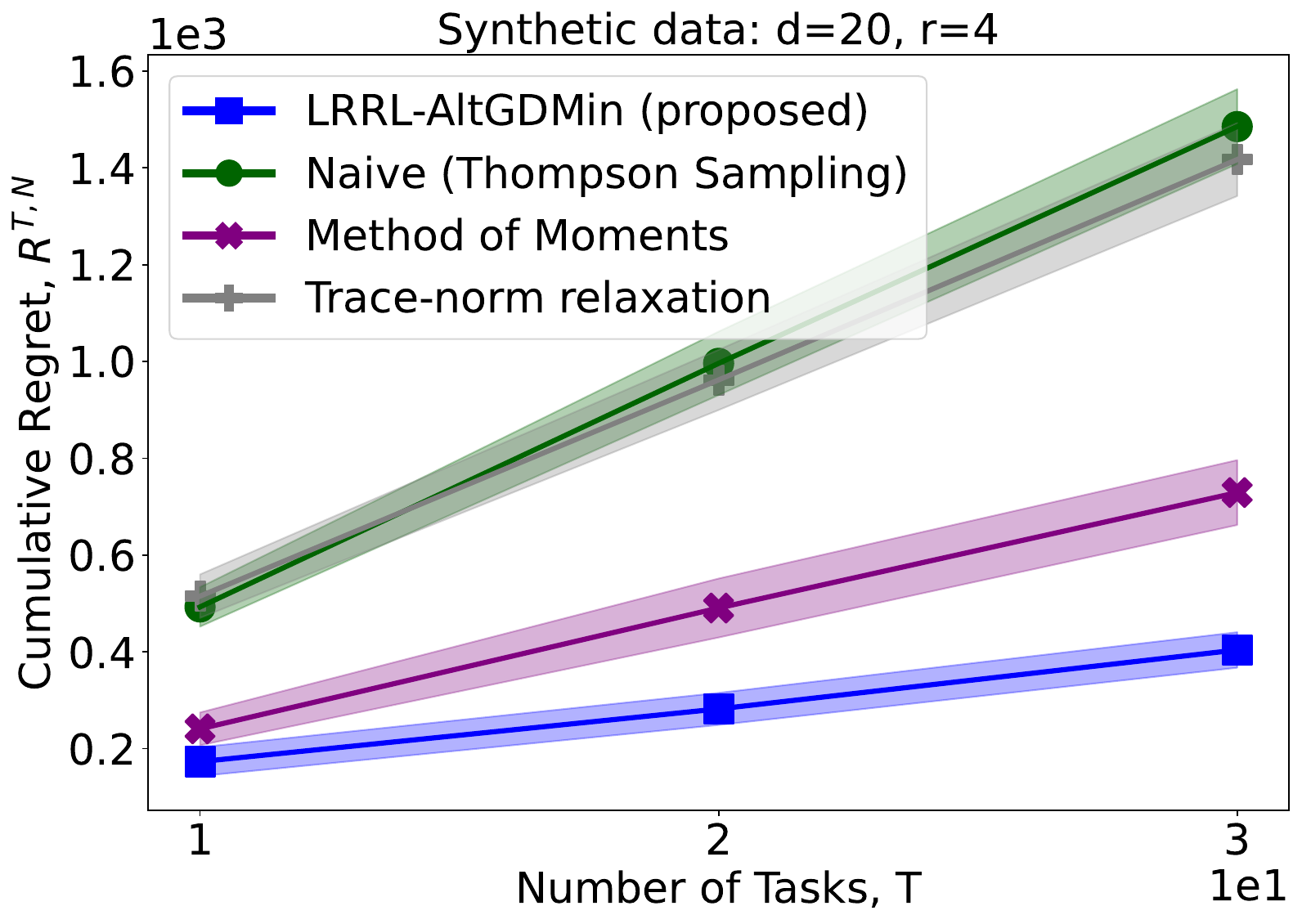}}\hspace{1.0 em}
\vspace{-2 mm}
\caption{\small 
 {\bf Synthetic data 1:} We set the parameters as $d = 100$, $K = 5$, $N=200$, and noise variance $= 10^{-6}$. We considered $M=4$ epochs each with $50$ data samples each. We varied the number of tasks as $T=10,25,50,75,100$. We also varied the rank of the feature matrix as $r=2,4,8$. As shown in the plots (Figures~\ref{fig:1}, \ref{fig:2}, and~\ref{fig:3}), our proposed approach outperforms the existing benchmarks. {\bf MNIST data:} Parameters are $d = 784$, $K = 2$, $N=5000$, and noise variance $= 10^{-6}$. We considered $M=5$ epochs each with $1000$ data samples each. We varied the number of tasks as $T=10, 45$. We also varied the rank of the feature matrix as $r=2,4,8$. The plots for MNIST data are presented in Figures~\ref{fig:4}, \ref{fig:5}, and~\ref{fig:6}.  {\bf Synthetic data 2:} We consider a smaller problem dimension here and also compare with the trace-norm relaxation method. In Figures~\ref{fig:7}, \ref{fig:8}, and~\ref{fig:9}, we set $d = 20$, $K = 5$, $N=40$. We considered $M=4$ epochs each with $10$ data samples each, thus $N=40$.}\label{fig:main1}
\end{figure*}

\noindent{\bf Estimation error.} We compared the estimation performance of our proposed LRRL-AltGDMin estimator with three existing approaches: (i)~an alternating  GD (LRRL-AltGD) estimator, (ii)~Method-of-Moments (MoM) based estimator, and (iii)~trace-norm convex relaxation-based estimator. The LRRL-AltGD is based on the alternating gradient descent algorithm proposed in \cite{yi2016fast} for solving the low-rank matrix completion problem. LRRL-AltGD alternatively solves for $\wB$ and $\wW$ in Eq.~\eqref{eq:cost}. The MoM estimator estimates the matrix $\wB$ using the top-$r$ Singular Value Decomposition (SVD) of $\Thetahat=\frac{1}{NT}\sum_{\nt} y_{\nt}^2 \phi(x_{n, t}, c_n)\phi(x_{n, t}, c_n)^\top$. 
Then, it proceeds to calculate the estimated matrix $\wW$ through the method of least squares estimator in order to determine the values of $\Thetahat$.
 The trace-norm technique relaxes the rank constraint to a trace-norm convex constraint and then iteratively solves for the estimate $\Thetahat$ and the regularizing parameter $\lambda$. 
We initialized the LRRL-AltGD algorithm using our proposed spectral initialization approach (Algorithm~\ref{alg2}). This is because spectral initialization guarantees a good initialization for solving the nonconvex problem.

\begin{figure*}[hbt!]
\centering
\subcaptionbox{\footnotesize Error vs. GD iteration for epoch-1 \label{fig:est1}}{\includegraphics[scale=0.19]{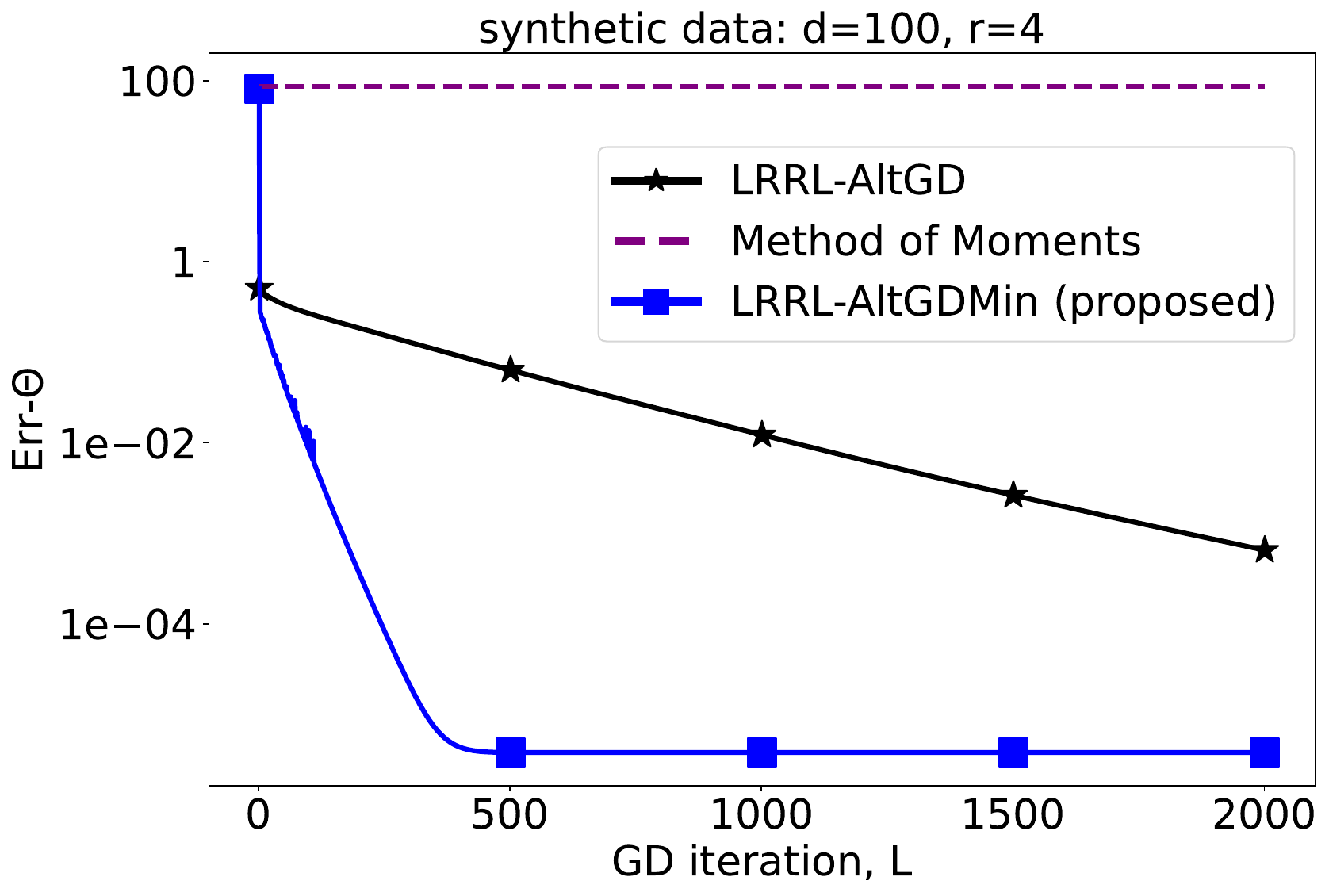}}
%\hspace{2 em}%
\subcaptionbox{\footnotesize Error vs. Epoch \label{fig:est2}}{\includegraphics[scale=0.19]{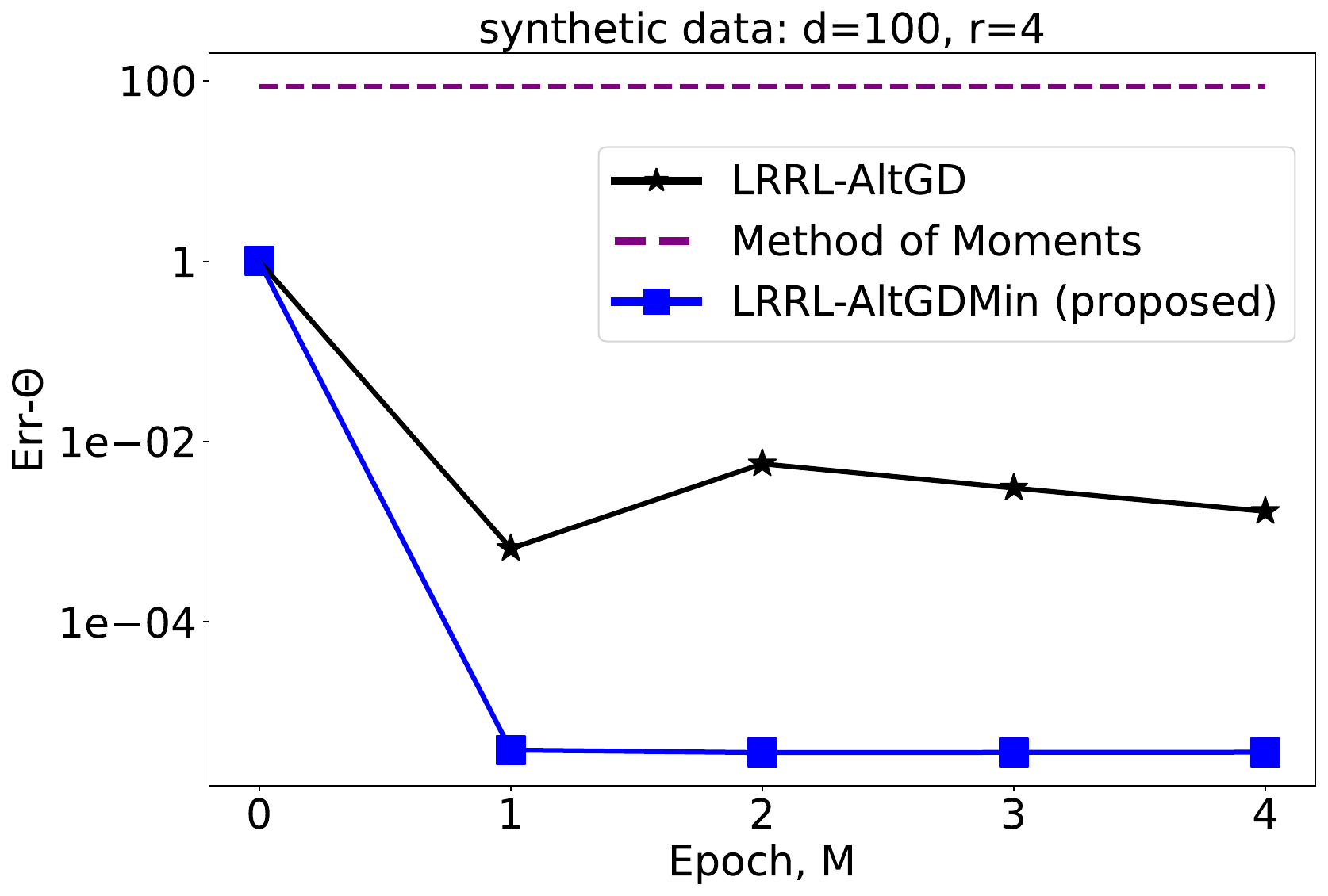}}
%\hspace{1.2 em}%
\subcaptionbox{\footnotesize Error vs. Number of tasks \label{fig:est5}}{\includegraphics[scale=0.19]{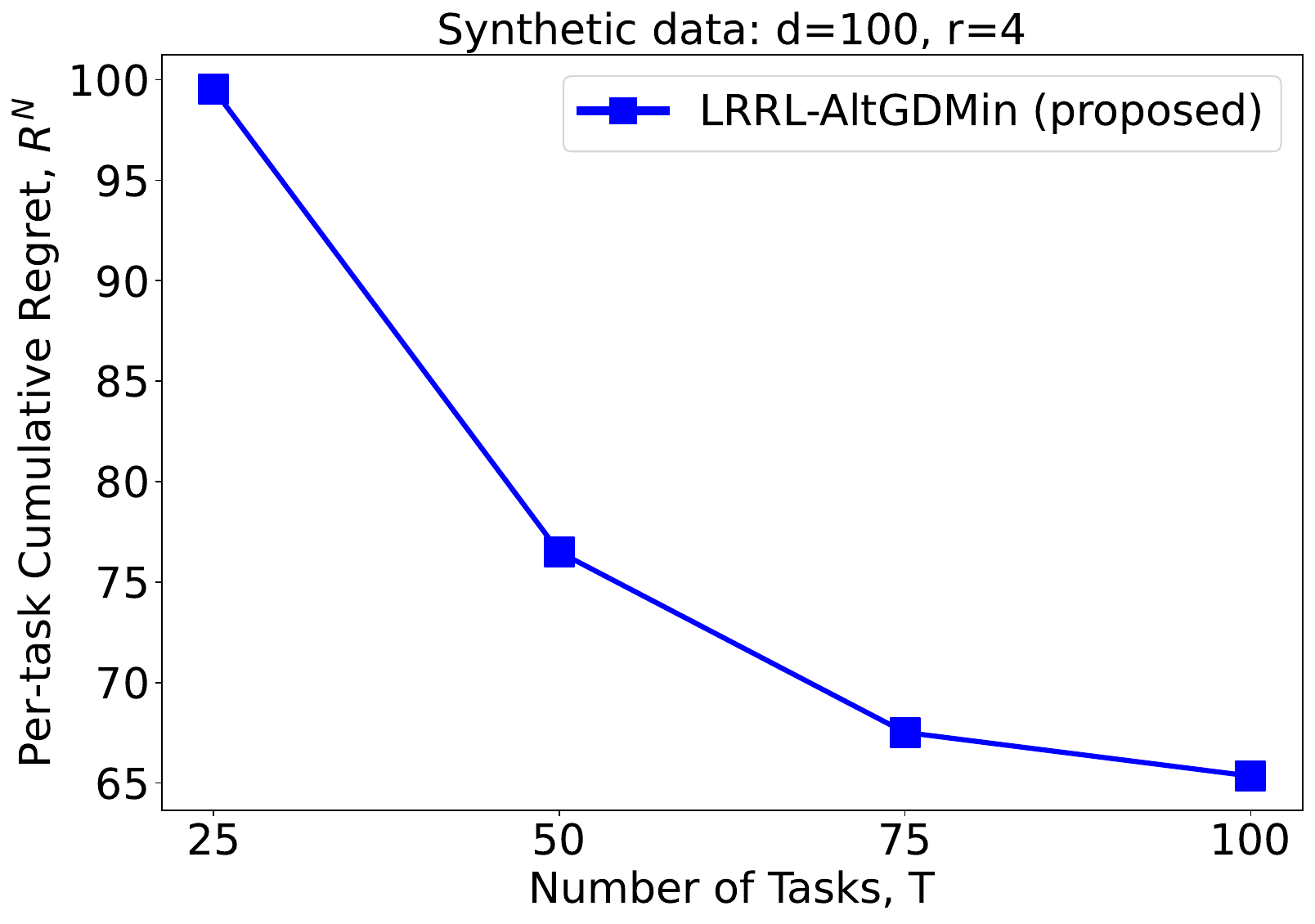}}

\subcaptionbox{\footnotesize Error vs. GD iteration for epoch-1 \label{fig:est4}}{\includegraphics[scale=0.23]{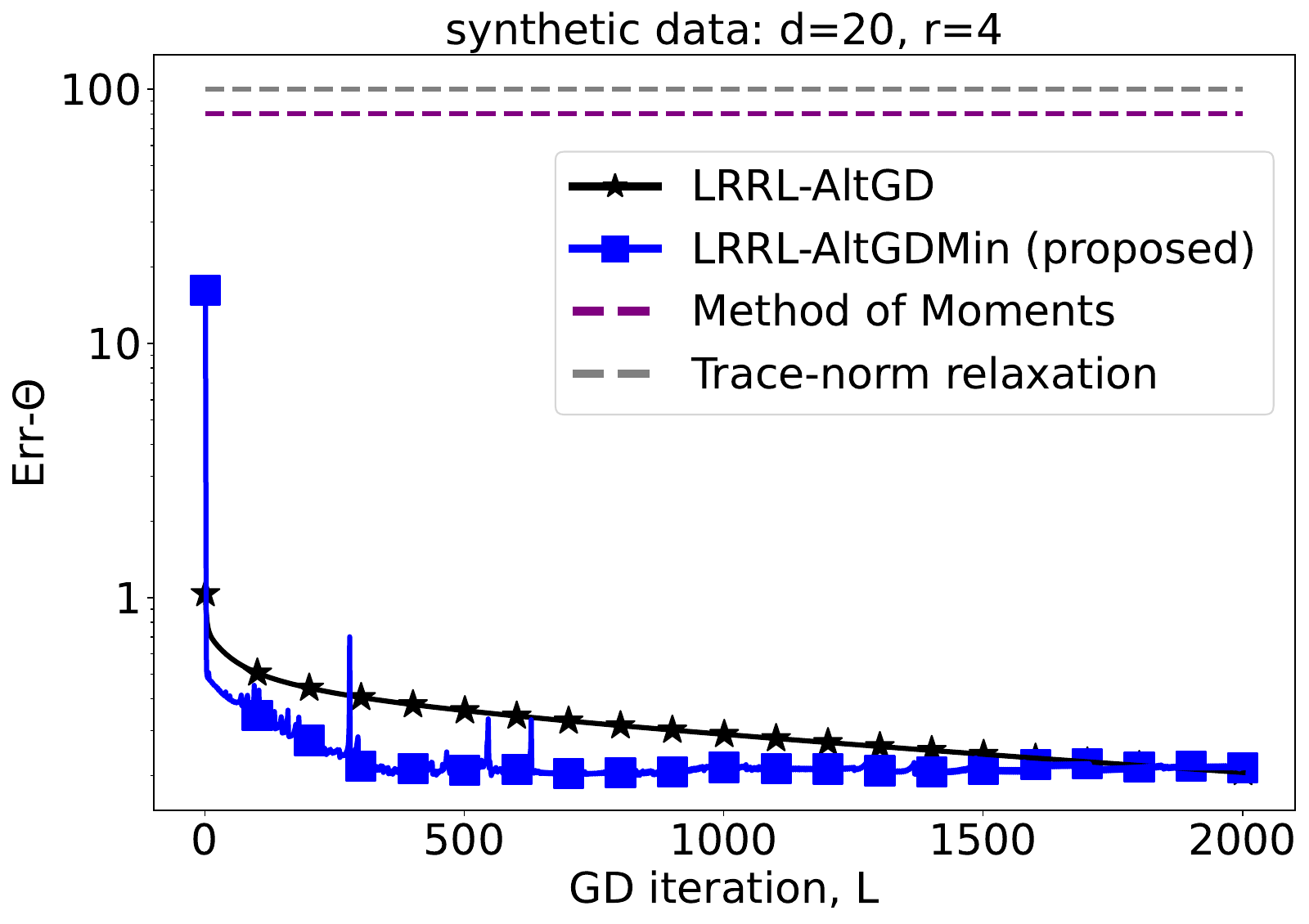}}
\hspace{2 em}%
\subcaptionbox{\footnotesize Error vs. Epoch \label{fig:est3}}{\includegraphics[scale=0.23]{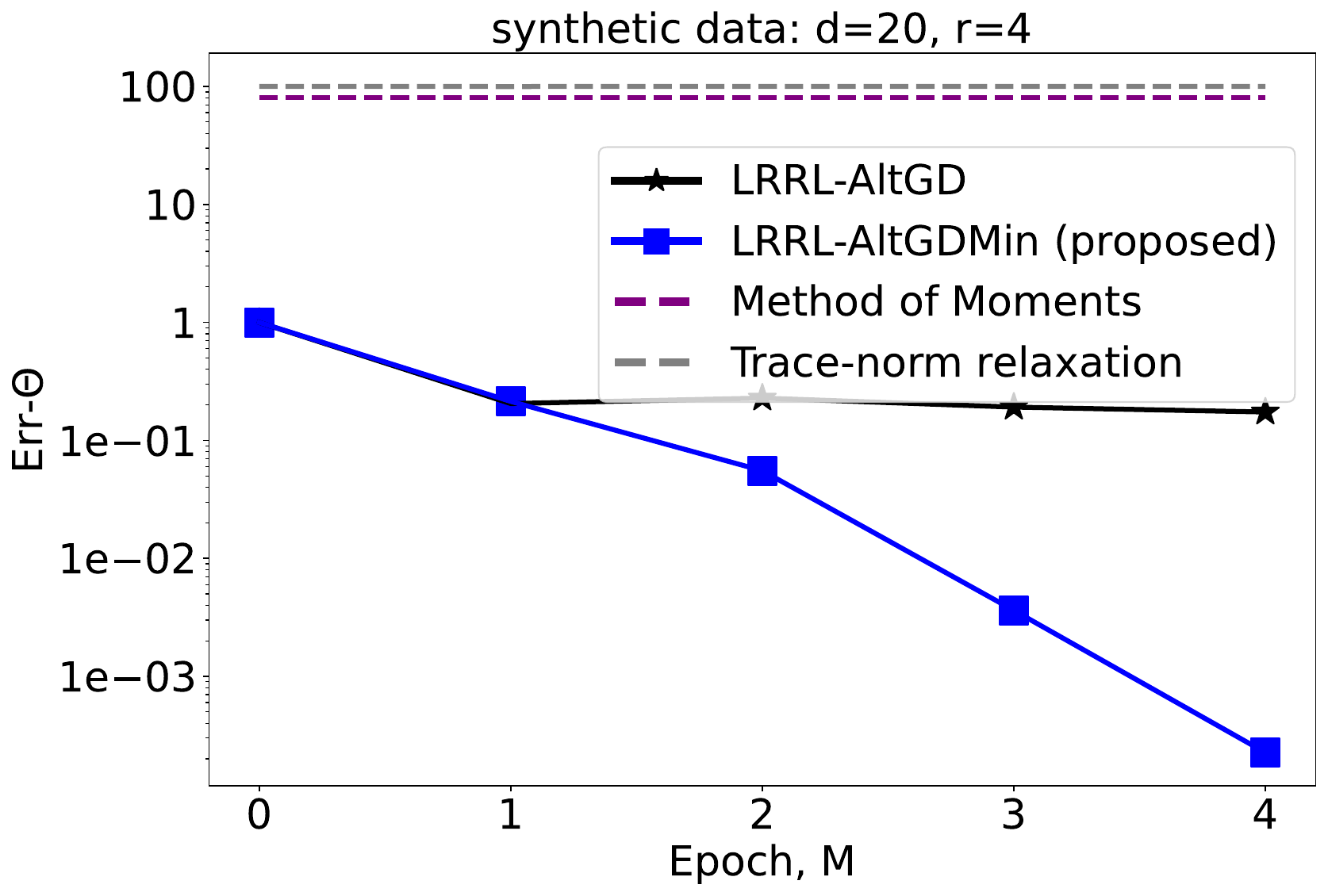}}

\hspace{1.2 em}%
%\hspace{1.2 em}%
\caption{\small 
{\bf Synthetic data 1:} In Figures~\ref{fig:est1} and~\ref{fig:est2}, we set the parameters as $d = 100, T=100$, $K = 5$, $N=200$, and noise variance $= 10^{-6}$. We ran for $L=2000$ GD iterations. We considered $M=4$ epochs each with $50$ data samples each. 
 In Figure~\ref{fig:est5}, we separately present the per-task regret vs. number of task plot for $d=100$, $K = 5$, $N=100$ (also shown in figure~\ref{fig:9}) to showcase the sublinear decay.
 {\bf Synthetic data 2:} We consider a smaller problem dimension and also compare with the trace-norm relaxation method. In Figures~\ref{fig:est3} and~\ref{fig:est4}, we set the parameters as $d = 20, T=30$, $K = 5$, $N=40$, and noise variance $= 10^{-6}$. We ran for $L=2000$ GD iterations. We considered $M=4$ epochs each with $10$ data samples each, thus $N=40$. 
 As expected, the estimation error for our proposed algorithm saturates close to the noise. }\label{fig:main2}
\end{figure*}
We plot the empirical average of $\|\Theta -\Theta^\star\|_F /\|\Theta^\star\|_F$ at each iteration $\ell$ (Err-$\Theta$ in the plots) on the y-axis and the iteration by the algorithm until GD iteration $\ell$ on the x-axis. Averaging is over a 100 trials.
We note that while LRRL-AltGD, trace-norm, and LRRL-AltGDMin are iterative algorithms, the MoM estimator is non-iterative. To showcase the baselines in our plots, we also show the error achieved by the MoM estimator.  Figure~\ref{fig:main2} presents the error plot. Figure~\ref{fig:est1} presents the  Err-$\Theta$ vs. GD iteration for the first epoch. In Figure~\ref{fig:est2}, we present the Err-$\Theta$ vs. epoch. We set a total of $5$ epochs, including the zeroth epoch, which is the initialization step.
From the plots, we notice that the proposed LRRL-AltGDMin estimator outperforms both the benchmark approaches. Further, the estimation error saturates close to $10^{-6}$. This can be explained using our result, Theorem~\ref{new_5}, which shows that the error decays exponentially until it reaches the (normalized) ``noise-level"  ${\sigma_\eta}^2/\|\thetats\|^2$, but saturates after that. Although the error in the trace-norm approach improves as the iteration progresses, the improvement is very minimal.

%{Comparison of LRRL-AltGDMin algorithm with benchmarks}
%In our LRRL-AltGDMin algorithm, after the initialization epoch, each iteration begins with a minimization procedure that utilizes the current estimated matrix $\wB$ to calculate $\wW$. Next, the estimated matrix $\wB$ is updated using the Gradient Descent algorithm. In contrast, the MLinGreedy algorithm utilizes an Alternating Gradient Descent method to update both $\wB$ and $\wW$ separately through Gradient Descent after initialization in each iteration. 

\noindent{\bf Cumulative regret.}
We compared the performance of our proposed algorithm against three benchmarks: the Method-of-Moments (MoM)-based representation learning algorithm for bandits in \cite{yang2020impact, tripuraneni2021provable},  a Thompson Sampling (TS) algorithm that solves the $T$ tasks separately, and the trace-norm relaxation-based approach in \cite{cella2023multi}.
%The MoM begins by initially determining the estimated matrix $\wB$ using the top-$r$ Singular Value Decomposition (SVD) of $\Thetahat=\frac{1}{N}\sum_{\nt} y_{\nt}^2 \phi(x_{n, t}, c_n)\phi(x_{n, t}, c_n)^\top$. Then, it proceeds to calculate the estimated matrix $\wW$ through the method of least squares estimator in order to determine the values of $\Thetahat$. 
As noted, the MoM-based algorithm only estimates the unknown feature matrix $\Thetahat$ in the first epoch.
In subsequent epochs, this $\Thetahat$ is consistently used to choose actions. On the other hand, the naive approach implements the TS method separately to determine the estimate of $\thetats$ for each task $t \in [T]$. 
The approach in \cite{cella2023multi} considered a trace-norm convex relaxation of the original non-convex cost function (Eqs.~(4) and~(11) in \cite{cella2023multi}).
Figure~\ref{fig:main1} presents the cumulative regret plots for the different algorithms. We varied the number of tasks and the rank of the feature matrix and compared the results of our proposed algorithm with the MoM-based, trace-norm relaxation, and TS-based algorithms. 
Our plot demonstrates that as the number of tasks increases, the advantage of the proposed LRRL-AltGDMin algorithm increases compared to the naive approach, the MoM, and the trace-norm relaxation approaches. We varied the rank $r$ and compared the performances. 
%Further, as the rank  $r$ increases (relative to $d$), the advantage of our algorithm with respect to the MoM-based algorithm increases.  The result in \cite{yang2020impact} can explain this: as the rank is larger, the algorithm pays more regret. 
The performance of the TS algorithm is unaltered by varying the rank.
This is expected since the regret of the TS algorithm does not depend on the rank.
In all experiments, our algorithm consistently outperforms the benchmarks, validating its effectiveness.
%, especially in cases where the number of tasks is large.
%This result is consistent with our expectations, as a larger number of tasks implies better collaboration, thus enabling more extensive learning within a limited number of iterations. 

\section{Conclusion and Future Work}\label{sec:conc}
In this work, we introduced an alternating gradient descent and minimization algorithm for multi-task representation learning in linear contextual bandits.  Leveraging this estimator, we developed a bandit algorithm and established its regret bound for low dimensional contextual bandits. Our approach consistently outperformed existing methods in numerical experiments.  Inspired by \cite{hu2021near}, as part of our future work, we plan to extend our algorithm to an upper confidence bound-based approach by computing the confidence interval. 
Further, one of the very interesting future directions is to relax the i.i.d standard Gaussian assumption on the feature vectors.
While this assumption holds for the initial epoch during the random exploration, it becomes restrictive when we perform greedy exploration in subsequent epochs. As part of our future work, we intend to explore methods for relaxing the i.i.d assumption for epochs after the first one. One potential direction is to fix the $B$ estimate and solve only for $W$ after epoch one, similar to few-shot learning and online subspace tracking \cite{lrpr_gdmin_mri_jp}.

%{\cblue \bf{Confidence Interval: } Inspired by \cite{hu2021near}, we can extend our algorithm to develop an upper confidence bound-based approach. Applying Lemma~\ref{new_1} in our paper, for $\widetilde{V}_t = \phimt{t}{m-1} {\phimt{t}{m-1}}^\top + \lambda I$, with probability at least $O(1 - \exp(\log T + r - c \epsilon_3^2 (\pG_m - \pG_{m-1})))$, we get $\sum_{t=1}^T \|\Bm{m-1} \wm{m-1}_t - B^{\star} w_t^{\star}\|_{\widetilde{V}_t}^2 = O(r + r (\pG_{m-1} - \pG_{m-2}))$. Upon comparing our results with Lemma~1 from \cite{hu2021near}, it is obvious that our bound of $O(r + r (\pG_{m-1} - \pG_{m-2}))$ achieves a tighter confidence interval. Hence, the investigation of confidence interval provides an attractive prospect for future work. We plan to do further investigation in our subsequent studies. } 

\section*{Acknowledgements}
We would like to thank the anonymous reviewers for the helpful comments and suggestions that helped improve the paper.
S. Moothedath and N.Vaswani acknowledge funding from NSF Award-2213069.

\section*{Impact Statement}
This paper presents work whose goal is to advance the field of Machine Learning. There are many potential societal consequences of our work, none which we feel must be specifically highlighted here.

\bibliography{Bandits}
\bibliographystyle{icml2024}

\newpage
\appendix
\onecolumn

\section{Preliminaries}
%\doublespacing

\begin{proposition}[Theorem~2.8.1, \cite{vershynin2018high}] \label{proposition1}
Let $X_1, \cdots, X_N$ be independent, mean zero, sub-exponential random variables. Then, for every $g \geqslant 0$, we have
$$
\mathbb{P} \Bigl\{ |\sum_{i=1}^N X_i| \geqslant g \Bigl\} \leqslant 2 \exp \left[-c \min \left( \frac{g^2}{\sum_{i=1}^N \norm{X_i}_{\psi_1}^2}, \frac{g}{\max_i \norm{X_i}_{\psi_1}} \right) \right], 
$$
where $c > 0$ is an absolute constant. 
\end{proposition}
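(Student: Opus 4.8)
The plan is to use the classical Chernoff / moment-generating-function (MGF) method, which is the standard route for Bernstein-type inequalities for sub-exponential sums. Write $K_i := \norm{X_i}_{\psi_1}$ and $K := \max_i K_i$. The first step is to record the MGF estimate for a single mean-zero sub-exponential variable: there exist absolute constants $C, c_0 > 0$ such that
$$
\E\left[e^{\lambda X_i}\right] \leqslant \exp(C \lambda^2 K_i^2) \quad \text{whenever} \quad |\lambda| \leqslant \frac{c_0}{K_i}.
$$
This follows from the equivalence of the defining characterizations of the $\psi_1$-norm. Concretely, I would expand $e^{\lambda X_i} = 1 + \lambda X_i + \sum_{k \geqslant 2} (\lambda X_i)^k / k!$, use $\E[X_i] = 0$ to kill the linear term, bound the moments $\E|X_i|^k \leqslant (C' k)^k K_i^k$ coming from the moment characterization of sub-exponentiality, and sum the resulting series, which converges to the stated form once $|\lambda|$ is small enough.

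Next I would apply the Chernoff bound together with independence. For $\lambda \geqslant 0$,
$$
\mathbb{P}\Bigl\{\sum_{i=1}^N X_i \geqslant g\Bigr\} \leqslant e^{-\lambda g} \prod_{i=1}^N \E\left[e^{\lambda X_i}\right] \leqslant \exp\Bigl(-\lambda g + C \lambda^2 \textstyle\sum_{i=1}^N K_i^2\Bigr),
$$
valid for all $0 \leqslant \lambda \leqslant c_0 / K$, since this range ensures the per-term MGF bound holds simultaneously for every $i$.

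The crux is the optimization of the exponent over the \emph{restricted} range $[0, c_0/K]$, which is exactly what produces the two regimes inside the minimum. Let $S := \sum_i K_i^2$. The unconstrained minimizer of $-\lambda g + C \lambda^2 S$ is $\lambda^\star = g/(2 C S)$. If $\lambda^\star \leqslant c_0/K$ (i.e. $g$ is small relative to $S$), I plug in $\lambda = \lambda^\star$ and obtain the sub-Gaussian bound $\exp(-g^2/(4 C S))$. Otherwise ($g$ large), the objective is still decreasing at the boundary, so I set $\lambda = c_0/K$; using that $g > 2 C S c_0/K$ in this regime lets me absorb the quadratic term and gives the sub-exponential bound $\exp(-c_0 g/(2K))$. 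The two cases merge into a single $\exp(-c \min(g^2/S,\, g/K))$ for an appropriate absolute constant $c$.

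Finally I would symmetrize: running the identical argument on $-X_i$ bounds the lower tail $\mathbb{P}\{\sum_i X_i \leqslant -g\}$ by the same quantity, and a union bound over the two one-sided events yields the factor of $2$ and the two-sided statement with $|\sum_i X_i|$. The main obstacle I expect is the first step — converting $\psi_1$-norm control into the clean exponential MGF bound together with an explicit admissible range for $\lambda$ — and then tracking constants carefully through the constrained optimization so that the \emph{same} absolute constant $c$ governs both regimes; everything after that is routine Chernoff bookkeeping.
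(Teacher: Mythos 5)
Your argument is correct and is exactly the standard MGF/Chernoff proof of Bernstein's inequality for sub-exponential sums given in the cited reference (Vershynin, Theorem~2.8.1): per-variable MGF bound on a restricted $\lambda$-range, independence, constrained optimization yielding the two regimes of the minimum, and symmetrization for the factor of $2$. The paper itself imports this proposition by citation and supplies no proof, so there is nothing to compare beyond noting that you have faithfully reproduced the textbook argument.
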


\begin{proposition}[Chernoff bound for Gaussian] \label{proposition2}
Let $X \sim \N(\mu_x, \sigma_x^2)$, then 
$$
\mathbb{P} \Bigl\{X - \mu_x \geqslant g \Bigl\} \leqslant \exp(-\frac{g^2}{2 \sigma_x^2}). 
$$
\end{proposition}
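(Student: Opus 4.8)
The plan is to apply the Chernoff (exponential Markov) method together with the moment generating function (MGF) of a Gaussian. First I would observe that for any $\lambda > 0$ the event $\{X - \mu_x \geqslant g\}$ coincides with $\{e^{\lambda(X - \mu_x)} \geqslant e^{\lambda g}\}$, since $t \mapsto e^{\lambda t}$ is strictly increasing. Applying Markov's inequality to the nonnegative random variable $e^{\lambda(X - \mu_x)}$ then gives
\begin{align*}
\mathbb{P}\{X - \mu_x \geqslant g\} \leqslant e^{-\lambda g}\, \mathbb{E}\bigl[e^{\lambda(X - \mu_x)}\bigr].
\end{align*}

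Next I would compute the MGF of the centered variable $X - \mu_x \sim \mathcal{N}(0, \sigma_x^2)$. Writing out the Gaussian integral and completing the square in the exponent yields the standard identity $\mathbb{E}[e^{\lambda(X - \mu_x)}] = \exp(\lambda^2 \sigma_x^2 / 2)$, which holds for every real $\lambda$. Substituting this into the previous display gives, for all $\lambda > 0$,
\begin{align*}
\mathbb{P}\{X - \mu_x \geqslant g\} \leqslant \exp\Bigl(-\lambda g + \tfrac{\lambda^2 \sigma_x^2}{2}\Bigr).
\end{align*}

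Finally I would optimize the free parameter $\lambda$ to tighten the bound. The exponent $-\lambda g + \lambda^2 \sigma_x^2/2$ is a convex quadratic in $\lambda$, minimized at $\lambda^\star = g/\sigma_x^2$, which lies in the admissible range $\lambda > 0$ precisely because $g \geqslant 0$. Plugging $\lambda^\star$ back in collapses the exponent to $-g^2/(2\sigma_x^2)$, delivering exactly the claimed inequality $\mathbb{P}\{X - \mu_x \geqslant g\} \leqslant \exp(-g^2/(2\sigma_x^2))$.

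There is no serious obstacle here: this is the classical one-line application of the Chernoff method, and the argument is self-contained given only elementary properties of the Gaussian. The only genuine computation is the Gaussian MGF identity (obtained by completing the square), and the only logical point worth checking is that the optimizing value $\lambda^\star$ respects the constraint $\lambda > 0$, which is guaranteed by the hypothesis $g \geqslant 0$; in particular the bound is trivially true when $g = 0$.
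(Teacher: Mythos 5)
Your proof is correct and complete: the exponential Markov (Chernoff) argument, the Gaussian MGF identity $\mathbb{E}[e^{\lambda(X-\mu_x)}] = \exp(\lambda^2\sigma_x^2/2)$, and the optimization at $\lambda^\star = g/\sigma_x^2$ constitute the canonical derivation, and you correctly note that $\lambda^\star > 0$ requires $g > 0$ while the case $g = 0$ holds trivially. The paper states this proposition as a standard preliminary without giving any proof, so there is no alternative argument to compare against; your write-up is exactly the proof the paper implicitly relies on.
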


\begin{proposition}[Epsilon-netting for bounding $\max_{z \in S_d, v \in S_r} |z^\top M v|$] \label{proposition_epsilon_net}
For an $d \times r$ matrix $M$ and fixed vectors $z, v$ with $z \in S_d$ and $v \in S_r$, suppose that $|z^\top M v| \leqslant b_0$ with probability at least $1 - p_0$. Consider an $\epsilon_{net}$ net covering $S_d$ and $S_r$, $\bar{S}_d$, $\bar{S}_r$. Then with probability at least $1 - (1 + \frac{2}{\epsilon_{net}})^{d + r} p_0$, 
\begin{itemize}
\item $\max_{z \in \bar{S}_d, v \in \bar{S}_r} |z^\top M v| \leqslant b_0$ and
\item $\max_{z \in S_d, v \in S_r} |z^\top M v| \leqslant \frac{1}{1 - 2 \epsilon_{net} - \epsilon_{net}^2} b_0$. 
\end{itemize}
Using $\epsilon_{net} = \frac{1}{8}$, this implies the following simpler conclusion: with probability at least $1 - 17^{d + r} p_0 = 1 - \exp((\log17)(d + r)) \cdot p_0$, $\max_{z \in S_d, v \in S_r} |z^\top M v| \leqslant 1.4 b_0$. 
\end{proposition}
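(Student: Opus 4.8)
The plan is to run the standard covering (epsilon-net) argument: control the supremum over the spheres $S_d, S_r$ by a union bound over a finite net, and then transfer that control back to the full spheres via an approximation step. Throughout I write $\norm{M}_S := \max_{z \in S_d, v \in S_r} |z^\top M v|$ for the quantity of interest, which is just the spectral norm of $M$.

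First I would invoke the elementary covering-number estimate for the Euclidean sphere: an $\epsilon_{net}$-net of $S_d$ can be chosen with at most $(1 + 2/\epsilon_{net})^{d}$ points, and likewise for $S_r$ with $(1 + 2/\epsilon_{net})^{r}$ points, so the product net $\bar{S}_d \times \bar{S}_r$ has at most $(1 + 2/\epsilon_{net})^{d+r}$ elements. Since the hypothesis gives $\mathbb{P}\{|z^\top M v| > b_0\} \leqslant p_0$ for each fixed pair, a union bound over the net shows that with probability at least $1 - (1 + 2/\epsilon_{net})^{d+r} p_0$ the bound $|z^\top M v| \leqslant b_0$ holds simultaneously for all $(z,v) \in \bar{S}_d \times \bar{S}_r$. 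This is precisely the first conclusion.

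Next I would transfer the bound to all of $S_d \times S_r$. Working on the good event above, let $z^\star, v^\star$ attain $\norm{M}_S$ (finite and attained by compactness), and choose net points $z_0, v_0$ with $\norm{z^\star - z_0} \leqslant \epsilon_{net}$ and $\norm{v^\star - v_0} \leqslant \epsilon_{net}$. The key is the four-term splitting
\[
(z^\star)^\top M v^\star = z_0^\top M v_0 + (z^\star - z_0)^\top M (v^\star - v_0) + (z^\star - z_0)^\top M v_0 + z_0^\top M (v^\star - v_0).
\]
The first term is at most $b_0$ by the net bound; using $|a^\top M b| \leqslant \norm{a}\,\norm{b}\,\norm{M}_S$ together with $\norm{z_0} = \norm{v_0} = 1$, the remaining three terms are bounded by $\epsilon_{net}^2 \norm{M}_S$, $\epsilon_{net}\norm{M}_S$, and $\epsilon_{net}\norm{M}_S$ respectively. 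Collecting terms gives $(1 - 2\epsilon_{net} - \epsilon_{net}^2)\norm{M}_S \leqslant b_0$, which is the second conclusion. Finally, substituting $\epsilon_{net} = 1/8$ yields the covering factor $(1 + 16)^{d+r} = 17^{d+r} = \exp((\log 17)(d+r))$ and the constant $1/(1 - 2\epsilon_{net} - \epsilon_{net}^2) = 64/47 < 1.4$, giving the stated simpler form.

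The only delicate point is the transfer step, since $\norm{M}_S$ appears on both sides of the inequality. This is harmless because, on any fixed realization of $M$, the supremum $\norm{M}_S$ is finite, so one may legitimately move the $(2\epsilon_{net} + \epsilon_{net}^2)\norm{M}_S$ contribution to the left-hand side and divide. No distributional structure of $M$ is used beyond the per-pair tail hypothesis; the covering-number estimate and the union bound are entirely routine.
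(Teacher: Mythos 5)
Your proposal is correct and follows essentially the same route as the paper, which simply defers to Lemma~4.4.1 of \cite{vershynin2018high}; you have written out that standard covering-number/union-bound/approximation argument in full. Note that your four-term splitting reproduces exactly the constant $\frac{1}{1-2\epsilon_{net}-\epsilon_{net}^2}$ appearing in the statement (Vershynin's two-term splitting would give the slightly sharper $\frac{1}{1-2\epsilon_{net}}$), and all the numerical checks at $\epsilon_{net}=\frac{1}{8}$ go through.
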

\begin{proof}
The proof follows that of Lemma~4.4.1 of \cite{vershynin2018high}
\end{proof}

\section{Guarantees for LRRL-AltGDMin Estimator}\label{app_est}
Define
\begin{align*}
G &:= B^\top \Thetas\\
P &:= I - B^\star {B^{\star}}^\top \\
\GradB &:= \nabla_B f(B, W) = \sum_{t=1}^T \phimt{t}{m} (\phimt{t}{m} B w_t - \ymt{t}{m}) w_t^\top \\
&= \sum_{t=1}^T \sum_{n=\pG_{m-1} + 1}^{\pG_m} (y_\nt - \phi(x_{n, t}, c_n)^\top B w_t) \phi(x_{n, t}, c_n) w_t^\top \\
&= \sum_{t=1}^T \sum_{n=\pG_{m-1} + 1}^{\pG_m} \phi(x_{n, t}, c_n) \phi(x_{n, t}, c_n)^\top (\theta_t - \thetats) w_t^\top + \eta_{n, t} \phi(x_{n, t}, c_n) w_t^\top \\
\GradB^\prime &= \sum_{t=1}^T \sum_{n=\pG_{m-1} + 1}^{\pG_m} \phi(x_{n, t}, c_n) \phi(x_{n, t}, c_n)^\top (\thetats - \theta_t) w_t^\top. 
\end{align*}
and $g_t = B^\top \thetats$ for all $t \in [T]$, $\SE(B_1, B_2) = \| (I - B_1 B_1^\top) B_2 \|_F$ as the Subspace Distance (SD) measure for basis matrices $B_1, B_2$. Here, $\GradB$ represents the gradient that includes noise, while $\GradB^\prime$ represents the gradient without noise. 

\begin{proposition} \label{proposition3}
Assume $\SE(B, B^\star) \leqslant \delt$. Then, with probability at least $1 - 2 T \exp(r - c (\pG_m - \pG_{m-1}))$, it holds that 
$$
\|M^{-1}\| \leqslant \frac{1}{0.9 (\pG_m - \pG_{m-1})} \quad \text{and} \quad \|M^{-1} B^\top {\phimt{t}{m}}^\top {\phimt{t}{m}} (I - B B^\top) \thetats\| \leqslant 0.12 \delt \| w_t^\star \|,
$$ 
where $M = B^\top {\phimt{t}{m}}^\top {\phimt{t}{m}} B$. 
\end{proposition}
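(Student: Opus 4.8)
Throughout I would condition on $B$ and exploit the sample-splitting in Algorithm~\ref{alg3}, which makes the measurement matrix $\phimt{t}{m}$ used in the Min-$W$ step independent of $B$; under Assumption~\ref{assume:iid} its rows $a_n := \phi(x_{n,t}, c_n)$ are then i.i.d.\ $\N(0, I_d)$. Since $B$ has orthonormal columns, the rows of $\phimt{t}{m} B$ are i.i.d.\ $\N(0, I_r)$, so $M = (\phimt{t}{m} B)^\top (\phimt{t}{m} B)$ is a (scaled) Wishart matrix. Write $q := \pG_m - \pG_{m-1}$ for the number of samples in the epoch.

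For the first bound I would control the smallest eigenvalue of $M$. For a fixed unit vector $z \in S^{r-1}$, $z^\top M z = \sum_n (a_n^\top B z)^2$ is a sum of $q$ i.i.d.\ $\chi^2_1$ variables (because $\|Bz\| = \|z\| = 1$), which concentrates around $q$ by the sub-exponential Bernstein tail of Proposition~\ref{proposition1}. Taking a $\tfrac14$-net of $S^{r-1}$ of cardinality $\exp(cr)$ and applying the net argument of Proposition~\ref{proposition_epsilon_net} upgrades this to $\lambda_{\min}(M) \geq 0.9 q$ uniformly over $z$, on an event of probability $1 - 2\exp(r - c q)$; hence $\|M^{-1}\| = 1/\lambda_{\min}(M) \leq 1/(0.9 q)$. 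A union bound over the $T$ tasks yields the stated failure probability.

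The second bound is the interpolation error of the Min-$W$ step. Set $u := (I - BB^\top)\thetats$; since $\thetats = B^\star w_t^\star$ and the operator norm is dominated by the Frobenius norm, $\|u\| \le \|(I - BB^\top)B^\star\|\,\|w_t^\star\| \le \SE(B, B^\star)\|w_t^\star\| \le \delt \|w_t^\star\|$. The quantity to control is $M^{-1} B^\top {\phimt{t}{m}}^\top \phimt{t}{m} u = M^{-1}\sum_n (B^\top a_n)(a_n^\top u)$. The crucial structural observation is that $u \perp \mathrm{col}(B)$, so for each $n$ the Gaussian vector $B^\top a_n$ and the scalar $a_n^\top u$ are uncorrelated, $\bE[(B^\top a_n)(a_n^\top u)] = B^\top u = 0$, and hence independent. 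Thus $\sum_n (B^\top a_n)(a_n^\top u)$ is a sum of i.i.d.\ mean-zero vectors whose entries are products of independent Gaussians, i.e.\ sub-exponential with $\psi_1$-norm of order $\|u\|$.

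To finish I would bound $\|\sum_n (B^\top a_n)(a_n^\top u)\| = \max_{z \in S^{r-1}} \sum_n (a_n^\top Bz)(a_n^\top u)$; for fixed $z$ the summands are sub-exponential, so Proposition~\ref{proposition1} gives $|\sum_n (a_n^\top Bz)(a_n^\top u)| \le \varepsilon q \|u\|$ with probability $1 - 2\exp(-c\varepsilon^2 q)$, and the net argument plus a union bound over tasks again contribute the $\exp(r)$ and $T$ factors. Combining with Part~1, $\|M^{-1}\sum_n (B^\top a_n)(a_n^\top u)\| \le \frac{1}{0.9 q}\,\varepsilon q\|u\| = \frac{\varepsilon}{0.9}\delt\|w_t^\star\|$, and choosing the net radius and the sample-complexity constant in $q \gtrsim r$ so that $\varepsilon/0.9 \le 0.12$ closes the proof. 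I expect the main obstacle to be the joint concentration in Part~2: one must use the orthogonality $u \perp \mathrm{col}(B)$ to obtain the mean-zero independence structure, and then carefully trade the net cardinality $\exp(r)$ against the sub-exponential deviation $\exp(-c\varepsilon^2 q)$, which is exactly what forces the requirement $q \gtrsim r$ for the bound to hold with the claimed probability.
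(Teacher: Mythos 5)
Your proposal is correct and follows essentially the same route as the paper: for both bounds you fix a direction $z \in S^{r-1}$, apply the sub-exponential Bernstein inequality (Proposition~\ref{proposition1}) to the resulting sum, upgrade to a uniform bound via an $\epsilon$-net over $S^{r-1}$ costing a factor $\exp(cr)$, union bound over the $T$ tasks, and combine the two parts using $\|(I-BB^\top)\thetats\| \le \SE(B,B^\star)\|w_t^\star\| \le \delt\|w_t^\star\|$. Your explicit observation that $B^\top a_n$ and $a_n^\top u$ are independent (not merely uncorrelated) is a slightly sharper justification of the mean-zero sub-exponential structure than the paper's variance computation, but it does not change the argument.
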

\begin{proof}
To demonstrate the upper bound of $\|M^{-1}\|$, let's consider a fixed $z \in \pS_r$. We then have
$$
z^\top B^\top {\phimt{t}{m}}^\top {\phimt{t}{m}} B z = \sum_{n=\pG_{m-1} + 1}^{\pG_m} z^\top B^\top \phi(x_{n, t}, c_n) \phi(x_{n, t}, c_n)^\top B z. 
$$
Furthermore, we find that
$$
\bE[\langle B^\top \phi(x_{n, t}, c_n), z \rangle^2] = \bE[z^\top B^\top \phi(x_{n, t}, c_n) \phi(x_{n, t}, c_n)^\top B z] = z^\top B^\top \bE[\phi(x_{n, t}, c_n) \phi(x_{n, t}, c_n)^\top] B z = 1, 
$$
and also
\begin{align*}
\bE[z^\top B^\top \phi(x_{n, t}, c_n)] &= 0 \\
\Var[z^\top B^\top \phi(x_{n, t}, c_n)] &= \bE[z^\top B^\top \phi(x_{n, t}, c_n)]^2 \\
&= \bE[z^\top B^\top \phi(x_{n, t}, c_n) \phi(x_{n, t}, c_n)^\top B z] \\
&= z^\top B^\top \bE[\phi(x_{n, t}, c_n) \phi(x_{n, t}, c_n)^\top] B z \\
&= 1. 
\end{align*}
The summands are independent sub-exponential random variables with norm $K_n \leqslant 1$. We apply the sub-exponential Bernstein inequality stated in Proposition~\ref{proposition1} by setting $g = \epsilon_2 (\pG_m - \pG_{m-1})$. In order to implement this, we show that 
\begin{align*}
\frac{g^2}{\sum_{n=\pG_{m-1} + 1}^{\pG_m} K_n^2} &\geqslant \frac{\epsilon_2^2 (\pG_m - \pG_{m-1})^2}{(\pG_m - \pG_{m-1})} = \epsilon_2^2 (\pG_m - \pG_{m-1}) \\
\frac{g}{\max_n K_n} &\geqslant \frac{\epsilon_2 (\pG_m - \pG_{m-1})}{\max_n 1} = \epsilon_2 (\pG_m - \pG_{m-1})
\end{align*}
Therefore, for a fixed $z \in \pS_r$, with probability at least $1 - \exp(- c \epsilon_2^2 (\pG_m - \pG_{m-1}))$, 
$$
z^\top B^\top {\phimt{t}{m}}^\top {\phimt{t}{m}} B z - (\pG_m - \pG_{m-1}) I \geqslant - \epsilon_2 (\pG_m - \pG_{m-1}). 
$$
Using epsilon-net over all $z \in \pS_r$ adds a factor of $\exp(r)$. Thus, with probability at least $1 - \exp(r - c \epsilon_2^2 (\pG_m - \pG_{m-1}))$, we have $\min_{z \in \pS_r} \sum_{n=\pG_{m-1}}^{\pG_m} z^\top B^\top \phi(x_{n, t}, c_n) \phi(x_{n, t}, c_n)^\top B z \geqslant (1 - \epsilon_2) (\pG_m - \pG_{m-1})$. Setting $\epsilon_2 = 0.1$, we obtain
\begin{align*}
\|M^{-1}\| &= \| (B^\top {\phimt{t}{m}}^\top {\phimt{t}{m}} B)^{-1} \| \\
&= \frac{1}{\sigma_{\min}(B^\top {\phimt{t}{m}}^\top {\phimt{t}{m}} B)} \\
&= \frac{1}{\min_{z \in \pS_r} \sum_{n=\pG_{m-1} + 1}^{\pG_m} \langle B^\top \phi(x_{n, t}, c_n), z \rangle^2} \\
&\leqslant \frac{1}{0.9 (\pG_m - \pG_{m-1})}. 
\end{align*}
To demonstrate the upper bound of $\|M^{-1} B^\top {\phimt{t}{m}}^\top {\phimt{t}{m}} (I - B B^\top) \thetats\|$, it is necessary to first determine the upper bound of $\|B^\top {\phimt{t}{m}}^\top {\phimt{t}{m}} (I - B B^\top) \thetats\|$. Consider a fixed $z \in \pS_r$, we have 
$$
z^\top B^\top {\phimt{t}{m}}^\top {\phimt{t}{m}} (I - B B^\top) \thetats = \sum_{n = \pG_{m-1} + 1}^{\pG_m} (\phi(x_{n, t}, c_n)^\top B z)^\top (\phi(x_{n, t}, c_n)^\top (I - B B^\top) \thetats). 
$$
Furthermore, we find that
\begin{align*}
\bE[(\phi(x_{n, t}, c_n)^\top B z)^\top (\phi(x_{n, t}, c_n)^\top (I - B B^\top) \thetats)] &= z^\top B^\top \bE [\phi(x_{n, t}, c_n) \phi(x_{n, t}, c_n)^\top] (I - B B^\top) \thetats \\
&= z^\top B^\top (I - B B^\top) \thetats \\
&= 0, 
\end{align*}
and also we have
\begin{align*}
\bE[(\phi(x_{n, t}, c_n)^\top B z)^\top] &= 0 \\
\Var((\phi(x_{n, t}, c_n)^\top B z)^\top) &= \bE[(\phi(x_{n, t}, c_n)^\top B z)^\top]^2 \\
&= \bE[z^\top B^\top \phi(x_{n, t}, c_n) \phi(x_{n, t}, c_n)^\top B z] \\
&= z^\top B^\top \bE[\phi(x_{n, t}, c_n) \phi(x_{n, t}, c_n)^\top] B z \\
&= 1
\end{align*}
and 
\begin{align*}
\bE[\phi(x_{n, t}, c_n)^\top (I - B B^\top) \thetats] &= 0 \\
\Var(\phi(x_{n, t}, c_n)^\top (I - B B^\top) \thetats) &= \bE[\phi(x_{n, t}, c_n)^\top (I - B B^\top) \thetats]^2 \\
&= \bE[{\thetats}^\top (I - B B^\top)^\top \phi(x_{n, t}, c_n) \phi(x_{n, t}, c_n)^\top (I - B B^\top) \thetats] \\
&= {\thetats}^\top (I - B B^\top)^\top \bE[\phi(x_{n, t}, c_n) \phi(x_{n, t}, c_n)^\top] (I - B B^\top) \thetats \\
&= \|(I - B B^\top) \thetats\|^2
\end{align*}
The summands are independent sub-exponential random variables with norm $K_n \leqslant \|(I - B B^\top) \thetats\|$. We apply the sub-exponential Bernstein inequality stated in Proposition~\ref{proposition1} by setting $g = \epsilon_3 (\pG_m - \pG_{m-1}) \|(I - B B^\top) \thetats\|$. In order to implement this, we show that 
\begin{align*}
\frac{g^2}{\sum_{n=\pG_{m-1} + 1}^{\pG_m} K_n^2} &\geqslant \frac{\epsilon_3^2 (\pG_m - \pG_{m-1})^2 \|(I - B B^\top) \thetats\|^2}{(\pG_m - \pG_{m-1}) \|(I - B B^\top) \thetats\|^2} = \epsilon_3^2 (\pG_m - \pG_{m-1}) \\
\frac{g}{\max_n K_n} &\geqslant \frac{\epsilon_3 (\pG_m - \pG_{m-1}) \|(I - B B^\top) \thetats\|}{\max_n \|(I - B B^\top) \thetats\|} = \epsilon_3 (\pG_m - \pG_{m-1})
\end{align*}
Therefore, for a fixed $z \in \pS_r$, with probability at least $1 - \exp(- c \epsilon_3^2 (\pG_m - \pG_{m-1}))$, 
$$
z^\top B^\top {\phimt{t}{m}}^\top {\phimt{t}{m}} B z \leqslant \epsilon_3 (\pG_m - \pG_{m-1}) \|(I - B B^\top) \thetats\|. 
$$
Using epsilon-net over all $z \in \pS_r$ adds a factor of $\exp(r)$. Thus, with probability at least $1 - \exp(r - c \epsilon_3^2 (\pG_m - \pG_{m-1}))$, we have $\max_{z \in \pS_r} \sum_{n = \pG_{m-1} + 1}^{\pG_m} (\phi(x_{n, t}, c_n)^\top B z)^\top (\phi(x_{n, t}, c_n)^\top (I - B B^\top) \thetats) \leqslant \epsilon_3 (\pG_m - \pG_{m-1}) \|(I - B B^\top) \thetats\|$. Therefore, we have 
\begin{align*}
\|B^\top {\phimt{t}{m}}^\top {\phimt{t}{m}} (I - B B^\top) \thetats\| &= \max_{z \in \pS_r} z^\top B^\top \phi(x_{n, t}, c_n)^\top \phi(x_{n, t}, c_n) (I - B B^\top) \thetats \\
&= \max_{z \in \pS_r} \sum_{n = \pG_{m-1} + 1}^{\pG_m} (\phi(x_{n, t}, c_n)^\top B z)^\top (\phi(x_{n, t}, c_n)^\top (I - B B^\top) \thetats) \\
&\leqslant \epsilon_3 (\pG_m - \pG_{m-1}) \|(I - B B^\top) \thetats\|
\end{align*}
By combining these results, using a union bound over all $T$ vectors, and setting $\epsilon_3 = 0.1$, we conclude that with probability at least $1 - 2 T \exp(r - c (\pG_m - \pG_{m-1}))$, 
\begin{align*}
\|M^{-1} B^\top {\phimt{t}{m}}^\top {\phimt{t}{m}} (I - B B^\top) \thetats\| &\leqslant \|M^{-1}\| \times \|B^\top {\phimt{t}{m}}^\top {\phimt{t}{m}} (I - B B^\top) \thetats\| \\
&\leqslant \frac{1}{0.9 (\pG_m - \pG_{m-1})} \epsilon_3 (\pG_m - \pG_{m-1}) \|(I - B B^\top) \thetats\| \\
&\leqslant 0.12 \|(I - B B^\top) \thetats\| \\
&\leqslant 0.12 \delt \| w_t^\star \|.
\end{align*}
\end{proof}

\begin{lemma} \label{new_1}
Assume $\sigma_\eta^2 \leqslant \frac{r}{T} \delt^2 {\sigma_{\max}^\star}^2$, and $\SE(B, B^\star) \leqslant \delt$, if $\delt \leqslant \frac{0.02}{\mu \sqrt{r} \kappa^2}$, and if $(\pG_m - \pG_{m-1}) \geqslant C \max(\log T, \log d, r)$, then with probability at least $1 - 3 \exp(\log T + r - c (\pG_m - \pG_{m-1}))$, the following bounds hold: 
\begin{enumerate}
    \item $\| w_t - g_t \| \leqslant 0.24 \mu \delt \sqrt{\frac{r}{T}} \sigma_{\max}^\star$
    \item $\| w_t \| \leqslant 1.24 \mu \sqrt{\frac{r}{T}} \sigma_{\max}^\star$
    \item $\| W - G \|_F \leqslant 0.24 \mu \delt \sqrt{r} \sigma_{\max}^\star$
    \item $\| \theta_t - \thetats \| \leqslant 1.24 \mu \delt \sqrt{\frac{r}{T}} \sigma_{\max}^\star$
    \item $\| \Theta_l - \Thetas\|_F \leqslant 1.24 \mu \delt \sqrt{r} \sigma_{\max}^\star$
    \item $\sigma_{\min}(W) \geqslant 0.9 \sigma_{\min}^\star$
    \item $\sigma_{\max}(W) \leqslant 1.1 {\sigma_{\max}^\star}$
\end{enumerate}
\end{lemma}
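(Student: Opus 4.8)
The plan is to analyze the min-$W$ update $w_t = (\phimt{t}{m}B)^\dagger\ymt{t}{m}$ in closed form. Under the Gaussian design, $\phimt{t}{m}B$ has full column rank with high probability, so $w_t = M^{-1}B^\top{\phimt{t}{m}}^\top\ymt{t}{m}$ where $M = B^\top{\phimt{t}{m}}^\top\phimt{t}{m}B$. Substituting $\ymt{t}{m} = \phimt{t}{m}\thetats + \eta_t$ and splitting $\thetats = Bg_t + (I - BB^\top)\thetats$ (with $g_t = B^\top\thetats$), I would arrive at the decomposition
\[
w_t - g_t = M^{-1}B^\top{\phimt{t}{m}}^\top\phimt{t}{m}(I - BB^\top)\thetats + M^{-1}B^\top{\phimt{t}{m}}^\top\eta_t,
\]
i.e. a noiseless cross term plus a noise term. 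Proposition~\ref{proposition3} already bounds the first summand by $0.12\,\delt\norm{w_t^\star}$ and gives $\norm{M^{-1}}\leqslant 1/(0.9(\pG_m-\pG_{m-1}))$, so only the noise term needs fresh work.

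For the noise term I would write $B^\top{\phimt{t}{m}}^\top\eta_t = \sum_n (B^\top\phi(x_{n,t},c_n))\,\eta_{n,t}$, a sum of independent mean-zero vectors whose entries are products of two independent Gaussians, hence sub-exponential with $\psi_1$-norm of order $\sigma_\eta$. Fixing $z\in\pS_r$, applying the sub-exponential Bernstein bound (Proposition~\ref{proposition1}) with threshold $g = \epsilon(\pG_m-\pG_{m-1})\sigma_\eta$, and then covering $\pS_r$ with an $\epsilon$-net (an $e^r$ factor) would give $\norm{B^\top{\phimt{t}{m}}^\top\eta_t}\leqslant \epsilon(\pG_m-\pG_{m-1})\sigma_\eta$ with high probability. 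Combined with the $\norm{M^{-1}}$ bound, the noise term is at most $(\epsilon/0.9)\sigma_\eta$. The hypothesis $\sigma_\eta^2\leqslant\frac{r}{T}\delt^2{\sigmax}^2$ converts this to $O(\delt\sqrt{r/T}\sigmax)$, and since incoherence gives $\norm{w_t^\star}\leqslant\mu\sqrt{r/T}\sigmax$ with $\mu\geqslant 1$, a small constant $\epsilon$ makes the noise contribution no larger than the noiseless one; adding the two yields $\norm{w_t - g_t}\leqslant 0.24\,\mu\delt\sqrt{r/T}\sigmax$, which is Claim~1. Taking a union bound over the $T$ tasks and the two concentration events produces the stated $1 - 3\exp(\log T + r - c(\pG_m-\pG_{m-1}))$.

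The remaining claims are corollaries of Claim~1. Claim~2 follows from $\norm{w_t}\leqslant\norm{g_t}+\norm{w_t-g_t}$ with $\norm{g_t}=\norm{B^\top\thetats}\leqslant\norm{w_t^\star}$ and $\delt\leqslant 1$. Claims~3 and~5 follow by summing the squared per-column bounds, using $\norm{W-G}_F^2=\sum_t\norm{w_t-g_t}^2$ and $\norm{\Theta_\l-\Thetas}_F^2=\sum_t\norm{\theta_t-\thetats}^2$, which contributes the extra $\sqrt{T}$ that cancels the $1/\sqrt{T}$ in the per-column bounds. For Claim~4 I would use $\theta_t - \thetats = B(w_t-g_t) - (I-BB^\top)\thetats$, bound $\norm{(I-BB^\top)\thetats}\leqslant\SE(B,B^\star)\norm{w_t^\star}\leqslant\delt\mu\sqrt{r/T}\sigmax$, and apply the triangle inequality to get exactly $1.24\,\mu\delt\sqrt{r/T}\sigmax$.

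For the spectral Claims~6 and~7 I would set $W = G + (W-G)$ with $G = (B^\top B^\star)W^\star$ and apply Weyl's inequality, $\sigma_{\min}(W)\geqslant\sigma_{\min}(G)-\norm{W-G}$ and $\sigma_{\max}(W)\leqslant\sigma_{\max}(G)+\norm{W-G}$. Since the singular values of $B^\top B^\star$ are the cosines of the principal angles between the subspaces, $\SE(B,B^\star)\leqslant\delt$ gives $\sigma_{\min}(B^\top B^\star)\geqslant\sqrt{1-\delt^2}$, hence $\sigma_{\min}(G)\geqslant\sqrt{1-\delt^2}\,\sigmin$ and $\sigma_{\max}(G)\leqslant\sigmax$; meanwhile $\norm{W-G}\leqslant\norm{W-G}_F\leqslant 0.24\,\mu\delt\sqrt{r}\sigmax$, and the bound $\delt\leqslant 0.02/(\mu\sqrt{r}\kappa^2)$ together with $\sigmax/\kappa^2 = {\sigmin}^2/\sigmax\leqslant\sigmin$ shows this perturbation is a tiny multiple of $\sigmin$, so the numerical constants deliver $\sigma_{\min}(W)\geqslant 0.9\sigmin$ and $\sigma_{\max}(W)\leqslant 1.1\sigmax$. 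I expect the main obstacle to be the noise-term concentration: recognizing the summands as products of independent Gaussians (requiring Bernstein rather than Hoeffding) and checking that the noise-to-signal hypothesis is calibrated precisely so the noise stays within the $0.24\,\mu\delt$ budget; the principal-angle identity needed for $\sigma_{\min}(B^\top B^\star)$ is the only other nonroutine ingredient.
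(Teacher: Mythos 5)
Your proposal follows essentially the same route as the paper's proof: the identical decomposition $w_t - g_t = M^{-1}B^\top{\phimt{t}{m}}^\top\phimt{t}{m}(I-BB^\top)\thetats + M^{-1}B^\top{\phimt{t}{m}}^\top\etamt{t}{m}$ with Proposition~\ref{proposition3} handling the cross term, sub-exponential Bernstein plus an $\epsilon$-net over $\pS_r$ and a union bound over tasks for the noise term, triangle inequalities for claims 2--5, and Weyl's inequality with $\sigma_{\min}(B^\top B^\star)\geqslant\sqrt{1-\delt^2}$ and the observation $\sigmax/\kappa^2\leqslant\sigmin$ for claims 6--7. The argument is correct and matches the paper's in all essentials.
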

\begin{proof}
Consider the expression for $w_t$, we obtain that
\begin{align*}
w_t &= (\phimt{t}{m} B)^\dagger \ymt{t}{m} \\
&= ((\phimt{t}{m} B)^\top (\phimt{t}{m} B))^{-1} (\phimt{t}{m} B)^\top \ymt{t}{m} \\
&= (B^\top {\phimt{t}{m}}^\top \phimt{t}{m} B)^{-1} B^\top {\phimt{t}{m}}^\top \ymt{t}{m} \\
&= (B^\top {\phimt{t}{m}}^\top \phimt{t}{m} B)^{-1} (B^\top {\phimt{t}{m}}^\top) \phimt{t}{m} B B^\top \thetats + (B^\top {\phimt{t}{m}}^\top \phimt{t}{m} B)^{-1} (B^\top {\phimt{t}{m}}^\top) \phimt{t}{m} (I - B B^\top) \thetats \\
&+ (B^\top {\phimt{t}{m}}^\top \phimt{t}{m} B)^{-1} (B^\top {\phimt{t}{m}}^\top) \etamt{t}{m} \\
&= (B^\top {\phimt{t}{m}}^\top \phimt{t}{m} B)^{-1} (B^\top {\phimt{t}{m}}^\top \phimt{t}{m} B) B^\top \thetats + (B^\top {\phimt{t}{m}}^\top \phimt{t}{m} B)^{-1} (B^\top {\phimt{t}{m}}^\top) \phimt{t}{m} (I - B B^\top) \thetats \\
&+ (B^\top {\phimt{t}{m}}^\top \phimt{t}{m} B)^{-1} (B^\top {\phimt{t}{m}}^\top) \etamt{t}{m} \\
&= g_t + M^{-1} B^\top {\phimt{t}{m}}^\top \phimt{t}{m} (I - B B^\top) \thetats + M^{-1} B^\top {\phimt{t}{m}}^\top \etamt{t}{m}, 
\end{align*}
where $M = B^\top {\phimt{t}{m}}^\top \phimt{t}{m} B$. Consequently, $w_t - g_t = M^{-1} B^\top {\phimt{t}{m}}^\top \phimt{t}{m} (I - B B^\top) \thetats + M^{-1} B^\top {\phimt{t}{m}}^\top \etamt{t}{m}$. The first term is bounded in Proposition~\ref{proposition3}. To bound the second term, let's consider a fixed $z \in \pS_r$. We analyze $z^\top B^\top {\phimt{t}{m}}^\top \etamt{t}{m} = \sum_{n = \pG_{m-1} + 1}^{\pG_m} (B z)^\top \phi(x_{n, t}, c_n) \eta_{n, t}$, leading to $\bE[(B z)^\top \phi(x_{n, t}, c_n) \eta_{n, t}] = 0$ and
\begin{align*}
\Var((B z)^\top \phi(x_{n, t}, c_n)) &= \bE[(B z)^\top \phi(x_{n, t}, c_n)]^2 - (\bE[(B z)^\top \phi(x_{n, t}, c_n)])^2 \\
&= \bE[(B z)^\top \phi(x_{n, t}, c_n)]^2 \\
&= \bE[z^\top B^\top \phi(x_{n, t}, c_n) \phi(x_{n, t}, c_n)^\top B z] \\
&= z^\top B^\top \bE[\phi(x_{n, t}, c_n) \phi(x_{n, t}, c_n)^\top] B z \\
&= z^\top B^\top B z \\
&= I. 
\end{align*}
Given $\eta_{n, t} \iidsim \n(0, \sigma_\eta^2)$, we have $\Var(\eta_{n, t}) = \sigma_\eta^2$. Thus, $z^\top B^\top {\phimt{t}{m}}^\top \etamt{t}{m}$ is a sum of $(\pG_m - \pG_{m-1})$ subexponential random variables with parameter $K_n = \sigma_\eta$. We apply the sub-exponential Bernstein inequality stated in Proposition~\ref{proposition1} by setting $g = \epsilon_3 (\pG_m - \pG_{m-1}) \sigma_\eta$. In order to implement this, we show that
\begin{align*}
\frac{g^2}{\sum_{n=\pG_{m-1} + 1}^{\pG_m} K_n^2} &\geqslant \frac{\epsilon_3^2 (\pG_m - \pG_{m-1})^2 \sigma_\eta^2}{(\pG_m - \pG_{m-1}) \sigma_\eta^2} = \epsilon_3^2 (\pG_m - \pG_{m-1}) \\
\frac{g}{\max_n K_n} &\geqslant \frac{\epsilon_3 (\pG_m - \pG_{m-1}) \sigma_\eta}{\sigma_\eta} = \epsilon_3 (\pG_m - \pG_{m-1})
\end{align*}
Therefore, for a fixed $z \in \pS_r$, with probability at least $1 - \exp(- c \epsilon_3^2 (\pG_m - \pG_{m-1}))$, $z^\top B^\top {\phimt{t}{m}}^\top \etamt{t}{m} \leqslant \epsilon_3 (\pG_m - \pG_{m-1}) \sigma_\eta$. Using epsilon-net over all $z$ adds a factor of $\exp(r)$. Thus, with probability at least $1 - \exp(r - c \epsilon_3^2 (\pG_m - \pG_{m-1}))$, we have $B^\top {\phimt{t}{m}}^\top \etamt{t}{m} \leqslant \epsilon_3 (\pG_m - \pG_{m-1}) \sigma_\eta$. Then, the above holds for all $t \in [T]$ with probability at least $1 - \exp(\log T + r - c \epsilon_3^2 (\pG_m - \pG_{m-1}))$. According to Proposition~\ref{proposition3}, with probability at least $1 - 2 T \exp(r - c (\pG_m - \pG_{m-1}))$, we have $\| M^{-1} \| \leqslant \frac{1}{0.9 (\pG_m - \pG_{m-1})}$. Combining these results, it follows that with probability at least $1 - 2 T \exp(r - c (\pG_m - \pG_{m-1}))$, $\| M^{-1} B^\top {\phimt{t}{m}}^\top \etamt{t}{m} \| \leqslant \frac{\sigma_\eta}{9}$. 
Combining with bound on the first term, we then determine that with probability at least $1 - 3 \exp(\log T + r - c (\pG_m - \pG_{m-1}))$, 
$$
\| w_t - g_t \| \leqslant 0.12 \delt \| w_t^\star \| + \frac{\sigma_\eta}{9}. 
$$
Given that $\sigma_\eta \leqslant \sqrt{\frac{r}{T}} \delt \sigma_{\max}^\star$, we then have
$$
\| w_t - g_t \| \leqslant 0.24 \delt \max(\| w_t^\star \|, \sqrt{\frac{r}{T}} \sigma_{\max}^\star). 
$$
Applying the Incoherence of right singular vectors in Assumption~\ref{assume:incoherence}, we have
\begin{align}\label{Eq:BLemma-1}
\| w_t - g_t \| &\leqslant 0.24 \delt \max(\mu \sqrt{\frac{r}{T}} {\sigma_{\max}^\star}, \sqrt{\frac{r}{T}} \sigma_{\max}^\star) \leqslant 0.24 \mu \delt \sqrt{\frac{r}{T}} \sigma_{\max}^\star. 
\end{align}
This proves 1).

Eq.~\eqref{Eq:BLemma-1} implies
$$
\| W - G \|_F \leqslant 0.24 \mu \delt \sqrt{r} \sigma_{\max}^\star.
$$
This completes the proof of 3).

To bound $\| w_t \|$, we use $\| g_t \| \leqslant \| w_t^\star \|$, and then find
\begin{align*}
\| w_t \| &= \| w_t - g_t + g_t \| \\
&\leqslant \| w_t - g_t \| + \| w_t^\star \| \\
&\leqslant 0.24 \mu \delt \sqrt{\frac{r}{T}} \sigma_{\max}^\star + \mu \sqrt{\frac{r}{T}} \sigma_{\max}^\star \\
&\leqslant 1.24 \mu \sqrt{\frac{r}{T}} \sigma_{\max}^\star. 
\end{align*}
This completes the proof of 2).

For $\| \theta_t - \thetats \|$, we derive
\begin{align*}
\| \theta_t - \thetats \| &= \| B g_t + (I - B B^\top) \thetats - B w_t \| \\
&= \| B (g_t - w_t) + (I - B B^\top) \thetats \| \\
&\leqslant \| g_t - w_t \| + \| (I - B B^\top) B^\star w_t^\star \| \\
&\leqslant 0.24 \mu \delt \sqrt{\frac{r}{T}} \sigma_{\max}^\star + \mu \delt \sqrt{\frac{r}{T}} \sigma_{\max}^\star \\
&\leqslant 1.24 \mu \delt \sqrt{\frac{r}{T}} \sigma_{\max}^\star. 
\end{align*}
This implies that 
$$
\| \Theta_l - \Thetas \|_F \leqslant 1.24 \mu \delt \sqrt{r} \sigma_{\max}^\star. 
$$
This proves 4) and 5).

Furthermore, 
\begin{align*}
\sigma_{\min}(W) &= \sigma_{\min}(G - (G - W)) \\
&\geqslant \sigma_{\min}(G) - \| W - G \| \\
&\geqslant \sigma_{\min}(G) - \| W - G \|_F \\
\end{align*}
we have
\begin{align*}
\sigma_{\min}(G) &= \sigma_{\min}(G^\top) \\
&= \sigma_{\min}({W^\star}^\top {B^\star}^\top B) \\
&\geqslant \sigma_{\min}^\star \sigma_{\min}({B^\star}^\top B), 
\end{align*}
and
\begin{align*}
\sigma_{\min}({B^\star}^\top B) &= \sqrt{\lambda_{\min}(B^\top B^\star {B^\star}^\top B)}) \\
&= \sqrt{\lambda_{\min}(B^\top (I - P) B)} \\
&= \sqrt{\lambda_{\min}(I - B^\top P B)} \\
&= \sqrt{\lambda_{\min}(I - B^\top P^2 B)} \\
&= \sqrt{1 - \lambda_{\max}(B^\top P^2 B)} \\
&= \sqrt{1 - \| P B \|^2} \\
&\geqslant \sqrt{1 - \delt^2}. 
\end{align*}
Combining the above three bounds, if $\delt < \frac{0.02}{\mu \sqrt{r} \kappa^2}$, we then have
$$
\sigma_{\min}(W) \geqslant \sqrt{1 - \delt^2} \sigma_{\min}^\star - 0.24 \mu \delt \sqrt{r} \sigma_{\max}^\star \geqslant 0.9 \sigma_{\min}^\star
$$
and
\begin{align*}
\sigma_{\max}(W) &= \sigma_{\max}(G - (G - W)) \\
&\leqslant \sigma_{\max}(G) + \sigma_{\max}(G - W) \\
&= \sigma_{\max}(B^\top B^\star W^\star) + \sigma_{\max}(G - W) \\
&\leqslant \sigma_{\max}(B^\top B^\star) \sigma_{\max}(W^\star) + \| G - W \|_F \\
&\leqslant \sigma_{\max}^\star + 0.24 \mu \delt \sqrt{r} \sigma_{\max}^\star \\
&\leqslant 1.1 {\sigma_{\max}^\star}
\end{align*}
Thus, the proof is complete. 
\end{proof}

\begin{proposition} \label{proposition4}
Assume $\SE(B, B^\star) \leqslant \delt$. The following statements are true: 
\begin{itemize}
    \item $\bE[\GradB^\prime] = (\pG_m - \pG_{m-1}) (\Thetas - \Theta) W^\top$
    \item With probability at least $1 - 3 T \exp(r - c (\pG_m - \pG_{m-1}))$, we have $\|\bE[\GradB^\prime]\| \leqslant 1.37 (\pG_m - \pG_{m-1}) \mu \delt \sqrt{r} {\sigma_{\max}^\star}^2$
    \item If $\delt \leqslant \frac{0.02}{\mu \sqrt{r} \kappa^2}$, then, with probability at least $1 - \exp(C (d + r) - c \frac{\epsilon_1^2 (\pG_m - \pG_{m-1}) T}{\mu^2 r \kappa^4}) - 3 \exp(\log T + r - c (\pG_m - \pG_{m-1}))$, the inequality $\|\GradB^\prime - \bE[\GradB^\prime]\| \leqslant \epsilon_1 \delt (\pG_m - \pG_{m-1}) {\sigma_{\min}^\star}^2$ holds
\end{itemize}
where $\GradB^\prime = \sum_{t=1, n = \pG_{m-1} + 1}^{T, \pG_m} \phi(x_{n, t}, c_n) \phi(x_{n, t}, c_n)^\top (\thetats - \theta_t) w_t^\top$.
\end{proposition}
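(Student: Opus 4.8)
The first bullet is a direct expectation computation. By the sample-splitting in Algorithm~\ref{alg3}, the measurement vectors $\phi_{n,t} := \phi(x_{n,t},c_n)$ entering $\GradB^\prime$ are drawn from a block disjoint from the one used to form $W$ and the $\theta_t$, so I may condition on $(B, W, \{\theta_t\})$ and treat them as deterministic. Since $\bE[\phi_{n,t}\phi_{n,t}^\top] = I$ under Assumption~\ref{assume:iid}, each summand has conditional expectation $(\thetats - \theta_t) w_t^\top$; summing the $(\pG_m - \pG_{m-1})$ identical terms over $n$ and stacking over $t$ gives $\bE[\GradB^\prime] = (\pG_m - \pG_{m-1})(\Thetas - \Theta)W^\top$. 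For the second bullet I would use submultiplicativity, $\|(\Thetas - \Theta)W^\top\| \leqslant \|\Thetas - \Theta\|_F \,\|W\|$, and then invoke items~5 and~7 of Lemma~\ref{new_1}, which give $\|\Thetas - \Theta\|_F \leqslant 1.24\,\mu\delt\sqrt{r}\,\sigmax$ and $\|W\| = \sigma_{\max}(W) \leqslant 1.1\,\sigmax$; since $1.24 \times 1.1 < 1.37$, the claimed bound holds on the same event as Lemma~\ref{new_1}, namely with probability $1 - 3\exp(\log T + r - c(\pG_m - \pG_{m-1})) = 1 - 3T\exp(r - c(\pG_m - \pG_{m-1}))$.

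The third bullet is the crux. Writing $N_m := \pG_m - \pG_{m-1}$, I would control $\|\GradB^\prime - \bE[\GradB^\prime]\| = \max_{z \in \pS_d,\, v \in \pS_r} z^\top(\GradB^\prime - \bE[\GradB^\prime])v$ via the epsilon-net device of Proposition~\ref{proposition_epsilon_net}, reducing to a fixed pair $(z,v)$ at the cost of a factor $\exp(C(d+r))$ in the failure probability. For fixed $(z,v)$ the scalar equals $\sum_{t,n}(w_t^\top v)\big[(z^\top\phi_{n,t})(\phi_{n,t}^\top(\thetats-\theta_t)) - z^\top(\thetats-\theta_t)\big]$, a sum of $N_m T$ independent, mean-zero terms. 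Each bracket is a centered product of two jointly Gaussian scalars, hence sub-exponential with $\|\cdot\|_{\psi_1} \lesssim \|z\|\,\|\thetats-\theta_t\| = \|\thetats-\theta_t\|$, so the $(t,n)$ summand $X_{n,t}$ has $\|X_{n,t}\|_{\psi_1} \lesssim |w_t^\top v|\,\|\thetats-\theta_t\|$.

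I would then apply the sub-exponential Bernstein inequality (Proposition~\ref{proposition1}) with deviation $g = \epsilon_1\delt N_m {\sigmin}^2$. Conditioning on the Lemma~\ref{new_1} event, the variance proxy satisfies $\sum_{t,n}\|X_{n,t}\|_{\psi_1}^2 \lesssim N_m \big(\max_t\|\thetats-\theta_t\|^2\big)\sum_t (w_t^\top v)^2 \leqslant N_m\,\sigma_{\max}(W)^2\max_t\|\thetats-\theta_t\|^2$, and items~4 and~7 of Lemma~\ref{new_1} bound this by $\lesssim N_m\,\mu^2\delt^2\frac{r}{T}{\sigmax}^4$, while the maximal term obeys $\max_{t,n}\|X_{n,t}\|_{\psi_1} \lesssim \mu^2\delt\frac{r}{T}{\sigmax}^2$ by items~2 and~4. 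Substituting, the quadratic Bernstein exponent becomes $\frac{g^2}{\sum\|X_{n,t}\|_{\psi_1}^2} \gtrsim \frac{\epsilon_1^2 N_m T}{\mu^2 r \kappa^4}$, where the $\kappa^4$ arises precisely from ${\sigmin}^4/{\sigmax}^4$; this matches the exponent in the stated probability. Combining the net factor $\exp(C(d+r))$ with the inherited Lemma~\ref{new_1} failure probability $3\exp(\log T + r - cN_m)$ yields the claimed bound.

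The main obstacle is the bookkeeping in the third bullet: one must route the incoherence so that the per-task factors $\|\thetats-\theta_t\|^2$ and $\sum_t(w_t^\top v)^2$ combine to produce the factor $T$ in the numerator of the concentration exponent together with the correct $\kappa^4$. The key structural point is that $\sum_t(w_t^\top v)^2 \leqslant \sigma_{\max}(W)^2 \lesssim {\sigmax}^2$ carries no $1/T$, whereas $\max_t\|\thetats-\theta_t\|^2 \lesssim \mu^2\delt^2(r/T){\sigmax}^2$ supplies the $r/T$; with the $N_m T$ summands this yields the $N_m T/(\mu^2 r \kappa^4)$ scaling rather than a weaker bound. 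One also needs $\epsilon_1 \lesssim \kappa^2$ so that the quadratic (rather than the linear) Bernstein term governs, which holds in the regime of interest where $\epsilon_1$ is a small constant.
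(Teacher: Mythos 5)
Your proposal is correct and follows essentially the same route as the paper's proof: the same conditional expectation computation using sample-splitting, the same submultiplicativity plus Lemma~\ref{new_1} (items on $\|\Theta_\l-\Thetas\|_F$ and $\sigma_{\max}(W)$) for the second bullet, and the same fixed-$(z,v)$ sub-exponential Bernstein argument with $K_{n,t}\lesssim|w_t^\top v|\,\|\thetats-\theta_t\|$, the bound $\sum_t(w_t^\top v)^2\leqslant\|W\|^2$, and an epsilon-net over $\pS_d\times\pS_r$ for the third. Your remark that the quadratic Bernstein term governs (the linear exponent scaling as $\epsilon_1 N_m T/(\mu^2 r\kappa^2)$, which dominates the quadratic one for small constant $\epsilon_1$) matches the paper's computation exactly.
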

\begin{proof}
By using independence of $\phimt{t}{m}$ and $\{B, w_t\}$, we can derive 
\begin{align*}
\bE[\GradB^\prime] &= \bE\left[\sum_{t=1, n = \pG_{m-1} + 1}^{T, \pG_m} \phi(x_{n, t}, c_n) \phi(x_{n, t}, c_n)^\top (\thetats - \theta_t) w_t^\top\right] \\
&= \sum_{t=1, n = \pG_{m-1} + 1}^{T, \pG_m} \bE[\phi(x_{n, t}, c_n) \phi(x_{n, t}, c_n)^\top] (\thetats - \theta_t) w_t^\top \\
&= \sum_{t=1}^{T} (\pG_m - \pG_{m-1}) (\thetats - \theta_t) w_t^\top \\
&= (\pG_m - \pG_{m-1}) (\Thetas - \Theta) W^\top. 
\end{align*}
Utilizing the upper bound from Lemma~\ref{new_1}, if $\delt \leqslant \frac{0.02}{\mu \sqrt{r} \kappa^2}$, with probability at least $1 - 3 \exp(\log T + r - c (\pG_m - \pG_{m-1}))$, 
\begin{align*}
\|\bE[\GradB^\prime]\| &= \|\sum_{t=1}^{T} (\pG_m - \pG_{m-1}) (\thetats - \theta_t) w_t^\top\| \\
&= (\pG_m - \pG_{m-1}) \|(\Thetas - \Theta) W^\top\| \\
&\leqslant (\pG_m - \pG_{m-1}) \|\Thetas - \Theta\| \cdot \|W\| \\
&\leqslant (\pG_m - \pG_{m-1}) \|\Thetas - \Theta\|_F \cdot \|W\| \\
&\leqslant 1.6 (\pG_m - \pG_{m-1}) \mu \delt \sqrt{r} {\sigma_{\max}^\star}^2
\end{align*}
To bound $\|\GradB^\prime - \bE[\GradB^\prime]\| = \max_{\|z\|=1, \|v\|=1} z^\top (\sum_{t=1}^T \sum_{n=\pG_{m-1}+1}^{\pG_m} \phi(x_{n, t}, c_n) \phi(x_{n, t}, c_n)^\top (\thetats - \theta_t) w_t^\top - \bE[\phi(x_{n, t}, c_n) \phi(x_{n, t}, c_n)^\top (\thetats - \theta_t) w_t^\top]) v$, we consider fixed unit norm vectors $z, v$, applying the sub-exponential Berstein inequality as stated in Proposition~\ref{proposition1} and extend the bound to all unit norm vectors $z, v$ using a standard epsilon-net argument. For fixed unit norm $z, v$, we consider
$$
\sum_{t=1}^T \sum_{n=\pG_{m-1}+1}^{\pG_m} \left((z^\top \phi(x_{n, t}, c_n)) (w_t^\top v) \phi(x_{n, t}, c_n)^\top (\thetats - \theta_t) - \bE[z^\top \phi(x_{n, t}, c_n)) (w_t^\top v) \phi(x_{n, t}, c_n)^\top (\thetats - \theta_t)]\right)
$$
The analysis shows that
\begin{align*}
&\bE\left[(z^\top \phi(x_{n, t}, c_n)) (w_t^\top v) \phi(x_{n, t}, c_n)^\top (\thetats - \theta_t) - \bE[z^\top \phi(x_{n, t}, c_n)) (w_t^\top v) \phi(x_{n, t}, c_n)^\top (\thetats - \theta_t)]\right] \\
&= \left( \bE[z^\top \phi(x_{n, t}, c_n)) (w_t^\top v) \phi(x_{n, t}, c_n)^\top (\thetats - \theta_t)] - \bE[z^\top \phi(x_{n, t}, c_n)) (w_t^\top v) \phi(x_{n, t}, c_n)^\top (\thetats - \theta_t)] \right) \\
&= 0, 
\end{align*}
and also we have that
\begin{align*}
\bE[(z^\top \phi(x_{n, t}, c_n)) (w_t^\top v)] &= 0, \\
\Var((z^\top \phi(x_{n, t}, c_n)) (w_t^\top v))&= \bE[(z^\top \phi(x_{n, t}, c_n)) (w_t^\top v) (w_t^\top v) \phi(x_{n, t}, c_n)^\top z] \\
&= (w_t^\top v)^2 z^\top \bE[\phi(x_{n, t}, c_n) \phi(x_{n, t}, c_n)^\top] z \\
&= (w_t^\top v)^2, 
\end{align*}
and
\begin{align*}
\bE[\phi(x_{n, t}, c_n)^\top (\thetats - \theta_t)] &= 0, \\
\Var(\phi(x_{n, t}, c_n)^\top (\thetats - \theta_t))&= \bE[(\thetats - \theta_t)^\top \phi(x_{n, t}, c_n) \phi(x_{n, t}, c_n)^\top (\thetats - \theta_t)] \\
&= (\thetats - \theta_t)^\top \bE[\phi(x_{n, t}, c_n) \phi(x_{n, t}, c_n)^\top] (\thetats - \theta_t) \\
&= \|\thetats - \theta_t\|^2
\end{align*}
Based on the analysis provided, we determine that the summands are independent, zero mean, sub-exponential random variables with sub-exponential norm $K_\nt \leqslant |w_t^\top v| \|\thetats - \theta_t\|$. We apply the sub-exponential Bernstein inequality stated in Proposition~\ref{proposition1}, with $g = \epsilon_1 \delt (\pG_m - \pG_{m-1}) {\sigma_{\min}^\star}^2$. We have
\begin{align}
\frac{g^2}{\sum_{t=1, n = \pG_{m-1} + 1}^{T, \pG_m} K_\nt^2} &\geqslant \frac{\epsilon_1^2 \delt^2 (\pG_m - \pG_{m-1})^2 {\sigma_{\min}^\star}^4}{(\pG_m - \pG_{m-1}) \sum_{t=1}^T |w_t^\top v|^2 \|\thetats - \theta_t\|^2} \nonumber \\
&\geqslant \frac{\epsilon_1^2 \delt^2 (\pG_m - \pG_{m-1})^2 {\sigma_{\min}^\star}^4}{(\pG_m - \pG_{m-1}) \max_t \|\thetats - \theta_t\|^2 \sum_{t=1}^T |w_t^\top v|^2} \nonumber \\
&= \frac{\epsilon_1^2 \delt^2 (\pG_m - \pG_{m-1})^2 {\sigma_{\min}^\star}^4}{(\pG_m - \pG_{m-1}) \max_t \|\thetats - \theta_t\|^2 \|v^\top W\|^2} \nonumber \\
&\geqslant \frac{\epsilon_1^2 \delt^2 (\pG_m - \pG_{m-1})^2 {\sigma_{\min}^\star}^4 T}{1.24^2 (\pG_m - \pG_{m-1}) \mu^2 \delt^2 r {\sigma_{\max}^\star}^2 \|W\|^2} \label{p4_1} \\
&\geqslant \frac{\epsilon_1^2 \delt^2 (\pG_m - \pG_{m-1})^2 {\sigma_{\min}^\star}^4 T}{1.87 (\pG_m - \pG_{m-1}) \mu^2 \delt^2 r {\sigma_{\max}^\star}^4} \label{p4_2} \\
&= \frac{\epsilon_1^2 (\pG_m - \pG_{m-1}) T}{1.87 \mu^2 r \kappa^4} \nonumber \\
\frac{g}{\max_\nt K_\nt} &\geqslant \frac{\epsilon_1 \delt (\pG_m - \pG_{m-1}) {\sigma_{\min}^\star}^2}{\max_\nt |w_t^\top v| \|\thetats - \theta_t\|} \nonumber \\
&\geqslant \frac{\epsilon_1 \delt (\pG_m - \pG_{m-1}) {\sigma_{\min}^\star}^2}{\max_t \|\thetats - \theta_t\| \max_t \|w_t\|} \label{p4_3} \\
&\geqslant \frac{\epsilon_1 \delt (\pG_m - \pG_{m-1}) {\sigma_{\min}^\star}^2 T}{1.54 \mu^2 \delt r {\sigma_{\max}^\star}^2} \label{p4_4} \\
&= \frac{\epsilon_1 (\pG_m - \pG_{m-1}) T}{1.54 \mu^2 r \kappa^2} \nonumber
\end{align}
where Eq.~\eqref{p4_1} follows from $\sum_{t=1}^T |w_t^\top v|^2 = \|v^\top w_t\|^2 \leqslant \|W\|^2$ and the upper bound of $\|\thetats - \theta_t\|$ resulting from Lemma~\ref{new_1}. Eq.~\eqref{p4_2} follows from the upper bound $\|W\| \leqslant 1.1 \sigma_{\max}^\star$ obtained from Lemma~\ref{new_1}. Eq.~\eqref{p4_3} follows from $|w_t^\top v| \leqslant \|w_t\|$. Eq.~\eqref{p4_4} follows from the upper bound $\|W\| \leqslant 1.1 \sigma_{\max}^\star$ and $\|w_t\| \leqslant 1.24 \mu \sqrt{\frac{r}{T}} \sigma_{\max}^\star$ derived from Lemma~\ref{new_1}. Consequently, with probability at least $1 - \exp(- c \frac{\epsilon_1^2 (\pG_m - \pG_{m-1}) T}{\mu^2 r \kappa^4}) - 3 \exp(\log T + r - c (\pG_m - \pG_{m-1}))$, for a given $z, v$, 
$$
z^\top (\GradB^\prime - \bE[\GradB^\prime]) v \leqslant \epsilon_1 \delt (\pG_m - \pG_{m-1}) {\sigma_{\min}^\star}^2. 
$$
Applying a standard epsilon-net argument to bound the maximum of the above over all unit norm $z, v$. We conclude that
$$
\|\GradB^\prime - \bE[\GradB^\prime]\| \leqslant \epsilon_1 \delt (\pG_m - \pG_{m-1}) {\sigma_{\min}^\star}^2
$$
with probability at least $1 - \exp(C (d + r) - c \frac{\epsilon_1^2 (\pG_m - \pG_{m-1}) T}{\mu^2 r \kappa^4}) - 3 \exp(\log T + r - c (\pG_m - \pG_{m-1}))$. The probability factor of $\exp(C (d + r))$ arises from the epsilon-net over $z$ and that over $v$: $z$ is an $d$-length unit norm vector while $v$ is an $r$-length unit norm vector. The size of the smallest epsilon net that covers the hyper-sphere of all $z$s is $(1 + \frac{2}{\epsilon_{net}})^d$, where $\epsilon_{net} = c$. Similarly, the size of the epsilon net that covers $v$ is $C^r$. Applying the union bound over both results in a factor of $C^{d + r}$. This completes the proof.
\end{proof}
Now we have the following lemma for the gradient when noise is considerd. 
\begin{lemma} \label{new_2}
Assume that $\SE(B, B^\star) \leqslant \delt$, and $\sigma_\eta^2 \leqslant \frac{r}{T} \delt^2 {\sigma_{\min}^\star}^2$. The following statements are true:
\begin{itemize}
    \item $\bE[\GradB] = (\pG_m - \pG_{m-1}) (\Theta - \Thetas) W^\top = (\pG_m - \pG_{m-1}) (B W W^\top - \Thetas W^\top)$
    \item With probability at least $1 - 3 T \exp(r - c(\pG_m - \pG_{m-1}))$, we have $\| \bE[\GradB] \| \leqslant 1.37 (\pG_m - \pG_{m-1}) \mu \delt \sqrt{r} {\sigma_{\max}^\star}^2$
    \item If $\delt \leqslant \frac{0.02}{\mu \sqrt{r} \kappa^2}$, then with probability at least $1 - 2 \exp(C (d + r) - c \frac{\epsilon_1^2 (\pG_m - \pG_{m-1}) T}{\mu^2 r \kappa^4}) - 3 \exp(\log T + r - c (\pG_m - \pG_{m-1}))$, 
    $$
    \| \GradB - \bE[\GradB] \| \leqslant 2 \epsilon_1 (\pG_m - \pG_{m-1}) \delt {\sigma_{\min}^\star}^2.
    $$
\end{itemize}
\end{lemma}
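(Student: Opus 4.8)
The plan is to separate $\GradB$ into its noiseless part, already analyzed in Proposition~\ref{proposition4}, and a purely noise-driven part, so that the three claims reduce to that proposition plus one new concentration bound. From the definition of the gradient I would write $\GradB = -\GradB^\prime + \Xi$, where $\Xi := \sum_{t=1}^T\sum_{n=\pG_{m-1}+1}^{\pG_m}\eta_{n,t}\,\phi(x_{n,t},c_n)\,w_t^\top$ collects the noise contribution. Because of the sample-splitting in Algorithm~\ref{alg3}, the measurement vectors and noise variables used to form $\GradB$ are drawn from a fresh split and are hence independent of $B$ and of the weights $\{w_t\}$; I would condition on the split defining $\{w_t\}$ throughout and invoke the high-probability bounds $\norm{W}\le 1.1\sigmax$ and $\norm{w_t}\le 1.24\mu\sqrt{r/T}\sigmax$ from Lemma~\ref{new_1}, treating $w_t^\top v$ and $\norm{w_t}$ as fixed constants.

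Statements~1 and~2 then follow at once. Since $\eta_{n,t}$ is zero-mean and independent of $\phi(x_{n,t},c_n)$ and $w_t$, we have $\bE[\Xi]=0$, so $\bE[\GradB]=-\bE[\GradB^\prime]=(\pG_m-\pG_{m-1})(\Thetas-\Theta)^{\!}\cdot(-1)W^\top=(\pG_m-\pG_{m-1})(\Theta-\Thetas)W^\top$ by the first bullet of Proposition~\ref{proposition4}, and substituting $\Theta=BW$ gives the form $(\pG_m-\pG_{m-1})(BWW^\top-\Thetas W^\top)$. The bound in Statement~2 is simply $\norm{\bE[\GradB]}=\norm{\bE[\GradB^\prime]}$, which is the second bullet of Proposition~\ref{proposition4} and holds on the same event with the same probability.

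For Statement~3 I would use $\GradB-\bE[\GradB]=-(\GradB^\prime-\bE[\GradB^\prime])+\Xi$ and bound the two terms by the triangle inequality. The first term is controlled by the third bullet of Proposition~\ref{proposition4}, giving $\norm{\GradB^\prime-\bE[\GradB^\prime]}\le \epsilon_1\delt(\pG_m-\pG_{m-1})\sigmin^2$, so the factor $2$ in the statement comes from showing the matching bound $\norm{\Xi}\le \epsilon_1\delt(\pG_m-\pG_{m-1})\sigmin^2$. To bound $\norm{\Xi}$ I would pass to the variational form $\norm{\Xi}=\max_{z\in\pS_d,\,v\in\pS_r}\sum_{t,n}\eta_{n,t}(z^\top\phi(x_{n,t},c_n))(w_t^\top v)$, fix unit vectors $z,v$, and observe the key structural fact that $\eta_{n,t}\,(z^\top\phi(x_{n,t},c_n))$ is a product of two independent zero-mean Gaussians, hence sub-exponential with $\psi_1$-norm of order $\sigma_\eta$; after scaling by $w_t^\top v$ the summands are independent, mean-zero, sub-exponential with $K_{n,t}\le C\sigma_\eta|w_t^\top v|$. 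Applying the sub-exponential Bernstein inequality (Proposition~\ref{proposition1}) with $g=\epsilon_1\delt(\pG_m-\pG_{m-1})\sigmin^2$, I would use $\sum_t(w_t^\top v)^2=\norm{v^\top W}^2\le\norm{W}^2$ and the noise assumption $\sigma_\eta^2\le\frac{r}{T}\delt^2\sigmin^2$ to lower-bound the two Bernstein ratios by $\tfrac{\epsilon_1^2(\pG_m-\pG_{m-1})T}{Cr\kappa^2}$ and $\tfrac{\epsilon_1(\pG_m-\pG_{m-1})T}{C\mu r\kappa}$ respectively, both of which dominate $\tfrac{\epsilon_1^2(\pG_m-\pG_{m-1})T}{\mu^2 r\kappa^4}$ for $\mu,\kappa\ge 1$ and $\epsilon_1\le 1$. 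A standard $\epsilon$-net over the $d$-sphere for $z$ and the $r$-sphere for $v$ (Proposition~\ref{proposition_epsilon_net}) contributes $\exp(C(d+r))$, yielding $\norm{\Xi}\le \epsilon_1\delt(\pG_m-\pG_{m-1})\sigmin^2$ with failure probability at most $\exp(C(d+r)-c\tfrac{\epsilon_1^2(\pG_m-\pG_{m-1})T}{\mu^2 r\kappa^4})+3\exp(\log T+r-c(\pG_m-\pG_{m-1}))$, the last term being the shared Lemma~\ref{new_1} event. Union-bounding with the noiseless event from Proposition~\ref{proposition4}—whose Lemma~\ref{new_1} sub-event is common to both—doubles only the $\exp(C(d+r))$ factor and produces exactly the stated probability.

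The main obstacle is the noise term $\Xi$: recognizing its product-of-Gaussians (sub-exponential) structure, computing the correct sub-exponential norm, and verifying that the scaling hypothesis $\sigma_\eta^2\le\frac{r}{T}\delt^2\sigmin^2$ is precisely what pushes the noise contribution down to the same $\epsilon_1\delt(\pG_m-\pG_{m-1})\sigmin^2$ order as the deterministic perturbation, so that the two merge into the factor-$2$ bound. The remaining difficulty is purely bookkeeping—conditioning on the correct sample-split so that $w_t^\top v$ and $\norm{w_t}$ may be held fixed and bounded via Lemma~\ref{new_1}, and tracking which probabilistic events are shared between the two terms to avoid an unnecessary doubling of the $\exp(\log T+r-c(\pG_m-\pG_{m-1}))$ factor.
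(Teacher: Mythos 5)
Your proposal is correct and follows essentially the same route as the paper: split $\GradB$ into the noiseless gradient $\GradB'$ (handled by Proposition~\ref{proposition4}) plus the noise term $\sum_{t,n}\eta_{n,t}\phi(x_{n,t},c_n)w_t^\top$, bound the latter via sub-exponential Bernstein with an epsilon-net using the Lemma~\ref{new_1} bounds on $\|W\|$ and $\|w_t\|$, and invoke $\sigma_\eta^2\le\frac{r}{T}\delt^2{\sigma_{\min}^\star}^2$ to merge the two contributions into the factor-$2$ bound. The only cosmetic difference is that you apply the noise assumption inside the Bernstein ratios rather than setting the threshold at $\epsilon_1(\pG_m-\pG_{m-1})\sigma_{\min}^\star\sigma_\eta\sqrt{T/r}$ and converting at the end, as the paper does; the resulting exponents and probabilities agree.
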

\begin{proof}
Recall the definition of $\GradB$.
\begin{align*}
    \GradB& = \sum_{t=1}^T \phimt{t}{m} (\phimt{t}{m} B w_k - \ymt{t}{m}) w_t^\top\\
     &= \sum_{t=1, n = \pG_{m-1} + 1}^{T, \pG_m} \phi(x_{n, t}, c_n) \phi(x_{n, t}, c_n)^\top (\theta_t - \thetats) w_t^\top - \eta_{n, t} \phi(x_{n, t}, c_n) w_t^\top.
\end{align*}
From this we obtain
\begin{align*}
\bE[\GradB] &= \bE\left[\sum_{t=1, n = \pG_{m-1} + 1}^{T, \pG_m} \phi(x_{n, t}, c_n) \phi(x_{n, t}, c_n)^\top (\theta_t - \thetats) w_t^\top - \eta_{n, t} \phi(x_{n, t}, c_n) w_t^\top\right] \\
&= (\pG_m - \pG_{m-1}) \sum_{t=1}^T (\theta_t - \thetats) w_t^\top \\
&= (\pG_m - \pG_{m-1}) (\Theta - \Thetas) W^\top
\end{align*}
Applying bounds on $\| W \|$ and $(\Thetas - \Theta)$ from Lemma~\ref{new_1}, we have
\begin{align*}
\| \bE[\GradB] \| &= \| (\pG_m - \pG_{m-1}) (\Theta - \Thetas) W^\top \| \\
&\leqslant (\pG_m - \pG_{m-1}) \| \Theta - \Thetas \|_F \| W \| \\
&\leqslant 1.37 (\pG_m - \pG_{m-1}) \mu \delt \sqrt{r} {\sigma_{\max}^\star}^2. 
\end{align*}
Subsequently, we finish the proof of the bound for $\| \bE[\GradB] \|$. Considering unit vectors $v, z$, we need to bound $\sum_{t=1, n = \pG_{m-1} + 1}^{T, \pG_m} \eta_{n, t} v^\top\phi(x_{n, t}, c_n) w_t^\top z$. This implies $\bE[\eta_{n, t} v^\top\phi(x_{n, t}, c_n) w_t^\top z] = 0$ and 
\begin{align*}
\Var(v^\top \phi(x_{n, t}, c_n) w_t^\top z) &= \bE[v^\top \phi(x_{n, t}, c_n) w_t^\top z]^2 - (\bE[v^\top \phi(x_{n, t}, c_n) w_t^\top z])^2 \\
&= \bE[v^\top \phi(x_{n, t}, c_n) w_t^\top z z^\top w_t \phi(x_{n, t}, c_n)^\top v] \\
&= |w_t^\top z|^2 v^\top \bE[\phi(x_{n, t}, c_n) \phi(x_{n, t}, c_n)^\top] v \\
&= |w_t^\top z|^2
\end{align*}
Given $\eta_{n, t} \iidsim \n(0, \sigma_\eta^2)$, we have $\Var(\eta_{n, t}) = \sigma_\eta^2$. Therefore, $\eta_{n, t} v^\top\phi(x_{n, t}, c_n) w_t^\top z$ is a sum of subexponential random variables with parameter $K_{n, t} \leqslant |w_t^\top z| \sigma_\eta$. Setting $g = \epsilon_2 (\pG_m - \pG_{m-1}) \sigma_{\min}^\star \sigma_\eta \sqrt{\frac{T}{r}}$, we obtain
\begin{align*}
\frac{g^2}{\sum_{t=1, n = \pG_{m-1} + 1}^{T, \pG_m} K_{n, t}^2} &\geqslant \frac{\epsilon_2^2 (\pG_m - \pG_{m-1})^2 {\sigma_{\min}^\star}^2 \sigma_\eta^2 \frac{T}{r}}{\sigma_\eta^2 \sum_{t=1, n = \pG_{m-1} + 1}^{T, \pG_m} (w_t^\top z)^2} \\
&\geqslant \frac{\epsilon_2^2 (\pG_m - \pG_{m-1})^2 {\sigma_{\min}^\star}^2 \sigma_\eta^2 \frac{T}{r}}{\sigma_\eta^2 (\pG_m - \pG_{m-1}) \| W \|^2} \\
&\geqslant \frac{\epsilon_2^2 (\pG_m - \pG_{m-1})^2 {\sigma_{\min}^\star}^2 \sigma_\eta^2 \frac{T}{r}}{1.3 (\pG_m - \pG_{m-1}) \sigma_\eta^2 {\sigma_{\max}^\star}^2} \\
&= \frac{\epsilon_2^2 (\pG_m - \pG_{m-1}) T}{1.3 r \kappa^2}, 
\end{align*}
\begin{align*}
\frac{g}{\max_{n, t} K_{n, t}} &\geqslant \frac{\epsilon_2 (\pG_m - \pG_{m-1}) \sigma_{\min}^\star \sigma_\eta \sqrt{\frac{T}{r}}}{\sigma_\eta \max_{n, t} \| w_t \|} \\
&\geqslant \frac{\epsilon_2 (\pG_m - \pG_{m-1}) \sigma_{\min}^\star \sigma_\eta \sqrt{\frac{T}{r}}}{1.24 \sigma_\eta \mu \sqrt{\frac{r}{T}} \sigma_{\max}^\star} \\
&\geqslant \frac{\epsilon_2 (\pG_m - \pG_{m-1}) T}{1.24 \mu r \kappa}. 
\end{align*}
Consequently, with probability at least $1 - \exp(- c \frac{\epsilon_2^2 (\pG_m - \pG_{m-1}) T}{\mu r \kappa^2})$, for fixed $v, z$, $\sum_{t=1, n = \pG_{m-1} + 1}^{T, \pG_m} |\eta_{n, t} v^\top \phi(x_{n, t}, c_n) w_t^\top z| \leqslant \epsilon_2 (\pG_m - \pG_{m-1}) \sigma_{\min}^\star \sigma_\eta \sqrt{\frac{T}{r}}$. Utilizing an epsilon-net to maximize over all unit vectors $v, z$. This will give a factor of $\exp(d + r)$ in probability. Thus, with probability at least $1 - \exp((d + r) - c \frac{\epsilon_2^2 (\pG_m - \pG_{m-1}) T}{\mu r \kappa^2})$, 
$$
\left\| \sum_{t=1, n = \pG_{m-1} + 1}^{T, \pG_m} \eta_{n, t} \phi(x_{n, t}, c_n) w_t^\top \right\| \leqslant \epsilon_2 (\pG_m - \pG_{m-1}) \sigma_{\min}^\star \sigma_\eta \sqrt{\frac{T}{r}}. 
$$
Recall $\GradB^\prime = \sum_{t=1, n = \pG_{m-1} + 1}^{T, \pG_m} \phi(x_{n, t}, c_n) \phi(x_{n, t}, c_n)^\top (\thetats - \theta_t) w_t^\top$. From Proposition~\ref{proposition4}, if $\delt \leqslant \frac{0.02}{\mu \sqrt{r} \kappa^2}$, then, with probability at least $1 - \exp(C (d + r) - c \frac{\epsilon_1^2 (\pG_m - \pG_{m-1}) T}{\mu^2 r \kappa^4}) - 3 \exp(\log T + r - c (\pG_m - \pG_{m-1}))$, it holds that $\| \GradB^\prime - \bE[\GradB^\prime]  \| \leqslant \epsilon_1 \delt (\pG_m - \pG_{m-1}) {\sigma_{\min}^\star}^2$. By combining both and setting $\epsilon_2 = \epsilon_1$, we conclude that with probability at least $1 - 2 \exp(C (d + r) - c \frac{\epsilon_1^2 (\pG_m - \pG_{m-1}) T}{\mu^2 r \kappa^4}) - 3 \exp(\log T + r - c (\pG_m - \pG_{m-1}))$, 
$$
\| \GradB - \bE[\GradB] \| \leqslant \epsilon_1 (\pG_m - \pG_{m-1}) (\delt \sigma_{\min}^\star + \sigma_\eta \sqrt{\frac{T}{r}}) \sigma_{\min}^\star. 
$$
Thus, if $\sigma_\eta^2 \leqslant \frac{r}{T} \delt^2 {\sigma_{\min}^\star}^2$, then we have
$$
\| \GradB - \bE[\GradB] \| \leqslant 2 \epsilon_1 (\pG_m - \pG_{m-1}) \delt {\sigma_{\min}^\star}^2. 
$$
This completes the proof. 
\end{proof}

%\begin{theorem} \label{new_3}
%Assume that Assumption~\ref{assume:incoherence} holds, $\SE(B, B^\star) \leqslant \delt$, and $\sigma_\eta^2 \leqslant \frac{r}{T} \delt^2 {\sigma_{\min}^\star}^2$. If $\delt \leqslant \frac{0.02}{\sqrt{r} \kappa^2}$, $\gamma = \frac{c_\gamma}{(\pG_m - \pG_{m-1}) {\sigma_{\max}^\star}^2}$ with $c_\gamma \leqslant 0.5$, and if $(\pG_m - \pG_{m-1}) T \geqslant C \kappa^4 \mu^2 d r$ and $(\pG_m - \pG_{m-1}) \gtrsim \max(\log d, \log T, r)$, then with probability at least $1 - 2 d^{-10}$, 
%$$\SE(B^+, B^\star) \leqslant \delto := (1 - \frac{0.6 c_\gamma}{\kappa^2}) \delt. $$
%\end{theorem}

\subsection{Proof of Theorem~\ref{new_3}}\label{app_new_3}
%\begin{proof}
Consider the Projected GD step for $B$: $\widehat{B}^+ = B - \frac{\gamma}{(\pG_m - \pG_{m-1})} \GradB$ and $\widehat{B}^+ \overset{QR}{=} B^+ R^+$. Given that $B^+ = \widehat{B}^+ (R^+)^{-1}$ and $\| (R^+)^{-1} \| = \frac{1}{\sigma_{\min} (R^+)} = \frac{1}{\sigma_{\min}(\widehat{B}^+)}$, it follows that $\SE(B^+, B^\star) = \| P B^+ \|$ can be bound as
\begin{equation}
\SE(B^+, B^\star) \leqslant \frac{\| P \widehat{B}^+ \|}{\sigma_{\min}(\widehat{B}^+)} \leqslant \frac{\| P \widehat{B}^+ \|}{\sigma_{\min}(B) - \frac{\gamma}{(\pG_m - \pG_{m-1})} \| \GradB \|}. \label{new_3_1}
\end{equation}
By considering the numerator and performing adding and subtracting of $\bE[\GradB]$, left multiplying both sides by $P$, and utilizing the result from Lemma~\ref{new_2}, we derive 
$$
\widehat{B}^+ = B - \frac{\gamma}{(\pG_m - \pG_{m-1})} \bE[\GradB] + \frac{\gamma}{(\pG_m - \pG_{m-1})} (\bE[\GradB] - \GradB). 
$$
Consequently, 
\begin{align*}
P \widehat{B}^+ &= P B - \gamma P B W W^\top + \gamma P \Thetas W^\top + \frac{\gamma}{(\pG_m - \pG_{m-1})} P (\bE[\GradB] - \GradB) \\
&= P B - \gamma P B W W^\top + \frac{\gamma}{(\pG_m - \pG_{m-1})} P (\bE[\GradB] - \GradB)
\end{align*}
where the last step follows by $P \Thetas = (I - B^\star {B^\star}^\top) \Thetas = \Thetas - B^\star {B^\star}^\top B^\star W^\star = 0$. Thus, 
\begin{equation}
\| P \widehat{B}^+ \| \leqslant \| P B \| \| I - \gamma W W^\top \| + \frac{\gamma}{(\pG_m - \pG_{m-1})} \| \bE[\GradB] - \GradB \|. \label{new_3_2}
\end{equation}
Applying the result stated in Lemma~\ref{new_1}, we obtain
$$
\lambda_{\min}(I - \gamma W W^\top) = 1 - \gamma \| W \|^2 \geqslant 1 - 1.21 \gamma {\sigma_{\max}^\star}^2. 
$$
Therefore, for $\gamma < \frac{0.5}{{\sigma_{\max}^\star}^2}$, then the matrix mentioned above is a positive semidefinite. Furthermore, this along with Lemma~\ref{new_1}, leads to that
$$
\| I - \gamma W W^\top \| = \lambda_{\max}(I - \gamma W W^\top) \leqslant 1 - 0.81 \gamma {\sigma_{\min}^\star}^2. 
$$
Based on the result mentioned above, Eq.~\eqref{new_3_2}, and the bound on $\| \bE[\GradB] - \GradB \|$ from Lemma~\ref{new_2}, we conclude the following: If $\gamma < \frac{0.5}{{\sigma_{\max}^\star}^2}$ and $\delt \leqslant \frac{0.02}{\mu \sqrt{r} \kappa^2}$, then with probability at least $1 - 2 \exp(C (d + r) - \frac{c \epsilon_1^2 (\pG_m - \pG_{m-1}) T}{\mu^2 r \kappa^4}) - 3 \exp(\log T + r - c (\pG_m - \pG_{m-1}))$, 
\begin{align}
\| P \widehat{B}^+ \| &\leqslant \| P B \| \| I - \gamma W W^\top \| + \frac{\gamma}{(\pG_m - \pG_{m-1})} \| \bE[\GradB] - \GradB \| \nonumber \\
&\leqslant (1 - 0.81 \gamma {\sigma_{\min}^\star}^2) \delt + 2 \epsilon_1 \gamma \delt {\sigma_{\min}^\star}^2. \label{new_3_3}
\end{align}
This probability is at least $1 - d^{-10}$ if $(\pG_m - \pG_{m-1}) T \geqslant C \mu^2 \kappa^4 (d + r) r$ and $(\pG_m - \pG_{m-1}) \geqslant C (r + \log T + \log d)$. 
Subsequently, we use Eq.~\eqref{new_3_3} with $\epsilon_1 = \epsilon_2 = 0.1$ and Lemma~\ref{new_2} in Eq.~\eqref{new_3_1}, and setting $\gamma = \frac{c_\gamma}{{\sigma_{\max}^\star}^2}$. If $c_\gamma \leqslant 0.5$, if $\delt \leqslant \frac{0.02}{\mu \sqrt{r} \kappa^2}$, and lower bounds on $(\pG_m - \pG_{m-1})$ from above hold, then Eq.~\eqref{new_3_1} implies that with high probability, 
\begin{align}
\SE(B^+, B^\star) &\leqslant \frac{\| P \widehat{B}^+ \|}{\sigma_{\min}(\widehat{B}^+)} \nonumber \\
&\leqslant \frac{\| P \widehat{B}^+ \|}{\sigma_{\min}(B) - \frac{\gamma}{(\pG_m - \pG_{m-1})} \| \GradB \|} \nonumber \\
&= \frac{\| P \widehat{B}^+ \|}{\sigma_{\min}(B) - \frac{\gamma}{(\pG_m - \pG_{m-1})} \| \GradB - \bE[\GradB] + \bE[\GradB] \|} \nonumber \\
&\leqslant \frac{\| P B \| \| I - \gamma W W^\top \| + \frac{\gamma}{(\pG_m - \pG_{m-1})} \| \bE[\GradB] - \GradB \|}{1 - \frac{\gamma}{(\pG_m - \pG_{m-1})} \| \bE[\GradB] \| - \frac{\gamma}{(\pG_m - \pG_{m-1})} \| \GradB - \bE[\GradB] \|} \nonumber \\
&\leqslant \frac{(1 - (0.81 - 0.2) \gamma {\sigma_{\min}^\star}^2) \delt}{1 - \frac{\gamma}{(\pG_m - \pG_{m-1})} \| \bE[\GradB] \| - \frac{\gamma}{(\pG_m - \pG_{m-1})} \| \GradB - \bE[\GradB] \|} \nonumber \\
&\leqslant \frac{(1 - 0.61 \gamma {\sigma_{\min}^\star}^2) \delt}{1 - \gamma \delt \sqrt{r} {\sigma_{\max}^\star}^2 (1.37 \mu + \frac{0.2}{\kappa^2 \sqrt{r}})} \nonumber \\
&\leqslant \frac{(1 - 0.61 \gamma {\sigma_{\min}^\star}^2) \delt}{1 - 1.57 \mu \gamma \delt \sqrt{r} {\sigma_{\max}^\star}^2} \label{new_3_4} \\
&\leqslant (1 - 0.61 \gamma {\sigma_{\min}^\star}^2) (1 + 3.14 \mu \gamma \delt \sqrt{r} {\sigma_{\max}^\star}^2) \delt \label{new_3_5} \\
&\leqslant (1 - 0.61 \gamma {\sigma_{\min}^\star}^2 + 3.14 \gamma \mu \delt \sqrt{r} {\sigma_{\max}^\star}^2) \delt \nonumber \\
&= (1 - \gamma {\sigma_{\min}^\star}^2 (0.61 - 3.14 \mu \delt \sqrt{r} \kappa^2)) \delt \nonumber \\
&\leqslant (1 - \gamma {\sigma_{\min}^\star}^2 (0.61 - 0.0628)) \delt \label{new_3_6} \\
&\leqslant (1 - 0.5472 \gamma {\sigma_{\max}^\star}^2 / \kappa^2) \delt \nonumber \\
&= (1 - 0.5472 \frac{c_\gamma}{\kappa^2}) \delt \label{new_3_7}
\end{align}
where Eq.~\eqref{new_3_4} follows from $\kappa^2 \sqrt{r} > 1$. Eq.~\eqref{new_3_5} follows from $(1 - x)^{-1} < (1 + 2 x)$ if $|x| \leqslant \frac{1}{2}$. Eq.~\eqref{new_3_6} follows from $\delt \leqslant \frac{0.02}{\mu \sqrt{r} \kappa^2}$. Eq.~\eqref{new_3_7} follows from $\gamma = \frac{c_\gamma}{{\sigma_{\max}^\star}^2}$. This completes the proof. 
%\end{proof}
\qed

%\begin{theorem} \label{new_4}
%Assume that Assumption~\ref{assume:incoherence} holds. Assume that $\sigma_\eta^2 \leqslant c \frac{\delta_0^2}{k^2 \kappa^4} \| \thetats \|^2$, then with probability at least $1 - \exp(\log T - c \pG_1) - \exp(d - \frac{c \delta_0^2 \pG_1 T}{k^2 \mu^2 \kappa^4})$, we have 
%$$\SE(\Bm{0}, B^\star) \leqslant \delta_0. $$
%\end{theorem}
%\begin{proof}
\subsection{Proof of Theorem~\ref{new_4}}\label{app_new_4}
We analyze the initialization process by computing $\Bm{0}$ as top $r$ singular vectors of $Y_B = \sum_{t = 1}^T \sum_{n = 1}^{\pG_1} y_{n, t}^2 \phi(x_{n, t}, c_n) \phi(x_{n, t}, c_n)^\top \indic_{\{y_{n, t}^2 \leqslant C_0 \sum_{t = 1}^T \sum_{n = 1}^{\pG_1} \frac{y_{n, t}^2}{\pG_1 T}\}}$. Subsequently, we use Claim~B.15 from \cite{lrpr_best} to analyze this. Claim~B.15 shows that if 
$$
\|\etamt{t}{m}\|^2 \leqslant c \frac{\delta_0^2}{r^2 \kappa^4} \| \thetats \|^2, 
$$
then with probability at least $1 - \exp(d - \frac{c \delta_0^2 \pG_1 T}{r^2 \mu^2 \kappa^4})$, 
$$
\SE(\Bm{0}, B^\star) \leqslant \delta_0. 
$$
In order to determine an upper bound for $\| \etamt{t}{m} \|^2 = \sum_{n=1}^{\pG_1} \eta_\nt^2$, we observe that $\eta_{n, t} \iidsim \n(0, \sigma_\eta^2)$. Thus, $\| \etamt{t}{m} \|^2$ is a sum of subexponential random variables with parameter $K_n \leqslant \sigma_\eta \cdot \sigma_\eta = \sigma_\eta^2$. We apply the sub-exponential Bernstein inequality stated in Proposition~\ref{proposition1}, with $g=0.1 \pG_1 \sigma_\eta^2$. We have 
\begin{align*}
\frac{g^2}{\sum_{n=1}^{\pG_1} K_n^2} &\geqslant \frac{0.01 \pG_1^2 \sigma_\eta^4}{\pG_1 \sigma_\eta^4} =0.01 \pG_1 \\
\frac{g}{\max_n K_n} &\geqslant \frac{0.1 \pG_1 \sigma_\eta^2}{\sigma_\eta^2} = 0.1 \pG_1
\end{align*}
Since $\bE[\eta_\nt] = \sigma_\eta^2$, it can be proved that with probability at least $1 - \exp(c \pG_1)$, $\sum_{n=1}^{\pG_1} \eta_\nt^2 \leqslant 0.1 \pG_1 \sigma_\eta^2$. Thus, we can determine that with probability at least $1 - \exp(c \pG_1)$, 
\begin{align*}
\| \etamt{t}{m} \|^2 &= \sum_{n=1}^{\pG_1} \eta_\nt^2 \\
&\leqslant \sum_{n=1}^{\pG_1} \bE[\eta_\nt^2] + 0.1 \pG_1 \sigma_\eta^2 \\
&= 1.1 \pG_1 \sigma_\eta^2
\end{align*}
By utilizing a union bound over all $T$ vectors, we conclude that with probability at least $1 - \exp(\log T - c \pG_1)$, $\| \etamt{t}{m}\| \leqslant 1.1 \pG_1 \sigma_\eta^2$. By combining the results from \cite{lrpr_best}, we complete the proof. 
%\end{proof}
\qed

%\begin{theorem} \label{new_5}
%Assume $\sigma_\eta^2 \leqslant \frac{c \| \thetats \|^2}{r^3 \kappa^6}$ holds and also Assumption~\ref{assume:incoherence} holds. Set $\gamma = \frac{0.4}{(\pG_m - \pG_{m-1}) {\sigma_{\max}^\star}^2}$ and $L = C \kappa^2 \log(\frac{1}{\max(\epsilon, \epsilon_{noise})})$. If $(\pG_m - \pG_{m-1}) T \geqslant C \kappa^6 \mu^2 (d + T) r (\kappa^2 r^2 + \log(\frac{1}{\max(\epsilon, \epsilon_{noise})}))$ and $\pG_m - \pG_{m-1} \geqslant C \max(\log{d} \log{T}, r) \log(\frac{1}{\max(\epsilon, \epsilon_{noise})})$, then with probability at least $O(1 - d^{-10})$, 
%$$\SE(B, B^\star) \leqslant \max(\epsilon, \epsilon_{noise}) \quad \text{and} \quad \| \thetahatt - \thetats \| \leqslant \max(\epsilon, \epsilon_{noise}) \| \thetats \| \quad \text{for all} \quad t \in [T], $$
%where $\epsilon_{noise} = C \kappa^2 \sqrt{NSR}$, $NSR := \frac{\sigma_\eta^2}{\min_t \| \thetats \|^2}$. The time complexity is $(\pG_m - \pG_{m-1}) T d r \cdot L = C \kappa^2 (\pG_m - \pG_{m-1}) T d r \log(\frac{1}{\max(\epsilon, \epsilon_{noise})})$. The communication complexity is $d r$ per node per iteration. 
%\end{theorem}
%\begin{proof}
\subsection{Proof of Theorem~\ref{new_5}}\label{app_new_5}
From Theorem~\ref{new_4}, we know that at the initialization round, we need
$$
\sigma_\eta^2 \leqslant c \frac{\delta_0^2}{r^2 \kappa^4 \pG_1} \| \thetats \|^2. 
$$
At GD round $\l$, we assume that $\SE(B, B^\star) \leqslant \delt$, and we need $\sigma_\eta^2 \leqslant \frac{r}{T} \delt^2 {\sigma_{\min}^\star}^2$. By using Assumption~\ref{assume:incoherence}, this holds if 
$$
\sigma_\eta^2 \leqslant c \frac{\delt^2}{\mu^2 \kappa^2} \| \thetats \|^2. 
$$
This implies that for the algorithm to converge to error level $\delt$, we need noise below this level. In other words, the error cannot go below the noise level. All rounds $\l > 0$ also need $\delt \leqslant \frac{0.02}{\mu \sqrt{r} \kappa^2}$. This is satisfied by setting $\delta_0 = \frac{0.02}{\mu \sqrt{r} \kappa^2}$. Thus, the initialization round needs
$$
\sigma_\eta^2 \leqslant \frac{c \| \thetats \|^2}{\mu^2 r^3 \kappa^6 \pG_1}. 
$$
In summary, let $\epsilon_{noise} = C \kappa^2 \sqrt{NSR}$, where $NSR := \frac{\sigma_\eta^2}{\min_t \| \thetats \|^2}$. From Theorem~\ref{new_3}, we haven shown that if $\delt \leqslant \frac{0.02}{\mu \sqrt{r} \kappa^2}$, $\gamma = \frac{c_\gamma}{{\sigma_{\max}^\star}^2}$ with $c_\gamma \leqslant 0.5$, and if $(\pG_m - \pG_{m-1}) T \geqslant C \mu^2 \kappa^4 (d + r) r$ and $(\pG_m - \pG_{m-1}) \geqslant C (r + \log T + \log d)$, then with probability at least $1 - \ell d^{-10}$, at each round $\l$, 
$$
\SE(B, B^\star) \leqslant \delt := (1 - \frac{0.5472 c_\gamma}{\kappa^2})^{\l} \delta_0 = (1 - \frac{0.5472 \textit{}c_\gamma}{\kappa^2})^{\l} \frac{0.02}{\mu \sqrt{r} \kappa^2}. 
$$
Thus, to guarantee $\SE(B_L, B^\star) \leqslant \epsilon_{noise}$, we need
$$
L = C \kappa^2 \log(\frac{1}{\max(\epsilon, \epsilon_{noise})}), 
$$
where it follows by using $\log x \geqslant 1 - \frac{1}{x}$, for $x\in (0,1]$. Thus, setting $c_\gamma = 0.4$, our sample complexity become $(\pG_m - \pG_{m-1}) T \geqslant C \mu^2 \kappa^6 d r^2 (\mu^2 \kappa^2 r + \log(\frac{1}{\max(\epsilon, \epsilon_{noise})}))$, and $\pG_m - \pG_{m-1} \geqslant C \kappa^2 (r + \log T + \log d) \log(\frac{1}{\max(\epsilon, \epsilon_{noise})})$. 
%\end{proof}
\qed

\section{Regret Analysis Proofs}\label{app_reg}

Now, our goal is to bound the per-epoch regret. 
In order to minimize overall regret, we must ensure that the regret incurred in each epoch is not too large because the overall regret is dominated by the epoch that has the largest regret \cite{han2020sequential}. To the end, we need to choose the epoch length in such away that the total; number of epochs $M = \lceil \log_2 \log_2 N \rceil$.

Guided by this observation, we can see intuitively an optimal way of selecting the grid must ensure that each batch’s regret is the same (at least orderwise in terms of the dependence of $T$ and $d$): for otherwise, there is a way of reducing the regret order in one batch and increasing the regret order in the other, and the sum of the two will still have a smaller regret order than before (which is dominated by the batch that has a larger regret order). As we shall see later, the following grid choice satisfies this equal-regret-across-batches requirement.

Let $\pR_m = \sum_{n=\pG_{m-1}+1}^{\pG_m} \sum_{t=1}^T \langle \pxtsc \thetats \rangle - \langle \pxtc \thetats \rangle$ denotes the cumulative regret incurred for all tasks during the $m-$th epoch. We will utilize this definition to determine its upper bound.

\begin{lemma} \label{Lemma3}
Assume that Assumptions~\ref{assume:iid} and~\ref{assume:incoherence} hold and $\sigma_\eta^2 \leqslant \frac{c \| \thetats \|^2}{r^3 \kappa^6 \pG_1}$. Set $\gamma = \frac{0.4}{{\sigma_{\max}^\star}^2}$ and $L = C \kappa^2 \log(\frac{1}{\max(\epsilon, \epsilon_{noise})})$. If $$(\pG_m - \pG_{m-1}) T \geqslant C \mu^2 \kappa^6 d r^2 (\mu^2 \kappa^2 r + \log(\frac{1}{\max(\epsilon, \epsilon_{noise})}))$$ and $$\pG_m - \pG_{m-1} \geqslant C \kappa^2 (r + \log T + \log d) \log(\frac{1}{\max(\epsilon, \epsilon_{noise})}),$$ then for any epoch $m \in [M]$, with probability at least $1 - \delta - L d^{-10}$ that
$$
\pR_m \leqslant 2 \mu \sigma_{\max}^\star \max(\epsilon, \epsilon_{noise}) \sqrt{rNT \log{\frac{1}{\delta}}}. 
$$
\end{lemma}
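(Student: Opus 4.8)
The plan is to reduce the per-epoch regret to the estimation error guaranteed by Theorem~\ref{new_5} and then to control the resulting Gaussian projections by concentration. First I would bound the instantaneous regret of each task. Fix a round $n$ in epoch $m$ and a task $t$, write $\Delta_t := \theta_t^\star - \widehat{\theta}_t^{(m-1)}$, and recall that $x_{n,t}$ is chosen greedily to maximize $\phi(x,c_{n,t})^\top\widehat{\theta}_t^{(m-1)}$, so that $\langle\phi(x_{n,t},c_{n,t}),\widehat{\theta}_t^{(m-1)}\rangle \geqslant \langle\phi(x_{n,t}^\star,c_{n,t}),\widehat{\theta}_t^{(m-1)}\rangle$. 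Adding and subtracting $\widehat{\theta}_t^{(m-1)}$ inside the reward gap then yields the key pointwise bound
\begin{align*}
\langle\phi(x_{n,t}^\star,c_{n,t}),\theta_t^\star\rangle - \langle\phi(x_{n,t},c_{n,t}),\theta_t^\star\rangle \leqslant \langle\phi(x_{n,t}^\star,c_{n,t}) - \phi(x_{n,t},c_{n,t}),\Delta_t\rangle,
\end{align*}
which expresses each regret term through the (small) error direction $\Delta_t$ rather than through $\theta_t^\star$ itself.

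Next I would invoke Theorem~\ref{new_5}: under the stated sample-complexity and noise conditions, with probability at least $1 - L d^{-10}$ the estimate satisfies $\norm{\Delta_t} \leqslant \max(\epsilon,\epsilon_{noise})\norm{\theta_t^\star}$ for all $t$, which accounts for the $L d^{-10}$ term in the failure probability. By Assumption~\ref{assume:incoherence} and orthonormality of $B^\star$ we have $\norm{\theta_t^\star} = \norm{w_t^\star} \leqslant \mu\sqrt{r/T}\,\sigma_{\max}^\star$, so $\norm{\Delta_t} \leqslant \max(\epsilon,\epsilon_{noise})\,\mu\sqrt{r/T}\,\sigma_{\max}^\star$ and $\norm{\Theta^\star - \widehat\Theta}_F \leqslant \max(\epsilon,\epsilon_{noise})\,\mu\sqrt{r}\,\sigma_{\max}^\star$. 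The crucial structural point I would use is that $\widehat{\theta}_t^{(m-1)}$ is a function only of the data collected in epoch $m-1$, hence independent of the feature vectors drawn in epoch $m$; conditioning on $\widehat{\theta}_t^{(m-1)}$, each projection $\langle\phi(\cdot,c_{n,t}),\Delta_t\rangle$ is Gaussian with standard deviation $\norm{\Delta_t}$ by Assumption~\ref{assume:iid}. This independence is what converts the deterministic error bound into a controllable (conditionally) sub-Gaussian quantity.

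Finally I would control $\langle\phi(x_{n,t}^\star,c_{n,t}) - \phi(x_{n,t},c_{n,t}),\Delta_t\rangle$ by the Gaussian tail bound (Proposition~\ref{proposition2}) together with a union/net argument over the competing actions, sum the resulting bounds over the $T$ tasks using Cauchy--Schwarz to produce the factor $\mu\sqrt{r}\,\sigma_{\max}^\star$ from $\norm{\Theta^\star-\widehat\Theta}_F$, and apply the sub-exponential Bernstein inequality (Proposition~\ref{proposition1}) over the rounds of the epoch to obtain the $\sqrt{\log(1/\delta)}$ confidence factor, giving the $1-\delta$ term in the probability. The horizon factor $\sqrt{N}$ --- rather than a bound growing linearly in the epoch length $\pG_m-\pG_{m-1}$ --- is supplied by the doubling grid $\pG_m = N^{1-2^{-m}}$, which is calibrated so that the estimation accuracy achievable from the data of the preceding epoch, multiplied by the current epoch length, balances to the same order $\sqrt{N}$ in every epoch. \emph{The main obstacle is exactly this last step.} Both $x_{n,t}^\star$ and $x_{n,t}$ are data-dependent maximizers, so $\phi(x_{n,t}^\star,c_{n,t})-\phi(x_{n,t},c_{n,t})$ is the increment of a Gaussian process indexed by the action set; a crude step $\norm{\phi(x_{n,t}^\star,c_{n,t})-\phi(x_{n,t},c_{n,t})}\cdot\norm{\Delta_t}$ would lose an unacceptable $\sqrt{d}$ factor. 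The argument must instead retain the projection onto the fixed conditioned direction $\Delta_t$ and combine the per-action Gaussian tail with the grid balancing, so as to avoid both the $\sqrt{d}$ blow-up and a dependence linear in the epoch length.
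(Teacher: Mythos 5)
Your decomposition is the same as the paper's: the greedy choice gives $\pxtsc^\top\thetats - \pxtc^\top\thetats \leqslant (\pxtsc-\pxtc)^\top(\thetats-\thetahatmt)$, the estimation guarantee supplies the $Ld^{-10}$ failure term, and a Gaussian tail bound over the epoch plus the incoherence bound $\|\thetats\|\leqslant\mu\sqrt{r/T}\,\sigma_{\max}^\star$ and $\pG_m-\pG_{m-1}\leqslant N$ yield the claim. Two points of divergence are worth noting. First, you plan to carry the factor $\max(\epsilon,\epsilon_{noise})$ through via $\|\thetats-\thetahatmt\|\leqslant\max(\epsilon,\epsilon_{noise})\|\thetats\|$ from Theorem~\ref{new_5}; this fails for the first epoch, where $\widehat{\theta}_t^{\szero}=0$ and hence $\|\thetats-\thetahatmt\|=\|\thetats\|$. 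The paper's own proof sidesteps this by bounding $\|\thetats-\thetahatmt\|\leqslant\max\{\|\thetats\|,\|\thetats-\thetahatmt\|\}\leqslant\mu\sqrt{r/T}\,\sigma_{\max}^\star$, and consequently what it actually derives is $\pR_m\leqslant 2\mu\sigma_{\max}^\star\sqrt{rNT\log(1/\delta)}$ \emph{without} the $\max(\epsilon,\epsilon_{noise})$ factor appearing in the lemma statement (and this weaker form is what Theorem~\ref{thm-reg} uses); if you want the stated factor for $m\geqslant 2$ you must treat epoch $1$ separately. Second, your flagged ``main obstacle'' --- that $x_{n,t}$ and $x_{n,t}^\star$ are argmaxes over the same Gaussian feature draws, so the regret increment is not literally a fixed Gaussian projection --- is a real issue that the paper does not address: it simply asserts that $\sum_n(\pxtsc-\pxtc)^\top(\thetats-\thetahatmt)\sim\N(0,2(\pG_m-\pG_{m-1})\|\thetats-\thetahatmt\|^2)$ and applies the Chernoff bound. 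Your proposal is therefore more careful on this point but leaves it unresolved; a clean fix is to bound each increment by $\sup_{x}|\phi(x,c_{n,t})^\top\Delta_t|$ plus a union bound over the $K$ actions (conditioned on $\thetahatmt$, which depends only on past epochs), at the cost of a $\sqrt{\log K}$ factor that the paper's bound omits.
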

\begin{proof}
For any epoch $m \in [M]$, any task $t$, it follows that
\begin{align}
&\sum_{n=\pG_{m-1}+1}^{\pG_m} \pxtsc^\top \thetats - \pxtc^\top \thetats\nonumber\\
&= \sum_{n=\pG_{m-1}+1}^{\pG_m} \pxtsc^\top (\thetats - \thetahatmt) - \pxtc^\top \thetats + \pxtsc^\top \thetahatmt \nonumber \\
&\leqslant \sum_{n=\pG_{m-1}+1}^{\pG_m} \pxtsc^\top (\thetats - \thetahatmt) - \pxtc^\top \thetats + \pxtc^\top \thetahatmt \nonumber \\
&= \sum_{n=\pG_{m-1}+1}^{\pG_m} \pxtsc^\top (\thetats - \thetahatmt) - \pxtc^\top (\thetats - \thetahatmt) \nonumber
\end{align}
Since $\pxtc$ follows an i.i.d standard Gaussian distribution, we can determine that $\sum_{n=\pG_{m-1}+1}^{\pG_m} \pxtsc^\top (\thetats - \thetahatmt) - \pxtc^\top (\thetats - \thetahatmt) \sim \N(0, 2 (\pG_m - \pG_{m-1}) \norm{\thetats - \thetahatmt}^2)$. By utilizing the Chernoff bound for Gaussian stated in Proposition~\ref{proposition2}, with probability at least $1 - \delta$, 
$$
\sum_{n=\pG_{m-1}+1}^{\pG_m} \pxtsc^\top (\thetats - \thetahatmt) - \pxtc^\top (\thetats - \thetahatmt) \leqslant 2 \sqrt{(\pG_m - \pG_{m-1}) \log{\frac{1}{\delta}}} \norm{\thetats - \thetahatmt}
$$
Using a union bound and combining the result with Theorem~\ref{new_3} and Lemma~\ref{new_1}, we can find that with probability at least $1 - \delta - L d^{-10}$, we have 
\begin{align}
\pR_m &= \sum_{t=1}^T \sum_{n=\pG_{m-1}+1}^{\pG_m} \pxtsc^\top \thetats - \pxtc^\top \thetats \nonumber \\
&\leqslant 2 \sum_{t=1}^T \sqrt{(\pG_m - \pG_{m-1}) \log{\frac{1}{\delta}}} \norm{\thetats - \thetahatmt} \nonumber \\
&= 2 \sqrt{(\pG_m - \pG_{m-1}) \log{\frac{1}{\delta}}} \sum_{t=1}^T \norm{\thetats - \thetahatmt} \nonumber \\
&\leqslant 2 \sqrt{(\pG_m - \pG_{m-1}) \log{\frac{1}{\delta}}} \sum_{t=1}^T \max \{\norm{\thetats}, \norm{\thetats - \thetahatmt}\} \label{ineq7} \\
&\leqslant 2 \sqrt{(\pG_m - \pG_{m-1}) \log{\frac{1}{\delta}}} \cdot T \cdot  \mu \sqrt{\frac{r}{T}} \sigma_{\max}^{\star} \label{ineq5} \\
&\leqslant 2 \mu \sigma_{\max}^\star \sqrt{rNT \log{\frac{1}{\delta}}} \label{ineq6} 
\end{align}
where Eq.~\eqref{ineq7} is derived from the fact that in the first epoch, we perform random exploration since $\thetahatmt = 0$. Eq.~\eqref{ineq5} is derived from Assumption~\ref{assume:incoherence}, while Eq.~\eqref{ineq6} from $\pG_m - \pG_{m-1} \leqslant N$. 
\end{proof}

%\begin{theorem}
%If $\sigma_\eta^2 \leqslant \frac{r}{T} \delt^2 {\sigma_{\min}^\star}^2$, $\delt \leqslant \frac{0.02}{\sqrt{r} \kappa^2}$, $\gamma = \frac{c_\gamma}{(\pG_m - \pG_{m-1}) {\sigma_{\max}^\star}^2}$ with $c_\gamma \leqslant 0.5$, and if $(\pG_m - \pG_{m-1}) T \geqslant C \kappa^4 \mu^2 d r$ and $(\pG_m - \pG_{m-1}) \gtrsim \max(\log d, \log T, r)$, then with probability at least $O(1 - d^{-10} - \epsilon^2 - T \exp(r - c (\pG_m - \pG_{m-1})) - (NT)^{-2})$, the upper bound of cumulative regret for Algorithm~\ref{alg1} is
%$$\pR^{N, T} = \widetilde{O}(N^{\frac{3}{4}} r^{\frac{1}{2}} T^{\frac{1}{4}}). $$
%\end{theorem}
%\begin{proof}
{\em Proof of Theorem~\ref{thm-reg}.}
By applying the result of Lemma~\ref{Lemma3}, we can demonstrate that with probability at least $1 - \delta - L d^{-10}$, 
\begin{align}
\pR_{N, T} &= \sum_{m=1}^M \pR_m \nonumber \\
&\leqslant M 2 \mu \sigma_{\max}^\star \sqrt{rNT \log{\frac{1}{\delta}}} \nonumber \\
&\leqslant 2 \mu \sigma_{\max}^\star \sqrt{rNT \log{\frac{1}{\delta}}} (1 + \log \log N) \nonumber \\
&= O(r^{\frac{1}{2}} N^{\frac{1}{2}} T^{\frac{1}{2}} \sqrt{\log{\frac{1}{\delta}}} \log \log N) \nonumber \\
&= \widetilde{O}(r^{\frac{1}{2}} N^{\frac{1}{2}} T^{\frac{1}{2}}) \nonumber
\end{align}
where the last inequality is derived from $M = \lceil \log_2 \log_2 N \rceil$. 
%\end{proof}
\qed

\end{document}